\journalname{arXiv}
\newlist{romanenumerate*}{enumerate*}{1}
\setlist[romanenumerate*]{label=(\textit{\roman*})}
\newlist{romanenumerate}{enumerate}{1}
\setlist[romanenumerate]{label=(\textit{\roman*})}
\newtheorem{examplebold}{\textbf{Example}}
\newcommand{\astfootnote}[1]{%
\let\oldthefootnote=\thefootnote%
\setcounter{footnote}{0}%
\renewcommand{\thefootnote}{\fnsymbol{footnote}}%
\footnote{#1}%
\let\thefootnote=\oldthefootnote%
}
\tikzset{->,
    >=stealth,
    node distance=3cm,
    every state/.style={thick, fill=gray!10},
    align=center
}
\definecolor{mygreen}{rgb}{0,0.6,0}
\definecolor{myorange}{rgb}{1.0,0.5,0.3}
\definecolor{mymauve}{rgb}{0.58,0,0.82}
\definecolor{myblue}{rgb}{0.05,0.19,0.57}
\definecolor{mygrey}{rgb}{0.4,0.4,0.4}
\definecolor{myred}{rgb}{0.9,0.2,0.15}
\lstdefinelanguage{pddl}
{
  basicstyle=\footnotesize,
  sensitive=false,    
  morecomment=[l]{;}, 
  alsoletter={:,-},   
  morekeywords={
    define,domain,problem,exists,either,
    :domain,:extends,:requirements,:types,:objects,:constants,
    :predicates,:functions,
    :fluents,:primary-effect,:side-effect,assign
    :strips,:adl,:equality,:typing,:conditional-effects, :metric, minimize,
    :negative-preconditions,:disjunctive-preconditions,
    :existential-preconditions,:universal-preconditions
  },
  keywords=[2]{not,and,or,when,forall}, 
  keywords=[3]{:parameters,:precondition,:effect,}, 
  keywords=[4]{assignMenus, assignMC, assignSC, :action, :init, :goal}, 
  keywords=[5]{} 
}
\newcommand{\policy}{\pi}
\newcommand{\pred}[1]{{\fontsize{8.5}{9.5}\selectfont\texttt{#1}\xspace}}
\newcommand{\Pre}{{\mathit{Pre}}\xspace}
\newcommand{\Eff}{{\mathit{Eff}}\xspace}
\DeclareMathOperator*{\argmax}{arg\,max}
\newcommand{\A}{\mathcal{A}} 
 \newcommand{\D}{\mathcal{D}}
\newcommand{\E}{\mathcal{E}} \newcommand{\F}{\mathcal{F}}
\newcommand{\G}{\mathcal{G}}
\renewcommand{\O}{\mathcal{O}} \renewcommand{\P}{\mathcal{P}}
 \newcommand{\R}{\mathcal{R}}
\renewcommand{\S}{\mathcal{S}} \newcommand{\T}{\mathcal{T}}
\newcommand{\V}{\mathcal{V}}
\newcommand{\U}{\mathcal{U}}
\newcommand{\bfmath}[1]{\mathbf{#1}}
\newcommand{\Yesterday}{\circleddash}
\newcommand{\gpast}{{\boxminus}}
\newcommand{\past}{{\diamondminus}}
\newcommand{\Since}{\mathop{\S}}
\newcommand{\Wnext}{\raisebox{-0.27ex}{\LARGE$\bullet$}}
\newcommand{\Next}{\raisebox{-0.27ex}{\LARGE$\circ$}}
\newcommand{\lUntil}{\mathop{\U}}
\newcommand{\true}{\mathit{true}}
\newcommand{\tm}[1]{\ \text{#1}\ }
\newcommand{\trace}{\tau}
\newcommand{\LTL}{{\sc ltl}\xspace}
\newcommand{\LTLf}{{\sc ltl}$_f$\xspace}
\newcommand{\LTLp}{{\sc ppltl}\xspace}
\newcommand{\PLTLf}{{\sc ppltl}\xspace}
\newcommand{\FOL}{{\sc fol}\xspace}
\newcommand{\NFA}{{\textsf{NFA}}\xspace}
\newcommand{\DFA}{{\textsf{DFA}}\xspace}
\newcommand{\DFAs}{{\textsf{DFA}}s\xspace}
\newcommand{\PDFA}{{\textsf{PDFA}}\xspace}
\newcommand{\PDDL}{{\sc PDDL}\xspace}
\newcommand{\FOND}{{\sc fond}\xspace}
\newcommand{\MONA}{\textsf{MONA}\xspace}
\newcommand{\FONDforLTLPLTL}{{\sc fond4\LTLf}\xspace}
\newcommand{\LTLftitle}{\textbf{\textsc{LTL}$_f$}\xspace}
\newcommand{\LTLptitle}{\textbf{\textsc{PPLTL}}\xspace}
\newcommand{\FONDtitle}{\textbf{\textsc{FOND}}\xspace}
\newcommand{\Nat}{{\rm I\kern-.23em N}}
\newcommand{\tup}[1]{\langle #1 \rangle}
\newcommand{\tiff}{\; \text{ iff }\;}
\begin{document}

\title{Temporally Extended Goal Recognition in Fully Observable Non-Deterministic Domain Models}


\author{Ramon Fraga Pereira$^{1}$ \and
        Francesco Fuggitti$^{2,3,4}$ 
		\\ Felipe Meneguzzi$^{5}$ \and 
		Giuseppe De Giacomo$^{3,6}$ \\ \\ $^{1}$University of Manchester, UK \\
        $^{2}$IBM Research AI, USA \\
        $^{3}$Sapienza University, Italy \\
        $^{4}$York University, Canada \\
        $^{5}$University of Aberdeen, UK \\
        $^{6}$University of Oxford, UK
}


\institute{Corresponding Author: Ramon Fraga Pereira \at
\email{ramon.fragapereira@manchester.ac.uk}
}


\date{Submitted: 06/2023}

\maketitle


\begin{abstract}

\textit{Goal Recognition} is the task of discerning the correct intended goal that an agent aims to achieve, given a set of goal hypotheses, a domain model, and a sequence of observations (i.e., a sample of the plan executed in the environment).  
Existing approaches assume that goal hypotheses comprise a single conjunctive formula over a single final state and that the environment dynamics are deterministic, preventing the recognition of temporally extended goals in more complex settings. 
In this paper, we expand goal recognition to \textit{temporally extended goals} in \textit{Fully Observable Non-Deterministic} (\FOND) planning domain models, focusing on goals on finite traces expressed in \textit{Linear Temporal Logic} (\LTLf) and \textit{Pure} \textit{Past Linear Temporal Logic} (\PLTLf).
We develop the first approach capable of recognizing goals in such settings and evaluate it using different \LTLf and \PLTLf goals over six \FOND planning domain models. 
Empirical results show that our approach is accurate in recognizing temporally extended goals in different recognition settings.

\end{abstract}

\section{Introduction}

\textit{Goal Recognition} is the task of recognizing the intentions of autonomous agents or humans by observing their interactions in an environment. 
Existing work on goal and plan recognition addresses this task over several different types of domain settings, such as plan-libraries \citep{AvrahamiZilberbrand2005}, plan tree grammars~\citep{Geib_PPR_AIJ2009}, classical planning domain models \citep{RamirezG_IJCAI2009,RamirezG_AAAI2010,Sohrabi_IJCAI2016,PereiraOM_AIJ_2020}, stochastic environments~\citep{RamirezG_IJCAI2011}, continuous domain models~\citep{Kaminka_18_AAAI}, incomplete discrete domain models~\citep{PereiraPM_ICAPS_19}, and approximate control models~\citep{PereiraVMR_IJCAI19}. 
Despite the ample literature and recent advances, most existing approaches to \textit{Goal Recognition as Planning} cannot recognize \textit{temporally extended goals}, i.e., goals formalized in terms of time, e.g., the exact order that a set of facts of a goal must be achieved in a plan. 
Recently, \cite{KR2021Eva} propose a general formulation of a temporal inference problem in deterministic planning settings. However, most of these approaches also assume that the observed actions' outcomes are deterministic and do not deal with unpredictable, possibly adversarial, environmental conditions.

Research on planning for \textit{temporally extended goals} in \textit{deterministic} and \textit{non-deterministic} domain settings has increased over the years, starting with the pioneering work on planning for temporally extended goals \citep{bacchus1998planning} and on planning via model checking \citep{cimatti1997planning}. 
This continued with the work on integrating \LTL goals into planning tools \citep{PatriziLGG_IJCAI11,PatriziLG_IJCAI13}, and, most recently, the work of \cite{ICAPS_2023_PLTL_Planning}, introducing a novel Pure-Past Linear Temporal Logic encoding for planning in the \textit{Classical Planning} setting. 
Other existing work relate \LTL goals with \textit{synthesis} for planning in non-deterministic domain models, often focused on the \emph{finite trace} variants of \LTL \citep{DegVa13,DegVa15,CTMBM17,CamachoBMM18,DeGiacomoS18,aminof2020stochastic}.

In this paper, we introduce the task of goal recognition in \textit{discrete domains} that are \textit{fully observable}, and the outcomes of actions and observations are \textit{non-deterministic}, possibly adversarial, i.e., \textit{Fully Observable Non-Deterministic} (\FOND), allowing the formalization of \textit{temporally extended goals} using two types of temporal logic on finite traces: \textit{Linear-time Temporal Logic} (\LTLf) and \textit{Pure-Past Linear-time Temporal Logic} (\PLTLf)~\citep{ijcai2020surveyddfr}. 

The main contribution of this paper is three-fold. First, based on the definition of \textit{Plan Recognition as Planning} introduced in \citep{RamirezG_IJCAI2009}, we formalize \textit{the problem of recognizing temporally extended goals} (expressed in \LTLf or \PLTLf) in \FOND planning domains, handling both stochastic (i.e., strong-cyclic plans) and adversarial (i.e., strong plans) environments~\citep{aminof2020stochastic}. 
Second, we extend the probabilistic framework for goal recognition proposed in \citep{RamirezG_AAAI2010}, and develop a novel \textit{probabilistic approach} that reasons over executions of policies and returns a posterior probability distribution for the goal hypotheses.
Third, we develop a \textit{compilation approach} that generates an augmented \FOND planning problem by compiling temporally extended goals together with the original planning problem.
This compilation allows us to use any off-the-shelf \FOND planner
to perform the recognition task in \FOND planning models with temporally extended goals.

We focus on \FOND domain models with stochastic non-determinism, and conduct an extensive set of experiments with different complex planning problems. 
We empirically evaluate our approach using different \LTLf and \PLTLf goals over six \FOND planning domain models, including a real-world non-deterministic domain model~\citep{nebel13_tidyup_aaaiirs}, and our experiments show that our approach is accurate to recognize temporally extended goals in different two recognition settings: \textit{offline recognition}, in which the recognition task is performed in ``one-shot'', and the observations are given at once and may contain missing information; and \textit{online recognition}, in which the observations are received incrementally, and the recognition task is performed gradually. 


\section{Preliminaries}\label{sec:preliminaries}

In this section, we briefly recall the syntax and semantics of \textit{Linear-time Temporal Logics} on finite traces (\LTLf/\LTLp) and revise the concept and terminology of \FOND planning.

\subsection{\LTLftitle and \LTLptitle}

\textit{Linear Temporal Logic on finite traces} (\LTLf) is a variant of \LTL introduced in \citep{Pnueli77} interpreted over \textit{finite traces}. Given a set $AP$
of atomic propositions, the syntax of \LTLf formulas $\varphi$ is defined as follows: 
$$
\varphi ::= a \mid \lnot \varphi \mid \varphi\land \varphi \mid \Next\varphi \mid \varphi\lUntil\varphi
$$
where $a$ denotes an atomic proposition in $AP$, $\Next$ is the
\textit{next} operator, and $\lUntil$ is the \textit{until} operator.
Apart from the Boolean connectives, we use the following abbreviations:
\textit{eventually} as $\Diamond\varphi \doteq \true\lUntil\varphi$;
\textit{always} as $\Box\varphi \doteq\lnot\Diamond\lnot\varphi$; \textit{weak next}
$\Wnext\varphi \doteq \lnot\Next\lnot\varphi$.
A trace $\trace = \trace_0 \trace_1 \cdots$ is a sequence of propositional interpretations, where $\trace_m \in 2^{AP} (m \geq 0)$ is the $m$-th interpretation of $\trace$, and $|\trace|$ is the length of $\trace$. 
We denote a finite trace formally as $\trace \in (2^{AP})^*$.
Given a finite trace $\trace$ and an \LTLf formula $\varphi$, we inductively define when $\varphi$ \textit{holds} in $\trace$ at position $i$ $(0 \leq i < |\trace|)$, written $\trace, i \models \varphi$ as follows:

\begin{itemize}\itemsep=0pt
	\item $\trace, i \models a \tiff a \in \trace_i\nonumber$;
	\item $\trace, i \models \lnot \varphi \tiff \trace, i \not\models \varphi\nonumber$;
	\item $\trace, i \models \varphi_1 \land \varphi_2 \tiff \trace, i \models \varphi_1 \tm{and} \trace, i \models \varphi_2\nonumber$;
	\item $\trace, i \models \Next\varphi \tiff i+1 < |\trace| \tm{and} \trace,i+1 \models \varphi$;
	\item $\trace, i \models \varphi_1 \lUntil \varphi_2$ iff there exists $j$ such that $i\le j < |\trace|$ and $\trace,j \models\varphi_2$, and for all $k, ~i\le k < j$, we have  $\trace, k \models \varphi_1$.
\end{itemize}

An \LTLf formula $\varphi$ is \textit{true} in $\trace$, denoted by $\trace \models \varphi$, iff $\trace,0 \models \varphi$. As advocated in \citep{ijcai2020surveyddfr}, we also use the \textit{pure-past} version of \LTLf, here denoted as \LTLp, due to its compelling computational advantage compared to \LTLf when goal specifications are \textit{naturally} expressed in a past fashion. \LTLp refers \textit{only} to the past and has a natural interpretation on finite traces: formulas are satisfied if they hold in the current (i.e., last) position of the trace.





Given a set $AP$ of propositional symbols, \LTLp formulas are defined by:
$$\varphi ::= a \mid \lnot \varphi \mid \varphi \land \varphi  \mid \Yesterday\varphi \mid \varphi \Since \varphi$$
where $a\in AP$, $\Yesterday$ is the \textit{before} operator, and $\Since$ is the \textit{since} operator. 
Similarly to \LTLf, common abbreviations are the \textit{once} operator $\past\varphi \doteq \true \Since\varphi$ 
and the \textit{historically} operator $\gpast\varphi \doteq\lnot\past\lnot\varphi$. 
Given a finite trace $\trace$ and a \LTLp formula $\varphi$, we inductively define when $\varphi$ \textit{holds} in $\trace$ at position $i$ $(0 \leq i < |\trace|)$, written $\trace, i \models \varphi$ as follows. For atomic propositions and Boolean operators it is as for \LTLf. For past operators:
\begin{itemize}
    \item $\trace,i \models  \Yesterday \varphi$ \tiff $i-1 \ge 0$ and $\trace,i-1 \models \varphi$;
    \item   $\trace,i \models  \varphi_1 \Since \varphi_2$ \tiff there exists $k$ such that $0 \leq k \leq i$ and $\trace,k \models \varphi_2$, and for all $j$, $ k<j\leq i$, we have $\trace,j \models \varphi_1$.
\end{itemize}
  
A \LTLp formula $\varphi$ is \textit{true} in $\trace$, denoted by $\trace \models \varphi$, if and only if $\trace,  |\trace|-1 \models \varphi$. 
A key property of temporal logics that we exploit in this work is that, for every \LTLf/\LTLp formula $\varphi$, there exists a \textit{Deterministic Finite-state Automaton} (\DFA) $\A_{\varphi}$ accepting the traces $\trace$ satisfying $ \varphi$~ \citep{DegVa13,ijcai2020surveyddfr}.

\subsection{\FONDtitle Planning}

\begin{figure*}[ht!]
	\centering
	\begin{subfigure}[b]{0.49\linewidth}
		\centering
 	    \includegraphics[width=0.6\linewidth]{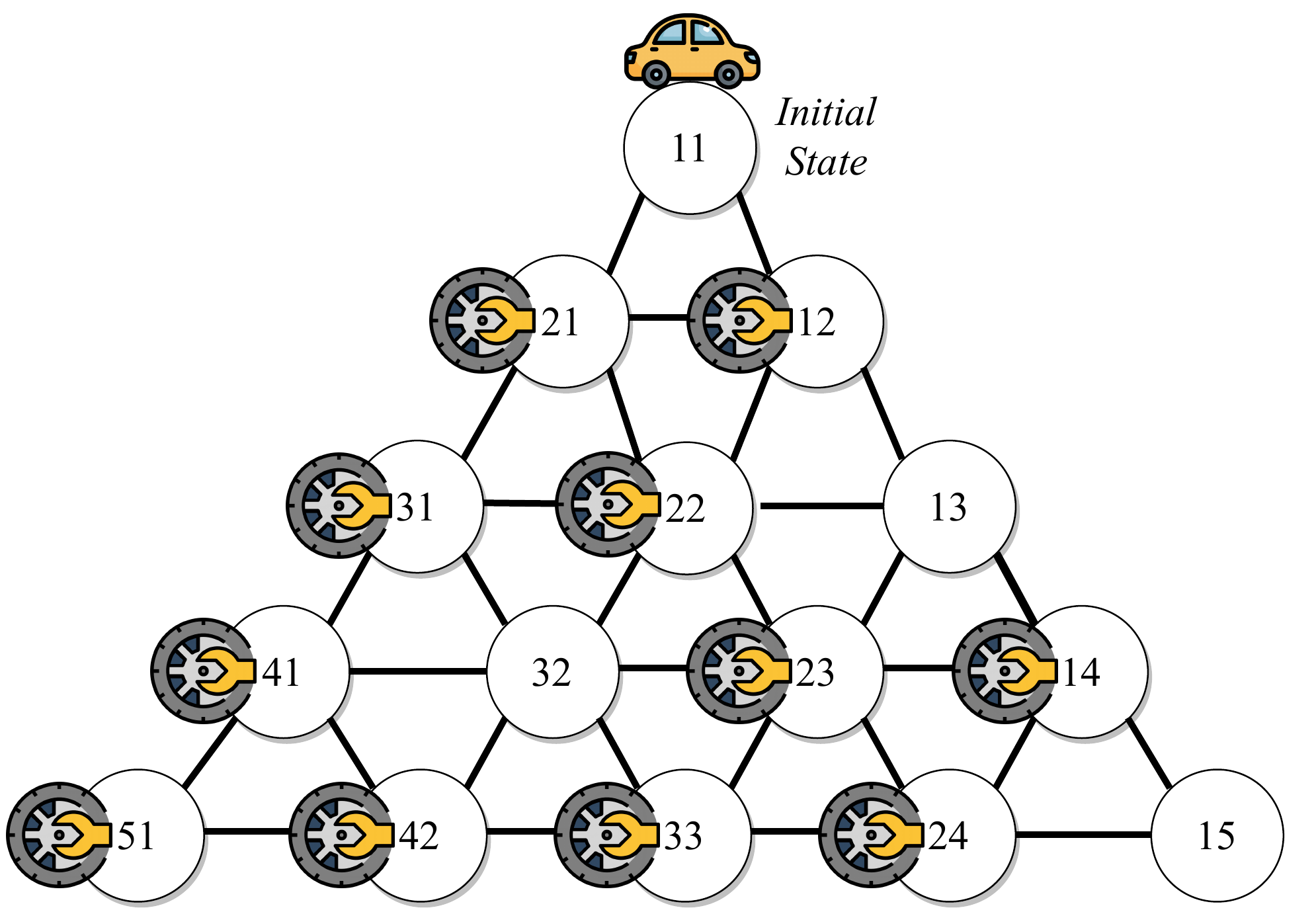}
		\caption{FOND problem example.}
		\label{fig:Triangle_Tireworld_Example}
	\end{subfigure}
	\begin{subfigure}[b]{0.49\linewidth}
		\centering
		\includegraphics[width=0.6\linewidth]{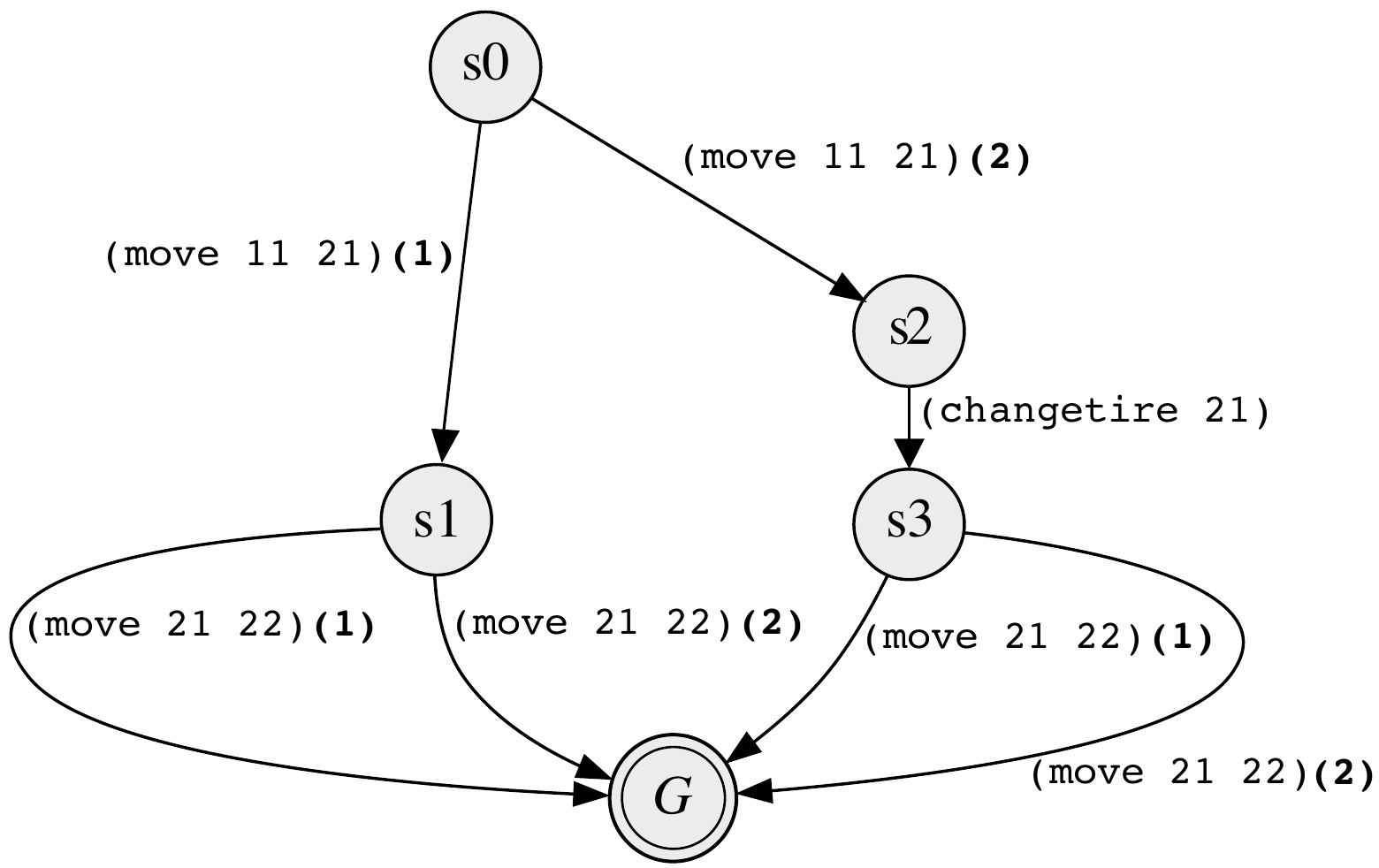}
		\caption{$\pi$ to achieve 22 from 11.}
		\label{fig:Triangle_Tireworld_Policy}
	\end{subfigure}
	\caption{\textsc{Triangle-Tireworld} domain and policy.}
\end{figure*}



A \textit{Fully Observable Non-deterministic Domain} planning model (\FOND) is a tuple $\D = \tup{2^{\F}, A, \alpha, tr}$~\citep{GeffnerBonet13_PlanningBook}, where $2^{\F}$ is the set of possible states and $\F$ is a set of fluents (atomic propositions); $A$ is the set of actions; $\alpha(s) \subseteq A$
is the set of applicable actions in a state $s$; and $tr(s, a)$ is the non-empty set of successor states that follow action $a$ in state $s$. 
A domain $\D$ is assumed to be compactly represented (e.g., in \PDDL~\citep{PDDLMcdermott1998}), hence its size is $|\F|$.
Given the set of \textit{literals} of $\F$ as $\mathit{Literals}(\F) = \F \cup \{ \lnot f \mid f \in \F \}$, every action $a \in A$ is usually characterized by $\tup{\Pre_a, \Eff_a}$, where $\Pre_a \subseteq \mathit{Literals}(\F)$ is the action preconditions, and $\Eff_a$ is the action effects. An action $a$ can be applied in a state $s$ if the set of fluents in $\Pre_a$ holds true in $s$. The result of applying $a$ in $s$ is a successor state $s'$ non-deterministically drawn from one of the $\Eff^{i}_{a}$ in $\Eff_a = \{ \Eff^{1}_{a}, ..., \Eff^{n}_{a} \}$.
In \FOND planning, some actions have \textit{uncertain outcomes}, such that they have
\textit{non-deterministic} effects (i.e., $|tr(s, a)| \geq 1$ in all states $s$ in which $a$ is applicable), and effects cannot be predicted in advance. 
\PDDL expresses uncertain outcomes using the \pred{oneof}~\citep{IPC6} keyword, as widely used by several \FOND planners~\citep{MyND_MattmullerOHB10,Muise12ICAPSFond}. 
We define \FOND planning problems as follows.

\begin{definition}\label{def:fond_problem}
\it
A \FOND planning problem is a tuple $\P = \tup{\D, s_{0}, G} $, where $\D$ is a \FOND domain model, $s_{0}$ is an initial assignment to fluents in $\F$ (i.e., initial state), and $G \subseteq \F$ is the goal state. 
\end{definition}

Solutions to a \FOND planning problem $\P$ are \textit{policies}. 
A policy is usually denoted as $\pi$,
and formally defined as a partial function $\pi: 2^{\F} \to A$ mapping \textit{non-goal} states into applicable actions that eventually reach the goal state $G$ from the initial state $s_{0}$.
A \textit{policy} $\pi$ for $\P$ induces a set of possible \textit{executions} $\vec{E} = \{ \vec{e}_1, \vec{e}_2, \dots \}$, that are state trajectories, possibly finite (i.e., histories) $(s_{0},\dots, s_{n})$, where $s_{i+1} \in tr(s_i, a_i)$ and $a_i \in \alpha(s_i)$ for $i = 0,\dots, n-1$, or possibly infinite $s_{0},s_{1},\dots$, obtained by choosing some possible outcome of actions instructed by the policy. A policy $\pi$ is a solution to $\P$ if every generated execution is such that it is finite and satisfies the goal $G$ in its last state, i.e., $s_{n} \models G$. 
In this case, we say that $\pi$ is \textit{winning}.
\cite{CimattiPRT03} define three solutions to \FOND planning problems: \textit{weak, strong} and \textit{strong-cyclic} solutions.
We formally define such solutions in Definitions~\ref{def:weak_solution}, \ref{def:strong_solution}, and~\ref{def:strong_cyclic_solution}.

\begin{definition}\label{def:weak_solution}
\it
A \textbf{weak solution} is a policy that achieves the goal state $G$ from the initial state $s_{0}$ under at least one selection of action outcomes; namely, such solution will have some chance of achieving the goal state $G$.
\end{definition}

\begin{definition}\label{def:strong_cyclic_solution}
\it
A \textbf{strong-cyclic solution} is a policy that guarantees to achieve the goal state $G$ from the initial state $s_{0}$ only under the assumption of fairness\footnote{The fairness assumption defines that all action outcomes in a given state have a non-zero probability.}. However, this type of solution may revisit states, so the solution cannot guarantee to achieve the goal state $G$ in a fixed number of steps.
\end{definition}

\begin{definition}\label{def:strong_solution}
\it
A \textbf{strong solution} is a policy that is guaranteed to achieve the goal state $G$ from the initial state $s_{0}$ regardless of the environment's non-determinism. This type of solution guarantees to achieve the goal state $G$ in a finite number of steps while never visiting the same state twice.
\end{definition}

In this work, we focus on \textit{strong-cyclic solutions}, where the environment acts in an unknown but stochastic way. 
Nevertheless, our recognition approach applies to strong solutions as well, where the environment is purely adversarial (i.e., the environment may always choose effects against the agent). 

As a running example, we use the well-known \FOND domain model called \textsc{Triangle-Tireworld}, where locations are connected by roads, and the agent can drive through them. 
The objective is to drive from one location to another. 
However, while driving between locations, a tire may go flat, and if there is a spare tire in the car's location, then the car can use it to fix the flat tire.
Figure~\ref{fig:Triangle_Tireworld_Example} illustrates a \FOND planning problem for the \textsc{Triangle-Tireworld} domain, where circles are locations, arrows represent roads, spare tires are depicted as tires, and the agent is depicted as a car.
Figure~\ref{fig:Triangle_Tireworld_Policy} shows a policy $\pi$ to achieve location 22. Note that, to move from location 11 to location 21, there are two arrows labeled with the action \pred{(move 11 21)}: (1) when moving does not cause the tire to go flat; (2) when moving causes the tire to go flat. The policy depicted in Figure~\ref{fig:Triangle_Tireworld_Policy} guarantees the success of achieving location 22 despite the environment's non-determinism. 

In this work, we assume from \textit{Classical Planning} that the cost is 1 for all \textit{non-deterministic} instantiated actions $a \in A$.
In this example, policy $\pi$, depicted in Figure~\ref{fig:Triangle_Tireworld_Policy}, has two possible finite executions in the set of executions $\vec{E}$, namely $\vec{E} = \{ \vec{e}_{0}, \vec{e}_{1} \}$, such as: 
\begin{itemize}
	\item $\vec{e}_{0}$: $[$\pred{(move 11 21)}, \pred{(move 21 22)}$]$; and 
	\item $\vec{e}_{1}$: $[$\pred{(move 11 21)}, \pred{(changetire 21)}, \pred{(move 21 22)}$]$.
\end{itemize}


\section{FOND Planning for \LTLftitle and \LTLptitle Goals}\label{sec:fond_planning_LTLPLTL}

We base our approach to goal recognition in \FOND domains for \textit{temporally extended goals} on \FOND planning for \LTLf and \LTLp goals \citep{CTMBM17,CamachoBMM18,DeGiacomoS18}. 
We formally define a \FOND planning problem for \LTLf/\PLTLf goals in Definition~\ref{def:FOND_planning}, as follows.

\begin{definition}\label{def:FOND_planning}
\it
A \FOND planning problem for \LTLf/\PLTLf goals is a tuple $\Gamma = \tup{\D, s_0, \varphi}$, where $\D$ is a standard \FOND domain model, $s_0$ is the initial state, and $\varphi$ is a goal formula, formally represented either as an \LTLf or a \PLTLf formula.
\end{definition}

In \FOND planning for temporally extended goals, a policy $\pi$ is a partial function $\pi : (2^\F)^{+} \to A$ mapping \textit{histories}, i.e., \textit{states} into applicable actions. A policy $\pi$ for $\Gamma$ achieves a temporal formula $\varphi$ if and only if the sequence of states generated by $\pi$, despite the non-determinism of the environment, is accepted by $\A_\varphi$.

Key to our recognition approach is using off-the-shelf \FOND planners for standard reachability goals to handle also temporally extended goals through an encoding of the automaton for the goal into an extended planning domain expressed in \PDDL.
Compiling temporally extended goals into planning domain models has a long history in the \textit{Planning} literature.
In particular, \cite{BaierM06} develops \emph{deterministic} planning with a special first-order quantified \LTL goals on finite-state sequences. 

Their technique encodes a \textit{Non-Deterministic Finite-state Automaton} (\NFA), resulting from \LTL formulas, into deterministic planning domains for which \textit{Classical Planning} technology can be leveraged. Our parameterization of objects of interest is somehow similar to their approach. 

Starting from \cite{BaierM06}, always in the context of deterministic planning, \cite{TorresBaier15}~proposed a polynomial-time compilation of \LTL goals on finite-state sequences into alternating automata, leaving non-deterministic choices to be decided at planning time.
Finally, \cite{CTMBM17,CamachoBMM18} built upon \cite{BaierM06} and \cite{TorresBaier15}, proposing a compilation in the context of \FOND domain models that simultaneously determinizes on-the-fly the \NFA for \LTLf and encodes it into \PDDL. However, this encoding introduces a lot of bookkeeping machinery due to the removal of any form of angelic non-determinism mismatching with the devilish non-determinism of \PDDL for \FOND.

Although inspired by these works, our approach differs in several technical details.
We encode the \DFA directly into a non-deterministic \PDDL planning domain by taking advantage of the \emph{parametric} nature of \PDDL domains that are then instantiated into propositional problems when solving a specific task.
%
%
Given a \FOND planning problem $\Gamma$ represented in \PDDL, we transform $\Gamma$ as follows. 
First, we transform the temporally extended goal formula $\varphi$ (formalized either in \LTLf or \PLTLf) into its corresponding \DFA $\A_\varphi$ through the highly-optimized \MONA tool \citep{Mona95}. 
Second, from $\A_\varphi$, we build a \emph{parametric} \DFA (\PDFA), representing the lifted version of the \DFA. 
Finally, the encoding of such a \PDFA into \PDDL yields an augmented \FOND domain model $\Gamma'$. 
Thus, we reduce \FOND planning for \LTLf/\PLTLf to a standard \FOND planning problem solvable by any off-the-shelf \FOND planner.

\subsection{Translation to Parametric DFA}

The use of \textit{parametric} \DFAs is based on the following observations. In temporal logic formulas and, hence, in the corresponding \DFAs, propositions are represented by domain fluents grounded on specific objects of interest.
We can replace these propositions with predicates using object variables and then have a mapping function $m^{obj}$ that maps such variables into the problem instance objects.
In this way, we get a lifted and \textit{parametric} representation of the \DFA, i.e., \PDFA, which is merged with the domain. 
Here, the objective is to capture the entire dynamics of the \DFA within the planning domain model itself. To do so, starting from the \DFA we build a \PDFA whose states and symbols are the lifted versions of the ones in the \DFA.
Formally, to construct a \PDFA we use a mapping function $m^{obj}$, which maps the set of objects of interest present in the \DFA to a set of \emph{free} variables. Given the mapping function $m^{obj}$, we can define a \PDFA as follows.

\begin{definition}
\it
Given a set of object symbols $\O$, and a set of free variables $\V$, we define a mapping function $m$ that maps each object in $\O$ with a free variable in $\V$.
\end{definition}

Given a \DFA and the objects of interest for $\Gamma$, we can construct a \PDFA as follows:
\begin{definition}
\it
A \PDFA is a tuple $\A^{p}_\varphi = \tup{\Sigma^{p}, Q^{p}, q^{p}_0, \delta^{p}, F^{p}}$, where: $\Sigma^{p} = \{ \sigma^p_0, ..., \sigma^p_n \} = 2^{\F}$ is the alphabet of fluents; $Q^{p}$ is a nonempty set of parametric states; $q^{p}_0$ is the parametric initial state; $\delta^{p}: Q^{p} \times \Sigma^{p} \rightarrow Q^{p}$ is the parametric transition function; $F^{p} \subseteq Q^{p}$ is the set of parametric final states. $\Sigma^{p}, Q^{p}, q^{p}_0, \delta^{p}$ and $F^{p}$ can be obtained by applying $m^{obj}$ to all the components of the corresponding \DFA.
\end{definition}

\begin{examplebold}
\it
Given the \LTLf formula ``$\Diamond(vAt ~51)$'', the object of interest ``51'' is replaced by the object variable $x$ (i.e., $m^{obj}(51) = x$), and the corresponding \DFA and \PDFA for this \LTLf formula are depicted in Figures~\ref{fig:ex-DFA}~and~\ref{fig:ex-PDFA}.
\end{examplebold}

\vspace{-5mm}

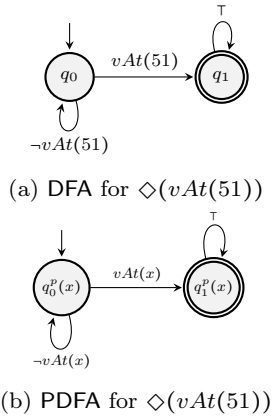
\begin{figure}[ht]
\centering
	\begin{subfigure}[b]{\linewidth}
		\centering
		\begin{tikzpicture}[scale=1, every node/.style={scale=0.8},shorten >=1pt,node distance=2cm,on grid,auto] 
  			\node[state,initial above,initial text=] (q_0) {$q_0$}; 
  			\node[state, accepting] (q_1) [right=of q_0] {$q_1$}; 
   			\path[->] 
   			(q_0) 
     			edge [loop below] node {$\lnot vAt(51)$} (q_0)
     			edge node {$vAt(51)$} (q_1)
   			(q_1) 
    			edge [loop above] node {$\top$} (q_1);
		\end{tikzpicture}
		\caption{\DFA for $\Diamond(vAt(51))$} 
		\label{fig:ex-DFA} 
	\end{subfigure}
	\begin{subfigure}[b]{\linewidth}
		\centering
		\begin{tikzpicture}[scale=1, every node/.style={scale=0.65},shorten >=1pt,node distance=2cm,on grid,auto] 
  			\node[state,initial above,initial text=] (q_0) {$q^p_0(x)$}; 
  			\node[state, accepting] (q_1) [right=of q_0] {$q^p_1(x)$}; 
   			\path[->] 
   			(q_0) 
     			edge [loop below] node {$\lnot vAt(x)$} (q_0)
     			edge node {$vAt(x)$} (q_1)
   			(q_1) 
    			edge [loop above] node {$\top$} (q_1);
		\end{tikzpicture}
		\caption{\PDFA for $\Diamond(vAt(51))$} 
		\label{fig:ex-PDFA} 
	\end{subfigure}
	\caption{\DFA and \PDFA for $\Diamond(vAt(51))$.}
\end{figure}

When the resulting new domain is instantiated, we implicitly get back the original \DFA in the Cartesian product with the original instantiated domain. 
Note that this way of proceeding is similar to what is done in \citep{BaierM06}, where they handle \LTLf goals expressed in a special \FOL syntax, with the resulting automata (non-deterministic B\"uchi automata) 
parameterized by the variables in the \LTLf formulas.

\subsection{PDFA Encoding in PDDL}

Once the \PDFA has been computed, we encode its components within the planning problem $\Gamma$, specified in \PDDL, thus, producing an augmented \FOND planning problem $\Gamma' = \tup{\D', s'_0, G'}$, where $\D' = \tup{2^{\F'}, A', \alpha', tr'}$ and $G'$ is a propositional goal as in \textit{Classical Planning}. 
Intuitively, additional parts of $\Gamma'$ are used to synchronize the dynamics between the domain and the automaton sequentially. Specifically, $\Gamma'$ is composed of the following components.

\subsubsection*{Fluents}
$\F'$ has the same fluents in $\F$ plus fluents representing each state of the \PDFA, and a fluent called \pred{turnDomain}, which controls the alternation between domain's actions and the \PDFA's synchronization action. Formally,
$\F' = \F \cup \{ q \mid q \in Q^p \} \cup \{\pred{turnDomain} \}$.

\subsubsection*{Domain Actions}
Actions in $A$ are modified by adding \pred{turnDomain} in preconditions and the negated \pred{turnDomain} in effects: $\Pre^\prime_a = \Pre_a \cup \{ \pred{turnDomain} \}$ and $\Eff^\prime_a = \Eff_a \cup$ $\{ \lnot \pred{turnDomain} \}$ for all $a \in A$.

\subsubsection*{Transition Operator}
The \emph{transition} function $\delta^{p}$ of a \PDFA is encoded as a new domain operator with conditional effects, called \pred{trans}. 
Namely, we have $\Pre_{\pred{trans}} = \{\lnot \pred{turnDomain} \}$ and $\Eff_{\pred{trans}} = \{ \pred{turnDomain} \} \cup \{ \pred{\textbf{when}}~ (q^p, \sigma^p), \pred{\textbf{then}}~ \delta^p(q^p, \sigma^p) \cup \{ \lnot q \mid q \neq q^p, q \in Q^p \} \}$, for all $(q^p, \sigma^p) \in \delta^p$.
To exemplify how the transition \PDDL operator is obtained, Listing~\ref{lst:trans-op} reports the transition operator for the \PDFA in Figure~\ref{fig:ex-PDFA}.

\begin{lstlisting}[language=pddl, escapechar=£, caption={Transition \PDDL operator for $\Diamond(vAt(x))$}, captionpos=b, label={lst:trans-op}, basicstyle=\scriptsize]
(:action trans
  :parameters (?x - location)
  :precondition (not (turnDomain))
  :effect (and 
  	(when (and (q0 ?x) (not (vAt ?x)))
          (and (q0 ?x) (not (q1 ?x)) (turnDomain))
    (when (or (and (q0 ?x) (vAt ?x)) (q1 ?x))£\label{line:cond-eff-ex}£
              (and (q1 ?x) (not (q0 ?x)) 
			  (turnDomain))))
\end{lstlisting}

\subsubsection*{Initial and Goal States}

The new initial condition is specified as $s'_0 = s_0  \cup \{ q^{p}_0 \}  \cup \{\pred{turnDomain} \}$. 
This comprises the initial condition of the previous domain $D$ ($s_0$) plus the initial state of the \PDFA and the predicate \pred{turnDomain}. 
Considering the example in Figure~\ref{fig:Triangle_Tireworld_Example} and the \PDFA in Figure~\ref{fig:ex-PDFA}, the new initial condition is as follows in \PDDL:

\begin{lstlisting}[language=pddl, escapechar=£, caption={\PDDL initial condition for $\varphi = \Diamond(vAt(51))$}, captionpos=b, label={lst:init}, basicstyle=\scriptsize]
(:init (and (road 11 21) (road 11 21) ...
            (spare-in 21) (spare-in 12) ...
            (q0 51) (turnDomain)))
\end{lstlisting}

The new goal condition is specified as $G' = \{ \bigvee q_i \mid q_i \in F^{p} \}  \cup \{\pred{turnDomain} \}$, i.e., we want the \PDFA to be in one of its accepting states and \pred{turnDomain}, as follows:

\begin{lstlisting}[language=pddl, escapechar=£, caption={\PDDL goal condition for $\varphi = \Diamond(vAt(51))$}, captionpos=b, label={lst:goal}, basicstyle=\scriptsize]
(:goal (and (q1 51) (turnDomain)))
\end{lstlisting}

We note that, both in the initial and goal conditions of the new planning problem, \PDFA states are grounded back on the objects of interest thanks to the inverse of the mapping $m^{obj}$.

Executions of a policy for our new \FOND planning problem $\Gamma'$ are $\vec{e}': [a'_1, t_1, a'_2, t_2, \dots, a'_n,t_n]$, where $a'_i \in A'$ are the real domain actions, and $t_1, \dots, t_n$ are sequences of synchronization \pred{trans} actions, which, at the end, can be easily removed to extract the desired execution $\vec{e} :[a'_1, a'_2, \dots, a'_n]$. 
In the remainder of the paper, we refer to the compilation just exposed as \FONDforLTLPLTL.

\subsubsection*{Theoretical Property of the PDDL Encoding}
We now study the theoretical properties of the encoding presented in this section. 
Theorem~\ref{thm:fond-planning} states that solving \FOND planning for \LTLf/\LTLp goals amounts to solving standard \FOND planning problems for reachability goals. A policy for the former can be easily derived from a policy of the latter.

\begin{theorem}\label{thm:fond-planning}
Let $\Gamma$ be a \FOND planning problem with an \LTLf/\PLTLf goal $\varphi$, and $\Gamma^\prime$ be the compiled \FOND planning problem with a reachability goal state. Then, $\Gamma$ has a policy $\policy: (2^\F)^+ \to A$ iff $\Gamma^\prime$ has a policy $\policy^\prime: (2{^\F}^\prime)^+ \to A^\prime$.
\end{theorem}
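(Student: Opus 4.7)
The plan is to prove both directions by exhibiting an explicit correspondence between executions in $\Gamma$ and executions in $\Gamma'$, and showing that this correspondence lifts to policies while preserving the winning condition. The conceptual core is that $\Gamma'$ is, up to the bookkeeping fluent \pred{turnDomain}, the synchronous product of $\Gamma$ with the DFA $\A_\varphi$ obtained by grounding the \PDFA through $m^{obj}$: domain actions carry the $\F$-projection of the state exactly as in $\Gamma$ (their preconditions and effects only gain the flip of \pred{turnDomain}), while \pred{trans} is deterministic, leaves $\F$ unchanged, and updates the automaton fluents according to $\delta^p$. Hence every legal execution of $\Gamma'$ is a strict alternation $a'_1, t_1, a'_2, t_2, \dots$ whose projection onto $A$ and $\F$ yields an execution of $\Gamma$, and conversely every execution of $\Gamma$ lifts uniquely by inserting the (forced) \pred{trans} steps and tracking the automaton state.

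For the ``$\Rightarrow$'' direction, given a winning policy $\pi$ for $\Gamma$, I would define $\pi'$ on histories of $\Gamma'$ as follows: if the last state satisfies $\pred{turnDomain}$, let $\pi'$ return $\pi$ applied to the $\F$-projection of the history with the inserted \pred{trans} steps removed; otherwise return \pred{trans}. Because \pred{trans} is deterministic and modifies only automaton fluents, the non-deterministic branching in $\Gamma'$ exactly mirrors that of $\Gamma$, so the set of executions induced by $\pi'$ is in bijection with those induced by $\pi$. Using the correctness of the \DFA/\PDFA construction (for any trace $\trace$, $\trace \models \varphi$ iff $\A_\varphi$ ends in an accepting state after reading $\trace$), each finite execution of $\pi$ satisfies $\varphi$ in its last state iff the corresponding execution of $\pi'$ ends with an accepting $q^p$ fluent together with \pred{turnDomain}, i.e.\ satisfies $G'$. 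Hence $\pi'$ is winning for $\Gamma'$.

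For the ``$\Leftarrow$'' direction, I would take a winning policy $\pi'$ for $\Gamma'$ and define $\pi$ on a history $h$ of $\Gamma$ by lifting $h$ to the unique history $h'$ of $\Gamma'$ obtained by interleaving the forced \pred{trans} steps (whose effects are uniquely determined by $\delta^p$ and the $\F$-content of $h$), then letting $\pi(h) = \pi'(h')$. Since $\pi'$ is winning and \pred{trans} never changes $\F$-fluents, the $\F$-projection of every maximal execution of $\pi'$ is a finite execution of $\pi$ whose final $\F$-state occurs simultaneously with the automaton being in an accepting parametric state; by the DFA correctness, this projected execution satisfies $\varphi$. Thus $\pi$ is winning for $\Gamma$. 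The same argument works for both strong and strong-cyclic solutions since the bijection preserves the tree of non-deterministic outcomes and, in particular, fairness of the outcome choices.

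The main obstacle I expect is being precise about the parametric layer: one must justify that grounding the \PDFA via $m^{obj}{}^{-1}$ inside the instantiated problem $\Gamma'$ yields exactly $\A_\varphi$ on the propositional alphabet induced by $\F$, so that the DFA acceptance theorem for \LTLf/\PLTLf can be invoked. A careful statement (and short induction on the execution length) of the invariant ``after the $k$-th \pred{trans} step, the set of true $q^p$-fluents in $\Gamma'$ equals $\{ \delta^*(q_0, \trace_{\le k}) \}$'' handles this, and the rest of the argument is then a routine unpacking of definitions.
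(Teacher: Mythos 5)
Your proposal is correct and follows essentially the same route as the paper's own proof: both directions construct the corresponding policy by inserting (resp.\ projecting out) the synchronization \pred{trans} steps and the automaton fluents, argue executability from the fact that domain actions are untouched except for the \pred{turnDomain} flip while \pred{trans} only updates automaton fluents, and conclude the winning condition via the \DFA acceptance theorem for \LTLf/\PLTLf. Your version is in fact somewhat more careful than the paper's --- the explicit definition of $\pi'$ on histories, the bijection between execution trees, and the invariant tracking $\delta^*(q_0,\trace_{\le k})$ are all left implicit in the paper --- but these are refinements of the same argument rather than a different approach.
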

\begin{proof}
($\longrightarrow$).
We start with a policy $\pi$ of the original problem that is winning by assumption. Given $\pi$, we can always build a new policy, which we call $\pi^\prime$, following the encoding presented in Section~3 of the paper.
The newly constructed policy will modify histories of $\pi$ by adding fluents and an auxiliary deterministic action $\pred{trans}$, both related to the \DFA associated with the \LTLf/\PLTLf formula $\varphi$. 
Now, we show that $\pi^\prime$ is an executable policy and that is winning for $\Gamma^\prime$.
To see the executability, we can just observe that, by construction of the new planning problem $\Gamma^\prime$, all action effects $\Eff_{a^\prime}$ of the original problem $\Gamma$ are modified in a way that all action effects of the original problem $\Gamma$ are not modified and that the auxiliary action $\pred{trans}$ only changes the truth value of additional fluents given by the \DFA $\A^p_\varphi$ (i.e., automaton states).
Therefore, the newly constructed policy $\pi^\prime$ can be executed.
To see that $\pi^\prime$ is winning and satisfies the \LTLf/\PLTLf goal formula $\varphi$, we reason about all possible executions. For all executions, every time the policy $\pi^\prime$ stops we can always extract an induced state trajectory of length $n$ such that its last state $s^\prime_n$ will contain one of the final states $F^p$ of the automaton $\A^p_\varphi$. This means that the induced state trajectory is accepted by the automaton $\A^p_\varphi$.
Then, by Theorem~\cite{DegVa13,ijcai2020surveyddfr}, we have that $\trace\models\varphi$. 

($\longleftarrow$).
From a winning policy $\pi^\prime$ for the compiled problem, we can always project out all automata auxiliary $\pred{trans}$ actions
obtaining a corresponding policy $\pi$. We need to show that the resulting policy $\pi$ is winning, namely, it can be successfully executed on the original problem $\Gamma$ and satisfies the \LTLf/\PLTLf goal formula $\varphi$. The executability follows from the fact that the deletion of $\pred{trans}$ actions and related auxiliary fluents from state trajectories induced by $\pi$ does not modify any precondition/effect of original domain actions (i.e., $a\in\A$). Hence, under the right preconditions, any domain action can be executed. Finally, the satisfaction of the \LTLf/\PLTLf formula $\varphi$ follows directly from Theorem~\cite{DegVa13,ijcai2020surveyddfr}. Indeed, every execution of the winning policy $\pi^\prime$ stops when reaching one of the final states $F^p$ of the automaton $\A^p_\varphi$ in the last state $s_n$,
thus every execution of $\pi$ would satisfy $\varphi$. Thus, the thesis holds.
\end{proof}

\section{Goal Recognition in \FONDtitle~Planning Domains with \LTLftitle and \LTLptitle Goals}\label{sec:goal_recognition_FOND_LTL}

We now introduce our recognition approach that is able to recognizing temporally extended (\LTLf and \PLTLf) goals in \FOND planning domains.
Our approach extends the probabilistic framework of \cite{RamirezG_AAAI2010} to compute posterior probabilities over temporally extended goal hypotheses, by reasoning over the set of possible executions of policies $\pi$ and the observations. Our goal recognition approach works in two stages: the \textit{compilation stage} and the \textit{recognition stage}.
In the next sections, we describe in detail how these two stages work. Figure~\ref{fig:Pipeline} illustrates how our approach works.

\begin{figure*}[!ht]
	\centering 	
	\includegraphics[width=0.7\linewidth]{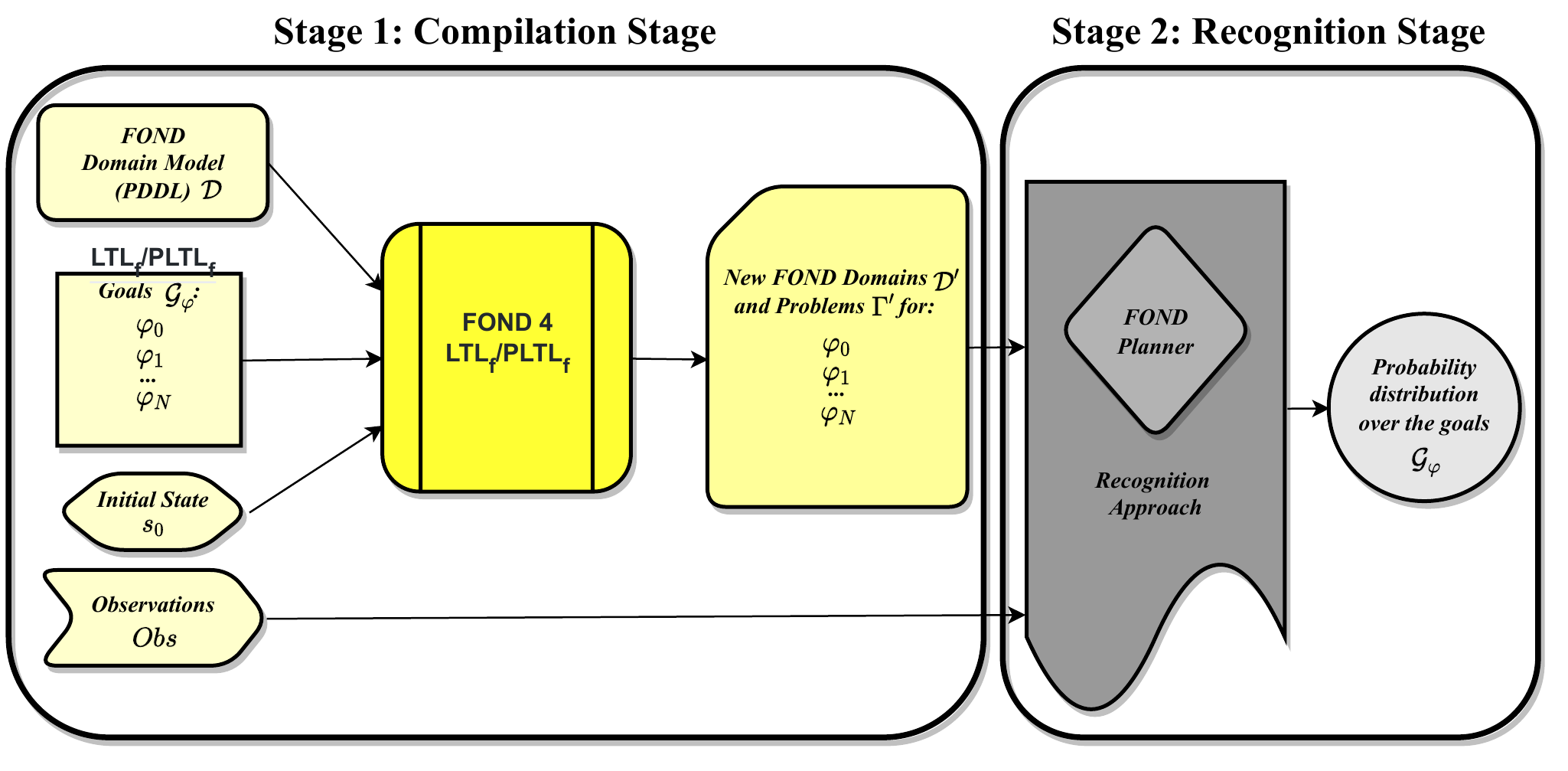}
	\caption{Overview of our solution approach.}
	\label{fig:Pipeline}
\end{figure*}

\subsection{Goal Recognition Problem} We define the task of goal recognition in \FOND planning domains with \LTLf and \LTLp goals by extending the standard definition of \textit{Plan Recognition as Planning}~\citep{RamirezG_IJCAI2009}, as follows.

\begin{definition}\label{def:goal_recognition}
\it
A goal recognition problem in a \FOND~planning setting with temporally extended goals (\LTLf and/or \LTLp) is a tuple $\T_{\varphi} = \langle \mathcal{D}, s_0, \mathcal{G}_{\varphi}, Obs\rangle$, where: $\D = \tup{2^{\F}, A, \alpha, tr}$ is a \FOND planning domain; $s_0$ is the initial state; $\mathcal{G}_{\varphi} = \lbrace \varphi_0, \varphi_1, ..., \varphi_n \rbrace$ is the set of goal hypotheses formalized in \LTLf or \LTLp, including the intended goal $\varphi^{*} \in \mathcal{G}_{\varphi}$; $Obs = \tup{o_0, o_1, ..., o_n}$ is a sequence of successfully executed (non-deterministic) actions of a policy $\pi_{\varphi^{*}}$ that achieves the intended goal $\varphi^{*}$, s.t. $o_i \in A$. 
\end{definition}

Since we deal with non-deterministic domain models, an observation sequence $Obs$ corresponds to a successful execution $\vec{e}$ in the set of all possible executions $\vec{E}$ of a \textit{strong-cyclic policy} $\pi$ that achieves the actual intended hidden goal  $\varphi^{*}$. 
In this work, we assume two recognition settings: \textit{Offline Keyhole Recognition}, and \textit{Online Recognition}.
In \textit{Offline Keyhole Recognition} the observed agent is completely unaware of the recognition process~\citep{Armentano_AIJ_2007}, the observation sequence $Obs$ is given at once, and it can be either \textit{full} or \textit{partial}---in a \textit{full observation sequence}, we observe all actions of an agent's plan, whereas, in a \textit{partial observation sequence}, only a sub-sequence thereof. 
By contrast, in \textit{Online Recognition}~\citep{vered2016online}, the observed agent is also unaware of the recognition process, but the observation sequence is revealed incrementally instead of being given in advance and at once, as in \textit{Offline Recognition}, thus making the recognition process an already much harder task.

An ``ideal'' solution for a goal recognition problem comprises a selection of the goal hypotheses containing only the single actual intended hidden goal $\varphi^{*} \in \mathcal{G}$ that the observation sequence $Obs$ of a plan execution achieves \citep{RamirezG_IJCAI2009,RamirezG_AAAI2010}. 
Fundamentally, there is no exact solution for a goal recognition problem, but it is possible to produce a probability distribution over the goal hypotheses and the observations, so that the goals that ``best'' explain the observation sequence are the most probable ones.
We formally define a solution to a goal recognition problem in \FOND planning with temporally extended goals in Definition~\ref{def:goal_recognition_solution}.

\begin{definition}\label{def:goal_recognition_solution}
\it
Solving a goal recognition problem $\T_{\varphi}$ requires selecting a temporally extended goal hypothesis $\hat{\varphi} \in \mathcal{G}_{\varphi}$ such that $\hat{\varphi} = \varphi^{*}$, and it represents how well $\hat{\varphi}$ predicts or explains what observation sequence $Obs$ aims to achieve.
\end{definition}

Existing recognition approaches often return either a probability distribution over the set of goals~\citep{RamirezG_AAAI2010,Sohrabi_IJCAI2016}, or scores associated with each possible goal hypothesis~\citep{PereiraOM_AIJ_2020}. Here, we return a probability distribution $\mathbb{P}$ over the set of temporally extended goals $\mathcal{G}_{\varphi}$ that ``best'' explains the observations sequence $Obs$.

\subsection{Probabilistic Goal Recognition}
We now recall the probabilistic framework for \textit{Plan Recognition as Planning} proposed in \cite{RamirezG_AAAI2010}. 
The framework sets the probability distribution for every goal $G$ in the set of goal hypotheses $\G$, and the observation sequence $Obs$ to be a Bayesian posterior conditional probability, as follows:
\begin{align}
\label{eq:posterior}
\mathbb{P}(G \mid Obs) = \eta * \mathbb{P}(Obs \mid G) * \mathbb{P}(G)
\end{align}
\noindent where $\mathbb{P}(G)$ is the \emph{a priori} probability assigned to goal $G$, $\eta$ is a normalization factor inversely proportional to the probability of $Obs$,
and $\mathbb{P}(Obs \mid G)$ is
\begin{align}
\mathbb{P}(Obs \mid G) = \sum_{\pi} \mathbb{P}(Obs \mid \pi) * \mathbb{P}(\pi \mid G)
\label{eq:likelihood}
\end{align}

\noindent $\mathbb{P}(Obs \mid \pi)$ is the probability of obtaining $Obs$ by executing a policy $\pi$ and $\mathbb{P}(\pi \mid G)$ is the probability of an agent pursuing $G$ to select $\pi$. 
Next, we extend the probabilistic framework above to recognize temporally extended goals in \FOND planning domain models.

\subsection{Compilation Stage}\label{subsec:compilation_stage}

We perform a \textit{compilation stage} that allows us to use any off-the-shelf \FOND planner to extract policies for temporally extended goals. To this end, we compile and generate new \FOND planning domain models $\Gamma'$ for the set of possible temporally extended goals $\mathcal{G}_{\varphi}$ using the compilation approach described in Section~\ref{sec:fond_planning_LTLPLTL}.  
Specifically, for every goal $\varphi \in \mathcal{G}_{\varphi}$, our compilation takes as input a \FOND planning problem $\Gamma$, where $\Gamma$ contains the \FOND planning domain $\D$ along with an initial state $s_0$ and a temporally extended goal $\varphi$. 
Finally, as a result, we obtain a new \FOND planning problem $\Gamma'$ associated with the new domain $\D'$.
Note that such a new \FOND planning domain $\Gamma'$ encodes new predicates and transitions that allow us to plan for temporally extended goals by using off-the-shelf \FOND planners. 

\begin{corollary}
Let $\mathcal{T}_\varphi$ be a goal recognition problem over a set of \LTLf/\PLTLf goals $\mathcal{G}_\varphi$  and let $\mathcal{T^\prime}$ be the compiled goal recognition problem over a set of propositional goals $\mathcal{G}$. Then, if $\mathcal{T^\prime}$ has a set of winning policies that solve the set of propositional goals in $\mathcal{G}$, then $\mathcal{T}_\varphi$ has a set of winning policies that solve its \LTLf/\PLTLf goals.
\end{corollary}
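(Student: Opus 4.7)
The plan is to obtain the corollary as a direct consequence of Theorem~\ref{thm:fond-planning}, applied pointwise to every goal in the hypothesis set $\mathcal{G}_\varphi$. The compilation stage described in Section~\ref{subsec:compilation_stage} processes each temporally extended goal $\varphi \in \mathcal{G}_\varphi$ independently by invoking the \FONDforLTLPLTL encoding on the tuple $\tup{\D, s_0, \varphi}$, producing a compiled \FOND planning problem $\Gamma^\prime_\varphi$ with a propositional reachability goal $G^\prime_\varphi \in \mathcal{G}$. Therefore the compiled recognition problem $\mathcal{T}^\prime$ decomposes into a family of compiled planning problems $\{\Gamma^\prime_\varphi\}_{\varphi\in\mathcal{G}_\varphi}$, one per hypothesis, and the hypothesized bijection between $\mathcal{G}_\varphi$ and $\mathcal{G}$ is induced by this construction.

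First, I would fix an arbitrary $\varphi \in \mathcal{G}_\varphi$ and let $\pi^\prime_\varphi$ denote the corresponding winning policy for $\Gamma^\prime_\varphi$ guaranteed by the hypothesis of the corollary. Next, I would apply the ($\longleftarrow$) direction of Theorem~\ref{thm:fond-planning} to $\pi^\prime_\varphi$: this yields a policy $\pi_\varphi$ for the original \FOND planning problem $\tup{\D, s_0, \varphi}$, obtained by projecting out the auxiliary \pred{trans} actions and the \PDFA-related fluents from the histories induced by $\pi^\prime_\varphi$. By Theorem~\ref{thm:fond-planning}, $\pi_\varphi$ is executable in $\D$ and every one of its induced executions satisfies $\varphi$. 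Repeating this construction for each $\varphi \in \mathcal{G}_\varphi$ produces the required family $\{\pi_\varphi\}_{\varphi \in \mathcal{G}_\varphi}$ of winning policies for $\mathcal{T}_\varphi$.

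The only subtle point — and the place where I would be most careful — is ensuring that the indexing is consistent: the corollary implicitly assumes that the set $\mathcal{G}$ of propositional goals in $\mathcal{T}^\prime$ is exactly the image of $\mathcal{G}_\varphi$ under the compilation, and that each winning policy in the assumed family is associated with the compiled counterpart of a specific hypothesis $\varphi$. Once this correspondence is made explicit (e.g., by writing $\mathcal{G} = \{G^\prime_\varphi \mid \varphi \in \mathcal{G}_\varphi\}$ and matching policies accordingly), the rest is an immediate application of Theorem~\ref{thm:fond-planning} per goal. No additional reasoning about the observation sequence $Obs$ or the Bayesian framework is needed at this stage, since the corollary concerns only the existence of winning policies, which is a property purely of the planning side of the recognition problem.
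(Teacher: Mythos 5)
Your proposal is correct and follows essentially the same route as the paper: the paper's own proof simply invokes the policy correspondence of Theorem~\ref{thm:fond-planning} (stated there as a bijective mapping) and concludes, while you spell out the same argument pointwise per goal hypothesis, using only the ($\longleftarrow$) direction. Your added care about the indexing between $\mathcal{G}_\varphi$ and $\mathcal{G}$ is a reasonable elaboration of what the paper leaves implicit, not a different method.
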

\begin{proof}
From Theorem~1 we have a bijective mapping between policies of \FOND planning for \LTLf/\PLTLf goals and policies of standard \FOND planning. Therefore, the thesis holds.
\end{proof}

\subsection{Recognition Stage}\label{subsec:recognition_stage}

The stage in which we perform the goal recognition task comprises extracting policies for every goal $\varphi \in \mathcal{G}_{\varphi}$. 
From such policies along with observations $Obs$, we compute posterior probabilities for the goals $\mathcal{G}_{\varphi}$ by matching the observations with all possible executions in the set of executions $\vec{E}$ of the policies. 
To ensure compatibility with the policies, we assume the recognizer knows the preference relation over actions for the observed agent when unrolling the policy during search.

\subsubsection*{Computing Policies and the Set of Executions $\vec{E}$ for $\mathcal{G}_{\varphi}$} 

We extract policies for every goal $\varphi \in \mathcal{G}_{\varphi}$ using the new \FOND planning domain models $\Gamma'$, and for each of these policies, we enumerate the set of possible executions $\vec{E}$. 
The aim of enumerating the possible executions $\vec{E}$ for a policy $\pi$ is to attempt to infer what execution $\vec{e} \in \vec{E}$ the observed agent is performing in the environment. 
Environmental non-determinism prevents the recognizer from determining the specific execution $\vec{e}$ the observed agent goes through to achieve its goals. 
The recognizer considers possible executions that are all paths to the goal with no repeated states.
This assumption is partially justified by the fact that the probability of entering loops multiple times is low, and relaxing it is an important research direction for future work.




After enumerating the set of possible executions $\vec{E}$ for a policy $\pi$, we compute the average distance of all actions in the set of executions $\vec{E}$ to the goal state $\varphi$ from initial state $s_{0}$.
We note that strong-cyclic solutions may have infinite possible executions. However, here
we consider executions that do not enter loops, and for those entering possible loops, we consider only the ones entering loops \textit{at most} once.
Indeed, the computation of the average distance is not affected by the occurrence of possibly repeated actions. In other words, if the observed agent executes the same action repeatedly often, it does not change its distance to the goal.
The average distance aims to estimate ``how far'' every observation $o \in Obs$ is to a goal state $\varphi$.
This average distance is computed because some executions $\vec{e} \in \vec{E}$ may share the same action in execution sequences but at different time steps. We refer to this average distance as $\bfmath{d}$. 
For example, consider the policy $\pi$ depicted in Figure~\ref{fig:Triangle_Tireworld_Policy}. This policy $\pi$ has two possible executions for achieving the goal state from the initial state, and these two executions share some actions, such as \pred{(move 11 21)}. In particular, this action appears twice in Figure~\ref{fig:Triangle_Tireworld_Policy} due to its uncertain outcome. Therefore, this action has two different distances (if we count the number of remaining actions towards the goal state) to the goal state: $distance = 1$, if the outcome of this action generates the state $s_2$; and $distance = 2$, if the outcome of this action generates the state $s_3$. Hence, since this policy $\pi$ has two possible executions, and the sum of the distances is 3, the average distance for this action to the goal state is $\bfmath{d} = 1.5$. The average distances for the other actions in this policy are: $\bfmath{d} = 1$ for \pred{(changetire 21)}, because it appears only in one execution; and $\bfmath{d} = 0$ for \pred{(move 21 22)}, because the execution of this action achieves the goal state.

We use $\bfmath{d}$ to compute an \emph{estimated score} that expresses ``how far'' every observed action in the observation sequence $Obs$ is to a temporally extended goal $\varphi$ in comparison to the other goals in the set of goal hypotheses $\mathcal{G}_{\varphi}$. This means that the goal(s) with the lowest score(s) along the execution of the observed actions $o \in Obs$ is (are) the one(s) that, most likely, the observation sequence $Obs$ aims to achieve.  
We note that, the average distance $\bfmath{d}$ for those observations $o \in Obs$ that are not in the set of executions $\vec{E}$ of a policy $\pi$, is set to a large constant number, i.e., to $\bfmath{d} = e^{5}$. As part of the computation of this \textit{estimated score}, we compute a \emph{penalty value} that directly affects the \textit{estimated score}. This \emph{penalty value} represents a penalization that aims to increase the \textit{estimated score} for those goals in which each pair of subsequent observations $\langle o_{i-1}, o_{i} \rangle$ in $Obs$ does not have any relation of order in the set of executions $\vec{E}$ of these goals. We use the Euler constant $e$ to compute this \textit{penalty value}, formally defined as 
$e^{\bfmath{p}(o_{i-1}, o_{i})}$, in which we use $\R(\vec{e})$ as the set of order relation of an execution $\vec{e}$, where
\begin{equation}
\centering
\bfmath{p}(o_{i-1}, o_{i}) =
\begin{cases}
    1,		& \text{if } \lbrace\forall \vec{e} \in E | \langle o_{i-1} \prec o_{i} \rangle \notin \R(\vec{e})\rbrace \\
    0,      & \text{otherwise}
\end{cases}
\end{equation}

Equation~\ref{eq:estimated_score} formally defines the computation of the \emph{estimated score} for every goal $\varphi \in \mathcal{G}_{\varphi}$ given a pair of subsequent observations $\langle o_{i-1}, o_{i} \rangle$, and the set of goal hypotheses $\mathcal{G}_{\varphi}$. 
\begin{equation}
\label{eq:estimated_score}
\frac{\mathit{e^{\bfmath{p}(o_{i-1}, o_{i})}} * \bfmath{d}(o_{i}, \varphi)}
{\sum_{\varphi' \in \mathcal{G}_{\varphi}} \bfmath{d}(o_{i}, \varphi')}
\end{equation}

\begin{figure}[!ht]
	\centering 	\includegraphics[width=0.6\linewidth]{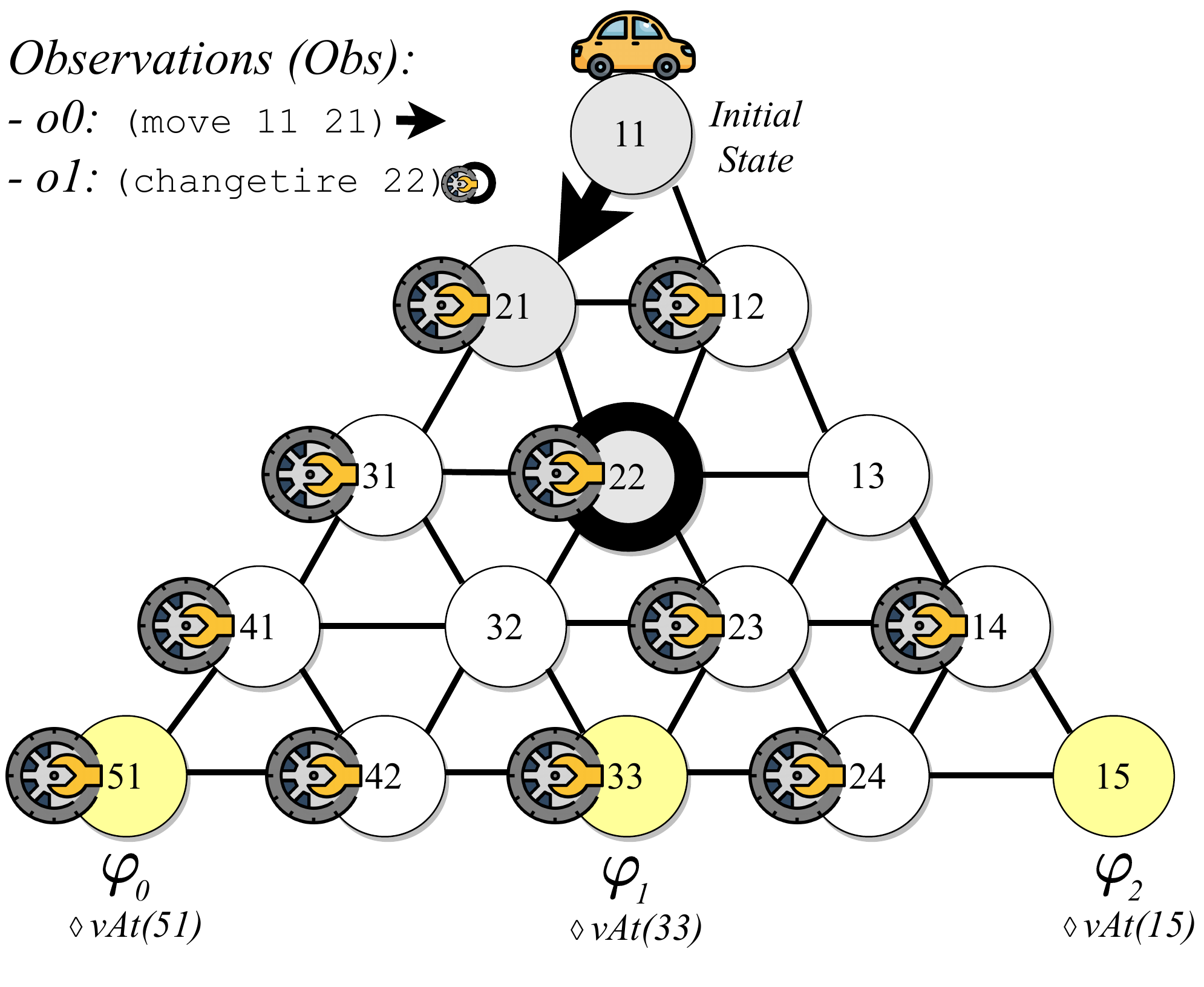}
	\caption{Recognition problem example.}
	\label{fig:Recognition_Example}
\end{figure}

\begin{examplebold}\label{exemp:recognition}
\it
To exemplify how we compute the \textit{estimated score} for every goal $\varphi \in \mathcal{G}_{\varphi}$, consider the recognition problem in Figure~\ref{fig:Recognition_Example}: $s_0$ is $vAt(11)$; the goal hypotheses $\mathcal{G}_{\varphi}$ are expressed as \LTLf goals, such that $\varphi_0 = \Diamond vAt(51), \varphi_1 = \Diamond vAt(33)$, and $\varphi_2 = \Diamond vAt(15)$; $Obs = \lbrace o_0: {\normalfont \pred{(move 11 21)}}, o_1: {\normalfont \pred{(changetire 22)}} \rbrace$. The intended goal $\varphi^{*}$ is $\varphi_1$. 
Before computing the \textit{estimated score} for the goals, we first perform the compilation process presented before. 
Afterward, we extract policies for every goal $\varphi \in \mathcal{G}_{\varphi}$, enumerate the possible executions $\vec{E}$ for the goals $\mathcal{G}_{\varphi}$ from the extracted policies, and then compute the average distance $\bfmath{d}$ of all actions in the set of executions $\vec{E}$ for the goals $\mathcal{G}_{\varphi}$ from $s_{0}$. 
The number of possible executions $\vec{E}$ for the goals are: $\varphi_0: |\vec{E}| = 8, \varphi_1: |\vec{E}| = 8$, and $\varphi_2 = |\vec{E}| = 16$. The average distances $\bfmath{d}$ of all actions in $\vec{E}$ for the goals are as follows: 
\begin{itemize}
	\item $\varphi_0$: {\normalfont\pred{(move 11 21)}} = 4.5, {\normalfont\pred{(changetire 21)}} = 4, {\normalfont\pred{(move 21 31)}} = 3, {\normalfont\pred{(changetire 31)}} = 2.5, {\normalfont\pred{(move 31 41)}} = 1.5, {\normalfont\pred{(changetire 41)}} = 1, {\normalfont\pred{(move 41 51)}} = 0;
	
	\item $\varphi_1$: {\normalfont\pred{(move 11 21)}} = 4.5, {\normalfont\pred{(changetire 21)}} = 4, {\normalfont\pred{(move 21 22)}} = 3, {\normalfont\pred{(changetire 22)}} = 2.5, {\normalfont\pred{(move 22 23)}} = 1.5, {\normalfont\pred{(changetire 23)}} = 1, {\normalfont\pred{(move 23 33)}}: 0; 
	
	\item $\varphi_2$: {\normalfont\pred{(move 11 21)}} = 6, {\normalfont\pred{changetire 21)}} = 5.5, {\normalfont\pred{(move 21 22)}} = 4.5, {\normalfont\pred{(changetire 22)}} = 4, {\normalfont\pred{(move 22 23)}} = 3, {\normalfont\pred{(changetire 23)}} = 2.5, {\normalfont\pred{(changetire 24)}} = 1, {\normalfont\pred{(move 23 24)}} = 1.5, {\normalfont\pred{(move 24 15)}} = 0.
\end{itemize}	

Once we have the average distances $\bfmath{d}$ of the actions in $\vec{E}$ for all goals, we can then compute the \textit{estimated score} for $\mathcal{G}_{\varphi}$ for every observation $o \in Obs$: $o_0 \pred{(move 11 21)}: \varphi_0 = \frac{4.5}{4.5 + 6} = $ 0.43, $\varphi_1 = \frac{4.5}{4.5 + 6} =$ 0.43, $\varphi_2 = \frac{6}{4.5 + 6} =$ 0.57; and $o_1 \pred{(changetire 22)}: \varphi_0 = \frac{e^1 * e^5}{6.5} =$ 61.87, $\varphi_1 = \frac{2.5}{e^5 + 2.5} =$ 0.016, $\varphi_2 = \frac{4}{e^5 + 4} =$ 0.026. Note that for the observation $o_1$, the average distance $\bfmath{d}$ for $\varphi_0$ is $e^5 = 148.4$ because this observation is not an action for one of the executions in the set of executions for this goal ($Obs$ aims to achieve the intended goal $\varphi^{*} = \varphi_1$). Furthermore, the \textit{penalty value} is applied to $\varphi_0$, i.e., $e^1 = 2.71$. We can see that the \textit{estimated score} of the intended goal $\varphi_1$ is always the lowest for all observations $Obs$, especially when we observe the second observation $o_1$. Note that our approach correctly infers the intended goal $\varphi^{*}$, even when observing with just few actions.
\end{examplebold}


\subsubsection*{Computing Posterior Probabilities for $\mathcal{G}_{\varphi}$} To compute the posterior probabilities over the set of possible temporally extended goals $\mathcal{G}_{\varphi}$, we start by  computing the \textit{average estimated score} for every goal $\varphi \in \mathcal{G}_{\varphi}$ for every observation $o \in Obs$, and we formally define this computation as $\E(\varphi, Obs, \mathcal{G}_{\varphi})$, as follows:

\begin{equation}
\label{eq:approach_obs}
\E(\varphi, Obs, \mathcal{G}_{\varphi}) = 
\left(
\frac{\displaystyle\sum_{i=0}^{|Obs|} 
\frac{\mathit{e^{\bfmath{p}(o_{i-1}, o_{i})}} * \bfmath{d}(o_{i}, \varphi)}
{\sum_{\varphi' \in \mathcal{G}_{\varphi}} \bfmath{d}(o_{i}, \varphi')}}
{ |Obs| }
\right)
\end{equation}

The \textit{average estimated score} $\E$ aims to estimate ``how far'' a goal $\varphi$ is to be achieved compared to other goals ($\mathcal{G}_{\varphi} \setminus \{\varphi\}$) \emph{averaging} among all the observations in $Obs$. 
The lower the \textit{average estimated score} $\E$ to a goal $\varphi$, the more likely such a goal is to be the one that the observed agent aims to achieve.
Consequently, $\E$ has two important properties defined in Equation~\ref{eq:approach_obs}, as follows.

\begin{proposition}
Given that the sequence of observations $Obs$ corresponds to an execution $\vec{e} \in \vec{E}$ that aims to achieve the actual intended hidden goal ${\varphi}^{*} \in \mathcal{G}_{\varphi}$, the \textit{average estimated score} outputted by $\E$ will tend to be the lowest for ${\varphi}^{*}$ in comparison to the scores of the other goals ($\mathcal{G}_{\varphi} \setminus \{ {\varphi}^{*} \}$), as observations increase in length.
\end{proposition}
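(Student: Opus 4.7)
The plan is to argue observation-by-observation that the per-step contribution to $\E(\varphi^{*}, Obs, \mathcal{G}_{\varphi})$ is at most that of any other goal once enough observations have been collected, and then average. First I would fix notation: write $s_i(\varphi) = e^{\bfmath{p}(o_{i-1}, o_i)}\cdot \bfmath{d}(o_i, \varphi) / \sum_{\varphi'\in\mathcal{G}_{\varphi}} \bfmath{d}(o_i, \varphi')$, so that $\E(\varphi, Obs, \mathcal{G}_{\varphi}) = \frac{1}{|Obs|}\sum_{i=0}^{|Obs|} s_i(\varphi)$. The denominator is common to all goals at step $i$, so comparing $\E(\varphi^*,\cdot)$ against $\E(\varphi,\cdot)$ reduces to comparing the numerators pointwise and then averaging.

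Next I would establish the key per-step bounds. Since by hypothesis $Obs$ is a prefix of an execution $\vec{e}$ in the execution set $\vec{E}$ of the winning policy for $\varphi^*$, every action $o_i\in Obs$ lies on some $\vec{e}\in\vec{E}$ for $\varphi^*$, and consecutive pairs $\langle o_{i-1},o_i\rangle$ respect the order relation $\R(\vec{e})$. Hence $\bfmath{p}(o_{i-1},o_i)=0$ for $\varphi^*$, so the penalty factor is $e^{0}=1$, and $\bfmath{d}(o_i,\varphi^*)$ is bounded by the (finite) diameter of the policy's execution set, a domain-dependent constant $M$. For any competing goal $\varphi\neq\varphi^*$, one of two things eventually happens as $|Obs|$ grows: either $o_i$ never appears in any execution for $\varphi$, in which case $\bfmath{d}(o_i,\varphi)=e^{5}\gg M$, or $o_i$ appears but with a pair ordering that violates $\R(\vec{e}')$ for every $\vec{e}'$ in $\varphi$'s execution set, in which case the penalty factor $e^{1}$ multiplies the distance. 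In both cases the numerator $s_i(\varphi)\cdot\sum_{\varphi'}\bfmath{d}(o_i,\varphi')$ dominates the corresponding numerator for $\varphi^*$.

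I would then argue about the frequency of such ``disambiguating'' steps. Because $\varphi^*\neq\varphi$, the two goals are distinct \LTLf/\PLTLf formulas with distinct \DFAs, and their winning policies in $\Gamma'$ diverge after some finite prefix (otherwise every execution that satisfies $\varphi^*$ would also satisfy $\varphi$, contradicting distinctness in the relevant part of the trace). Thus, beyond that branching point, each new $o_i$ contributes a term where either $\bfmath{d}(o_i,\varphi)=e^{5}$ or $\bfmath{p}(o_{i-1},o_i)=1$ for $\varphi$, while contributing a bounded term of size at most $M/\sum_{\varphi'}\bfmath{d}(o_i,\varphi')$ for $\varphi^*$. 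Averaging, the fraction of ``bad'' steps for $\varphi$ approaches a strictly positive constant as $|Obs|\to\infty$, so the asymptotic difference $\E(\varphi,Obs,\mathcal{G}_{\varphi})-\E(\varphi^*,Obs,\mathcal{G}_{\varphi})$ is strictly positive in the limit, establishing the claim.

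\paragraph{Main obstacle.}
The delicate point is the word ``tend to'': the statement is not a strict inequality for every $Obs$, because two goals can share an arbitrarily long common execution prefix (consider $\varphi_0$ and $\varphi_1$ in Example~\ref{exemp:recognition}, whose policies agree on the first action \texttt{(move 11 21)}). During such a shared prefix $s_i(\varphi^*)=s_i(\varphi)$, so the cumulative score cannot disambiguate. The hard part of the proof is therefore to formalize ``as observations increase in length'' into a precise asymptotic statement: one must identify a \emph{divergence index} $N(\varphi,\varphi^*)$ beyond which $\varphi$'s executions no longer cover the remainder of $\vec{e}$, and show that for all $|Obs|>N$ the averaged contribution of the post-divergence observations outweighs the shared prefix. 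A secondary subtlety is the normalization by $\sum_{\varphi'}\bfmath{d}(o_i,\varphi')$, which couples the goals together; one must verify that this common factor cannot hide the dominance of the $e^{5}$ and $e^{1}$ terms incurred by the wrong goals, which follows from the observation that these penalty/distance terms enter only the numerator for $\varphi$ but not for $\varphi^*$.
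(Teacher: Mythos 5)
The paper never actually proves this proposition: it is asserted as a property of the \emph{average estimated score} $\E$ with no accompanying argument, so there is no official proof to compare yours against. Taken on its own merits, your sketch is a reasonable formalization attempt. The decomposition into per-step scores over a denominator common to all goals, the observation that $\bfmath{p}(o_{i-1},o_i)=0$ for $\varphi^{*}$ whenever $Obs$ is a (sub)sequence of some $\vec{e}\in\vec{E}$ of $\pi_{\varphi^{*}}$ (so $\varphi^{*}$ never incurs the $e^{1}$ penalty nor the $e^{5}$ default distance), and your ``Main obstacle'' paragraph correctly locate where the real difficulty sits.

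The genuine gap is the divergence step. You argue that since $\varphi\neq\varphi^{*}$ the goals have distinct \DFAs, hence ``their winning policies in $\Gamma'$ diverge after some finite prefix, otherwise every execution that satisfies $\varphi^{*}$ would also satisfy $\varphi$, contradicting distinctness.'' That implication is false, and its consequent is not a contradiction: two semantically distinct temporal goals can both be satisfied along the very same executions, e.g.\ when $\varphi$ is achieved en route to $\varphi^{*}$ or when the two goals admit a common policy. In that case every $o_i$ remains on-path for $\varphi$ with zero penalty, $\bfmath{d}(o_i,\varphi)$ can be equal to or even smaller than $\bfmath{d}(o_i,\varphi^{*})$, the fraction of ``disambiguating'' steps is zero, and your claimed strictly positive limit for $\E(\varphi,Obs,\mathcal{G}_{\varphi})-\E(\varphi^{*},Obs,\mathcal{G}_{\varphi})$ fails. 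This is precisely why the paper's \emph{second} proposition explicitly assumes the goal hypotheses are not sub-goals of one another, while the proposition you are proving omits that hypothesis and hedges with ``will tend to.'' To close the gap you must either import the no-sub-goal assumption (at which point you are essentially proving the second proposition) or weaken the conclusion to a non-strict inequality, and in either case the divergence index $N(\varphi,\varphi^{*})$ you invoke needs to be justified from properties of the execution sets $\vec{E}_\varphi$ rather than from mere distinctness of the formulas.
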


\begin{proposition}
If we restrict the recognition setting and define that the goal hypotheses $\mathcal{G}_{\varphi}$ are not sub-goals of each other, and observe all observations in $Obs$ (i.e., full observability), we will have the intended goal ${\varphi}^{*}$ with the lowest score among all goals, i.e., $\forall \varphi \in \mathcal{G}_{\varphi}$ is the case that $\E({\varphi}^{*}, Obs, \mathcal{G}_{\varphi}) \leq \E(\varphi, Obs, \mathcal{G}_{\varphi})$.	
\end{proposition}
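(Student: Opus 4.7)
The plan is to reduce the comparison $\E(\varphi^{*}, Obs, \mathcal{G}_{\varphi}) \le \E(\varphi, Obs, \mathcal{G}_{\varphi})$ to a per-observation comparison of the numerators in Equation~\ref{eq:approach_obs}. Since the denominator $\sum_{\varphi' \in \mathcal{G}_{\varphi}} \bfmath{d}(o_i, \varphi')$ depends only on the observation index $i$ and on the full set of goal hypotheses (not on the particular $\varphi$ being evaluated), and since both expressions are divided by the same $|Obs|$, it suffices to show that
\[
\sum_{i=0}^{|Obs|} e^{\bfmath{p}^{*}(o_{i-1},o_i)}\,\bfmath{d}(o_i, \varphi^{*}) \;\le\; \sum_{i=0}^{|Obs|} e^{\bfmath{p}^{\varphi}(o_{i-1},o_i)}\,\bfmath{d}(o_i, \varphi)
\]
for every $\varphi \in \mathcal{G}_{\varphi}$, where the superscripts emphasize that $\bfmath{p}$ depends on the candidate goal. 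This single structural inequality is what I would focus on.

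First, I would analyze the left-hand side. Because $Obs$ is, by assumption, the full observation sequence of a successful execution $\vec{e}^{*}$ of some winning policy $\pi^{*}$ for $\varphi^{*}$, every $o_i$ appears in the execution set $\vec{E}^{*}$, so $\bfmath{d}(o_i, \varphi^{*})$ is a finite, ordinary value bounded by the length of the longest simple execution in $\D$ (never triggering the large-constant fallback $e^5$). In addition, every consecutive pair $\langle o_{i-1}, o_i \rangle$ respects the order relation induced by $\vec{e}^{*}$, so $\bfmath{p}^{*}(o_{i-1},o_i) = 0$ at every step and the multiplicative factor $e^{\bfmath{p}^{*}}$ equals $1$ throughout.

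Second, for any competing $\varphi \in \mathcal{G}_{\varphi}\setminus\{\varphi^{*}\}$, I would invoke the ``no sub-goal'' restriction together with full observability. Because $\varphi$ is not a sub-goal of $\varphi^{*}$, the complete state trajectory induced by $\vec{e}^{*}$ does not belong to the execution set of any winning policy for $\varphi$; in particular, the terminal observation $o_{|Obs|-1}$, which drives the \DFA for $\varphi^{*}$ into an accepting state, cannot appear as a final action in any execution $\vec{e}\in \vec{E}^{\varphi}$. Hence at least one observation forces the fallback $\bfmath{d}(o_i, \varphi) = e^{5}$, and the corresponding penalty factor is at least $e^{0}$. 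Combined with the bound of the previous paragraph, a single $e^{5}$ contribution on the right already dominates the sum of finite distances on the left (which is at most $|Obs|$ times the domain-dependent bound on execution length), yielding the desired inequality after dividing both sides by the common denominators and by $|Obs|$.

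The main obstacle I anticipate is making the ``not sub-goals of each other'' hypothesis precise enough to guarantee that at least one observation in $Obs$ lies outside $\vec{E}^{\varphi}$ for each competing $\varphi$. In particular, I would need to rule out the degenerate case where the executions for $\varphi$ share an action-prefix with $\vec{e}^{*}$ that happens to cover all of $Obs$; the no-sub-goal condition, interpreted as ``no execution of $\pi^{*}$ simultaneously satisfies any other $\varphi \in \mathcal{G}_{\varphi}$,'' combined with full observability, is what excludes this case. Once that is cleanly formalized, the remaining numerical step (that the large constant $e^{5}$ dominates the bounded ordinary distances) is immediate from the assumption that $e^{5}$ exceeds the maximal action-to-goal distance in $\D$.
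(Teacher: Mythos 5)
A preliminary remark: the paper states this proposition as an asserted property of $\E$ and gives no proof of it, so there is no paper proof to compare against; what follows is an assessment of your argument on its own terms, and it has genuine gaps.

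The first gap is the reduction itself. Equation~\ref{eq:approach_obs} divides the $i$-th term by $\sum_{\varphi' \in \mathcal{G}_{\varphi}} \bfmath{d}(o_{i}, \varphi')$, a weight that varies with $i$; discarding these weights and comparing the bare sums $\sum_i e^{\bfmath{p}}\bfmath{d}(o_i,\varphi^{*})$ and $\sum_i e^{\bfmath{p}}\bfmath{d}(o_i,\varphi)$ is not a valid reduction, since an inequality between unweighted sums does not imply the corresponding inequality between weighted sums. A term-by-term comparison would survive the weights, but it fails outright: in Example~\ref{exemp:recognition}, $\bfmath{d}(o_0,\varphi_2)=6$ while $\bfmath{d}(o_0,\varphi_0)=4.5$, so if the intended goal were $\varphi_2$ it would have the \emph{larger} contribution on the first observation. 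The proposition can only hold in aggregate over the full sequence, and your proof never engages with the structural fact that makes that plausible, namely that $\bfmath{d}(o_i,\varphi^{*})$ decreases to $0$ along $Obs$ while competitors accumulate fallback distances and penalties on the suffix where executions diverge.

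The second gap is the domination step. Your claim that the final observation of $\vec{e}^{*}$ ``cannot appear as a final action in any execution for $\varphi$'' does not yield the fallback: $\bfmath{d}(o_i,\varphi)=e^5$ is triggered only when $o_i$ occurs in \emph{no} execution in $\vec{E}$ for $\varphi$, not when it fails to occur as a \emph{final} action (in Example~\ref{exemp:recognition}, \pred{(changetire 22)} occurs in executions for both $\varphi_1$ and $\varphi_2$ with small finite distances). So the no-sub-goal hypothesis, as you deploy it, does not yet produce even one $e^5$ term. And even granting one such term, it does not dominate: after normalization it contributes at most $e^{\bfmath{p}}\cdot e^{5}/\sum_{\varphi'}\bfmath{d}(o_i,\varphi') \leq e$, because $e^5$ is itself a summand of the denominator, whereas the left-hand side is a sum of $|Obs|$ terms each of which can be close to $1$; without normalization you would instead need $e^{5} \geq |Obs|\cdot M$ for $M$ the largest finite average distance, which is false for the observation lengths reported in the paper's own datasets. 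Both the source of the $e^5$ term and its claimed dominance therefore need to be rebuilt before the argument goes through.
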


After defining how we compute the \textit{average estimated score} $\E$ for the goals using Equation~\ref{eq:approach_obs}, we can define how our approach tries to maximize the probability of observing a sequence of observations $Obs$ for a given goal $\varphi$, as follows:
\begin{equation}
\centering
\label{eq:posterior_prob}
\mathbb{P}(Obs \mid \varphi) = [1 + \E(\varphi, Obs, \mathcal{G}_{\varphi})]^{-1}
\end{equation}

Thus, by using the \textit{estimated score} in Equation~\ref{eq:posterior_prob}, we can infer that the goals $\varphi \in \mathcal{G}_{\varphi}$ with the lowest \textit{estimated score} will be the most likely to be achieved according to the probability interpretation we propose in Equation~\ref{eq:approach_obs}. For instance, consider the goal recognition problem presented in Example~\ref{exemp:recognition}, and the \textit{estimated scores} we computed for the temporally extended goals $\varphi_0$, $\varphi_1$, and $\varphi_2$ based on the observation sequence $Obs$. From this, we have the following probabilities $\mathbb{P}(Obs \mid \varphi)$ for the goals:

\begin{itemize}
	\item $\mathbb{P}(Obs \mid \varphi_0) = [1 + (31.15)]^{-1} = 0.03$
	\item $\mathbb{P}(Obs \mid \varphi_1) = [1 + (0.216)]^{-1} = 0.82$
	\item $\mathbb{P}(Obs \mid \varphi_2) = [1 + (0.343)]^{-1} = 0.74$
\end{itemize}

After normalizing these computed probabilities using the normalization factor $\eta$\footnote{$\eta = [\sum_{\varphi \in \mathcal{G}_{\varphi}} \mathbb{P}(Obs \mid \varphi) * \mathbb{P}(\varphi)]^{-1}$}, and assuming that the prior probability $\mathbb{P}(\varphi)$ is equal to every goal in the set of goals $\mathcal{G}_{\varphi}$, we can use Equation~\ref{eq:posterior_prob} to compute the posterior probabilities (Equation~\ref{eq:posterior}) for the temporally extended goals $\mathcal{G}_{\varphi}$. We define the \textit{solution} to a recognition problem $\T_{\varphi}$ (Definition~\ref{def:goal_recognition}) as a set of temporally extended goals $\mathcal{G}_{\varphi}^{*}$ with the \textit{maximum probability}, formally: $\mathcal{G}_{\varphi}^{*} = \argmax_{\varphi \in \mathcal{G}_{\varphi}} \mathbb{P}(\varphi \mid Obs)$. Hence, considering the normalizing factor $\eta$ and the probabilities $\mathbb{P}(Obs \mid \varphi)$ computed before, we then have the following posterior probabilities for the goals in Example~\ref{exemp:recognition}: $\mathbb{P}(\varphi_0 \mid Obs) = 0.001$; $\mathbb{P}(\varphi_1 \mid Obs) = 0.524$; and $\mathbb{P}(\varphi_2 \mid Obs) = 0.475$. Recall that in Example~\ref{exemp:recognition}, $\varphi^{*}$ is $\varphi_1$, and according to the computed posterior probabilities, we then have $\mathcal{G}_{\varphi}^{*} = \lbrace \varphi_1 \rbrace$, so our approach yields only the correct intended goal by observing just two observations.

Using the \textit{average distance} $\bfmath{d}$ and the \textit{penalty value} $\bfmath{p}$ allows our approach to disambiguate similar goals during the recognition stage. 
For instance, consider the following possible temporally extended goals: $\varphi_0 = \phi_1 \lUntil \phi_2$ and $\varphi_1 = \phi_2 \lUntil \phi_1$.
Here, both goals have the same formulas to be achieved, i.e., $\phi_1$ and $\phi_2$, but in a different order. 
Thus, even having the same formulas to be achieved, the sequences of their policies' executions are different. Therefore, the average distances are also different, possibly a smaller value for the temporally extended goal that the agent aims to achieve, and the penalty value may also be applied to the other goal if two subsequent observations do not have any order relation in the set of executions for this goal.

\subsubsection*{Computational Analysis}

The most expensive computational part of our recognition approach is computing the policies $\pi$ for the goal hypotheses $\mathcal{G}_{\varphi}$.
Thus, we can say that our approach requires $|\mathcal{G}_{\varphi}|$ calls to an off-the-shelf \FOND planner. Hence, the computational complexity of our recognition approach is linear in the number of goal hypotheses $|\mathcal{G}_{\varphi}|$.
In contrast, to recognize goals and plans in \textit{Classical Planning} settings, the approach of \cite{RamirezG_AAAI2010} requires $2 * |\mathcal{G}|$ calls to an off-the-shelf \textit{Classical} planner.
Concretely,
to compute $\mathbb{P}(Obs \mid G)$, Ramirez and Geffner's approach computes two plans for every goal and based on these two plans, they compute a \textit{cost-difference} between these plans and plug it into a Boltzmann equation. For computing these two plans, this approach requires a non-trivial transformation process that modifies both the domain and problem, i.e., an augmented domain and problem that compute a plan that \textit{complies} with the observations, and another augmented domain and problem to compute a plan that \textit{does not comply} with the observations. Essentially, the intuition of Ramirez and Geffner's approach is that the lower the \textit{cost-difference} for a goal, the higher the probability for this goal, much similar to the intuition of our \textit{estimated score} $\E$.

\section{Experiments and Evaluation}\label{sec:experiments_evaluation}

We now present experiments and evaluations carried out to validate the effectiveness of our recognition approach. We empirically evaluate our approach over thousands of goal recognition problems using well-known \FOND planning domain models with different types of temporally extended goals expressed in \LTLf and \PLTLf.

The source code of our PDDL encoding for \LTLf and \PLTLf goals\footnote{\url{https://github.com/whitemech/FOND4LTLf}} and our temporally extended goal recognition approach\footnote{\url{https://github.com/ramonpereira/goal-recognition-ltlf_pltlf-fond}}, as well as the recognition datasets and results are available on GitHub.

\subsection{Domains, Recognition Datasets, and Setup}

For experiments and evaluation, we use six different \FOND planning domain models, in which most of them are commonly used in the AI Planning community to evaluate \FOND planners~\citep{MyND_MattmullerOHB10,Muise12ICAPSFond}, such as: \textsc{Blocks-World}, \textsc{Logistics}, \textsc{Tidy-up}, \textsc{Tireworld}, \textsc{Triangle-Tireworld}, and \textsc{Zeno-Travel}. 
The domain models involve practical real-world applications, such as navigating, stacking, picking up and putting down objects, loading and unloading objects, loading and unloading objects, and etc. Some of the domains combine more than one of the characteristics we just described, namely, \textsc{Logistics}, \textsc{Tidy-up}~\citep{nebel13_tidyup_aaaiirs}, and \textsc{Zeno-Travel}, which involve navigating and manipulating objects in the environment. In practice, our recognition approach is capable of recognizing not only the set of facts of a goal that an observed agent aims to achieve from a sequence of observations, but also the \textit{temporal order} (e.g., \textit{exact order}) in which the agent aims to achieve this set of facts. For instance, for \textsc{Tidy-up}, is a real-world application domain, in which the purpose is defining planning tasks for a household robot that could assist elder people in smart-home application, our approach would be able to monitor and assist the household robot to achieve its goals in a specific order.

Based on these \FOND planning domain models, we build different recognition datasets: a \textit{baseline} dataset using conjunctive goals ($\phi_1\land \phi_2$) and datasets with \LTLf and \PLTLf goals. 

For the \LTLf datasets, we use three types of goals: 
\begin{itemize}
	\item $\Diamond\phi$, where $\phi$ is a propositional formula expressing that \textit{eventually} $\phi$ will be achieved. This temporal formula is analogous to a conjunctive goal;
	\item $\Diamond(\phi_1 \land \Next(\Diamond\phi_2))$, expressing that $\phi_1$ must hold before $\phi_2$ holds. For instance, we can define a temporal goal that expresses the order in which a set of packages in \textsc{Logistics} domain should be delivered;
	\item $\phi_1 \lUntil \phi_2$: $\phi_1$ must hold \textit{until} $\phi_2$ is achieved. For the \textsc{Tidy-up} domain, we can define a temporal goal that no one can be in the kitchen until the robot cleans the kitchen.
\end{itemize}

For the \PLTLf datasets, we use two types of goals: 
\begin{itemize}
	\item $\phi_1 \land \past \phi_2$, expressing that $\phi_1$ holds and $\phi_2$ held once. For instance, in the \textsc{Blocks-World} domain, we can define a past temporal goal that only allows stacking a set of blocks (\pred{a}, \pred{b}, \pred{c}) once another set of blocks has been stacked (\pred{d}, \pred{e});
	\item $\phi_1 \land (\lnot\phi_2 \Since \phi_3)$, expressing that the formula $\phi_1$ holds and \textit{since} $\phi_3$ held $\phi_2$ was not true anymore. For instance, in \textsc{Zeno-Travel}, we can define a past temporal goal expressing that person$1$ is at city$1$ and since the person$2$ is at city$1$, the aircraft must not pass through city$2$ anymore. 
\end{itemize}

Thus, in total, we have six different recognition data\-sets over the six \FOND planning domains and temporal formulas presented above. Each of these datasets contains hundreds of recognition problems ($\approx$ 390 recognition problems per dataset), such that each recognition problem $\T_{\varphi}$ in these datasets is comprised of a \FOND planning domain model $\D$, an initial state $s_0$, a set of possible goals $\mathcal{G}_{\varphi}$ (expressed in either \LTLf or \PLTLf), the actual intended hidden goal in the set of possible goals $\varphi^{*} \in \mathcal{G}_{\varphi}$, and the observation sequence $Obs$. We note that the set of possible goals $\mathcal{G}_{\varphi}$ contains very similar goals (i.e., $\varphi_0 = \phi_1 \lUntil \phi_2$ and $\varphi_1 = \phi_2 \lUntil \phi_1$), and all possible goals can be achieved from the initial state by a strong-cyclic policy. For instance, for the \textsc{Tidy-up} domain, we define the following \LTLf goals as possible goals $\mathcal{G}_{\varphi}$:

\begin{itemize}
	\item $\varphi_0 = \Diamond (\pred{(wiped desk1)} \land \Next(\Diamond \pred{(on book1 desk1)}))$;
	\item $\varphi_1 = \Diamond (\pred{(on book1 desk1)} \land \Next(\Diamond \pred{(wiped desk1)}))$;
	\item $\varphi_2 = \Diamond (\pred{(on cup1 desk2)} \land \Next(\Diamond \pred{(wiped desk2)}))$;
	\item $\varphi_3 = \Diamond (\pred{(wiped desk2)} \land \Next(\Diamond \pred{(on cup1 desk2)}))$;		
\end{itemize}

Note that some of the goals described above share the same formulas and fluents, but some of these formulas must be achieved in a different order, e.g., $\varphi_0$ and $\varphi_1$, and $\varphi_2$ and $\varphi_3$. We note that the recognition approach we developed in the paper is very accurate in discerning (Table~\ref{tab:gr_results}) the order that the intended goal aims to be achieved based on few observations (executions of the agent in the environment).

As we mentioned earlier in the paper, an observation sequence contains a sequence of actions that represent an execution $\vec{e}$ in the set of possible executions $\vec{E}$ of policy $\pi$ that achieves the actual intended hidden goal $\varphi^{*}$, and as we stated before, this observation sequence $Obs$ can be full or partial. To generate the observations $Obs$ for $\varphi^{*}$ and build the recognition problems, we extract strong-cyclic policies using different \FOND planners, such as PRP and MyND.
A full observation sequence represents an execution (a sequence of executed actions) of a strong-cyclic policy that achieves the actual intended hidden goal $\varphi^{*}$, i.e., 100\% of the actions of $\vec{e}$ being observed. 
A partial observation sequence is represented by a sub-sequence of actions of a full execution that aims to achieve the actual intended hidden goal $\varphi^{*}$ (e.g., an execution with ``missing'' actions, due to a sensor malfunction). In our recognition datasets, we define four levels of observability for a partial observation sequence: 10\%, 30\%, 50\%, or 70\% of its actions being observed. 
For instance, for a full observation sequence $Obs$ with 10 actions (100\% of observability), a corresponding partial observations sequence with 10\% of observability would have only one observed action, and for 30\% of observability three observed actions, and so on for the other levels of observability.

We ran all experiments using PRP~\citep{Muise12ICAPSFond} planner with a single core of a 12 core Intel(R) Xeon(R) CPU E5-2620 v3 @ 2.40GHz with 16GB of RAM, set a maximum memory usage limit of 8GB, and set a 10-minute timeout for each recognition problem.
We note that we are unable to provide a \textit{direct comparison} of our approach against existing recognition approaches in the literature because most of these approaches perform a non-trivial process that transforms a recognition problem into planning problems to be solved by a planner~\citep{RamirezG_AAAI2010,Sohrabi_IJCAI2016}. Even adapting such a transformation to work in \FOND settings with temporally extended goals, we cannot guarantee that it will work properly in the problem setting we propose in this paper.

\subsection{Evaluation Metrics}

We evaluate our goal recognition approach using widely known metrics in the \textit{Goal and Plan Recognition} literature~\citep{RamirezG_IJCAI2009,vered2016online,PereiraOM_AIJ_2020}.
To evaluate our approach in the \textit{Offline Keyhole Recognition} setting, we use four metrics, as follows:

\begin{itemize}
	\item \textit{True Positive Rate} (\textit{TPR}) measures the fraction of times that the intended hidden goal $\varphi^{*}$ was correctly recognized, e.g., the percentage of recognition problems that our approach correctly recognized the intended goal. A \textbf{higher} \textit{TPR} indicates better accuracy, measuring how often the intended hidden goal had the highest probability $P(\varphi \mid Obs)$ among the possible goals. \textit{TPR} (Equation~\ref{eq:tpr}) is the ratio between true positive results\footnote{\textit{True positive results} represent the number of goals that has been recognized correctly.}, and the sum of true positive and false negative results\footnote{\textit{False negative results} represent the number of correct goals that has not been recognized.};
	\begin{equation}
	\label{eq:tpr}
	TPR = \frac{TP}{TP + FN} = 1 - FNR
	\end{equation}
	
	\item \textit{False Positive Rate} (\textit{FPR}) is a metric that measures how often goals other than the intended goal are recognized (wrongly) as the intended ones. A \textbf{lower} \textit{FPR} indicates better accuracy. \textit{FPR} is the ratio between false positive results\footnote{\textit{False positive results} are the number of incorrect goals that has been recognized as the correct ones.}, and the sum of false positive and true negative results\footnote{\textit{True negative results} represent the number of incorrect goals has been recognized correctly as the incorrect ones.};
	\begin{equation}
	\label{eq:fpr}
	FPR = \frac{FP}{FP + TN}
	\end{equation}
	
	\item \textit{False Negative Rate} (\textit{FNR}) aims to measure the fraction of times in which the intended correct goal was recognized incorrectly. A \textbf{lower} \textit{FNR} indicates better accuracy. \textit{FNR} (Equation~\ref{eq:fnr}) is the ratio between false negative results and the sum of false negative and true positive results;
	\begin{equation}
	\label{eq:fnr}
	FNR = \frac{FN}{FN + TP} = 1 - TPR
	\end{equation}
	
	\item \textit{F1-Score} (Equation~\ref{eq:f1}) is the harmonic mean of precision and sensitivity (i.e., \textit{TPR}), representing the trade-off between true positive and false positive results. The \textbf{highest possible value} of an \textit{F1-Score} is 1.0, indicating perfect precision and sensitivity, and the \textbf{lowest possible value} is 0. Thus, \textbf{higher} \textit{F1-Score} values indicate better accuracy.
	\begin{equation}
	\label{eq:f1}
	F1-Score = \frac{2*TP}{2TP + FP + FN}
	\end{equation}
\end{itemize}

In contrast, to evaluate our approach in the \textit{Online Recognition} setting, we use the following metric:

\begin{itemize}
    \item \textit{Ranked First} is a metric that measures the number of times the intended goal hypothesis $\varphi^{*}$ has been correctly ranked first as the most likely intended goal, and \textbf{higher} values for this metric indicate better accuracy for performing online recognition.
\end{itemize}

In addition to the metrics mentioned above, we also evaluate our recognition approach in terms of \textit{recognition time} (\textit{Time}), which is the average time in seconds to perform the recognition process (including the calls to a \FOND planner);

\subsection{Offline Keyhole Recognition Results}

We now assess how accurate our recognition approach is in the \textit{Keyhole Recognition} setting.
Table~\ref{tab:gr_results} shows three inner tables that summarize and aggregate the average results of all the six datasets for four different metrics, such as \textit{Time}, \textit{TPR}, \textit{FPR}, and \textit{FNR}.
$|\mathcal{G}_{\varphi}|$ represents the average number of goals in the datasets, and $|Obs|$ the average number of observations. 
Each row in these inner tables represents the observation level, varying from 10\% to 100\%. Figure~\ref{fig:f1-comparison} shows the performance of our approach by comparing the results using \textit{F1-Score} for the six types of temporal formulas we used for evaluation. Table~\ref{tab:gr_results_separately} shows in much more detail the results for each of the six datasets we used for evaluating of our recognition approach.

\subsubsection*{Offline Results for Conjunctive and Eventuality Goals}

The first inner table shows the average results comparing the performance of our approach between conjunctive goals and temporally extended goals using the \textit{eventually} temporal operator $\Diamond$. We refer to this comparison as the \textit{baseline} since these two types of goals have the same semantics. We can see that the results for these two types of goals are very similar for all metrics. Moreover, it is also possible to see that our recognition approach is very accurate and performs well at all levels of observability, yielding high \textit{TPR} values and low \textit{FPR} and \textit{FNR} values for more than 10\% of observability. Note that for 10\% of observability, and \LTLf goals for $\Diamond\varphi$, the \textit{TPR} average value is 0.74, and it means for 74\% of the recognition problems our approach recognized correctly the intended temporally extended goal when observing, on average, only 3.85 actions. Figure~\ref{fig:f1-baseline} shows that our approach yields higher \textit{F1-Score} values (i.e., greater than 0.79) for these types of formulas when dealing with more than 50\% of observability.

\subsubsection*{Offline Results for \LTLf Goals} 

Regarding the results for the two types of \LTLf goals (second inner table), it is possible to see that our approach shows to be accurate for all metrics at all levels of observability, apart from the results for 10\% of observability for \LTLf goals in which the formulas must be recognized in a certain order. Note that our approach is accurate even when observing just a few actions (2.1 for 10\% and 5.4 for 30\%), but not as accurate as for more than 30\% of observability.
Figure~\ref{fig:f1-ltl} shows that our approach yields higher \textit{F1-Score} values (i.e., greater than 0.75) when dealing with more than 30\% of observability. 

\subsubsection*{Offline Results for \PLTLf Goals}

Finally, as for the results for the two types of \PLTLf goals, it is possible to observe in the last inner table that the overall average number of observations $|Obs|$ is less than the average for the other datasets, making the task of goal recognition more difficult for the \PLTLf datasets. Yet, we can see that our recognition approach remains accurate when dealing with fewer observations. We can also see that the values of \textit{FNR} increase for low observability, but the \textit{FPR} values are, on average, inferior to $\approx$ 0.15. Figure~\ref{fig:f1-pltl} shows that our approach gradually increases the \textit{F1-Score} values when also increases the percentage of observability.

\begin{figure*}[!ht]
	\centering
	\begin{subfigure}[b]{0.32\textwidth}
 	    \includegraphics[width=\textwidth]{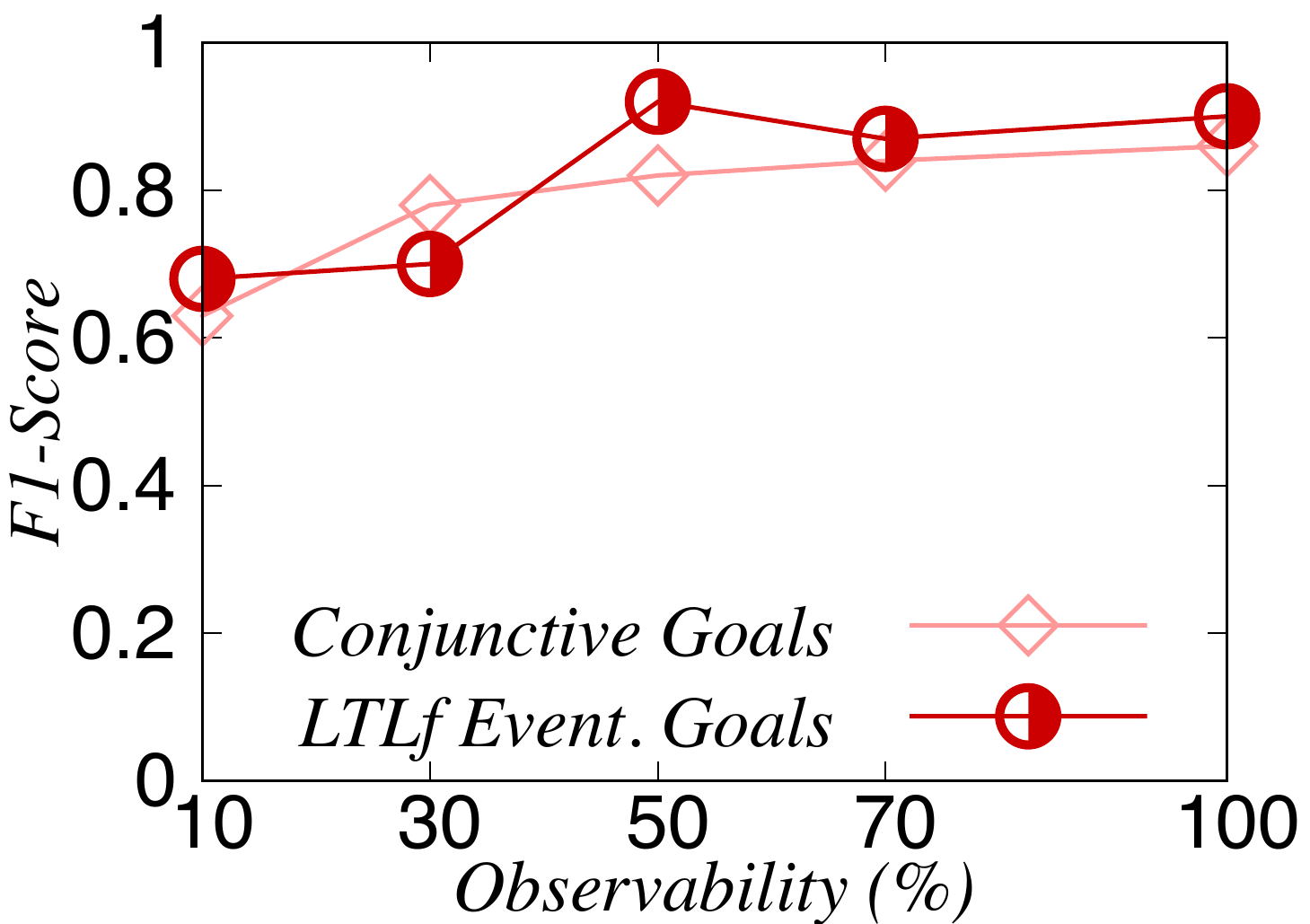}
		\caption{Conj. \textit{vs.} \LTLf Event.}
		\label{fig:f1-baseline}
	\end{subfigure}
	~
	\begin{subfigure}[b]{0.32\textwidth}
		\includegraphics[width=\textwidth]{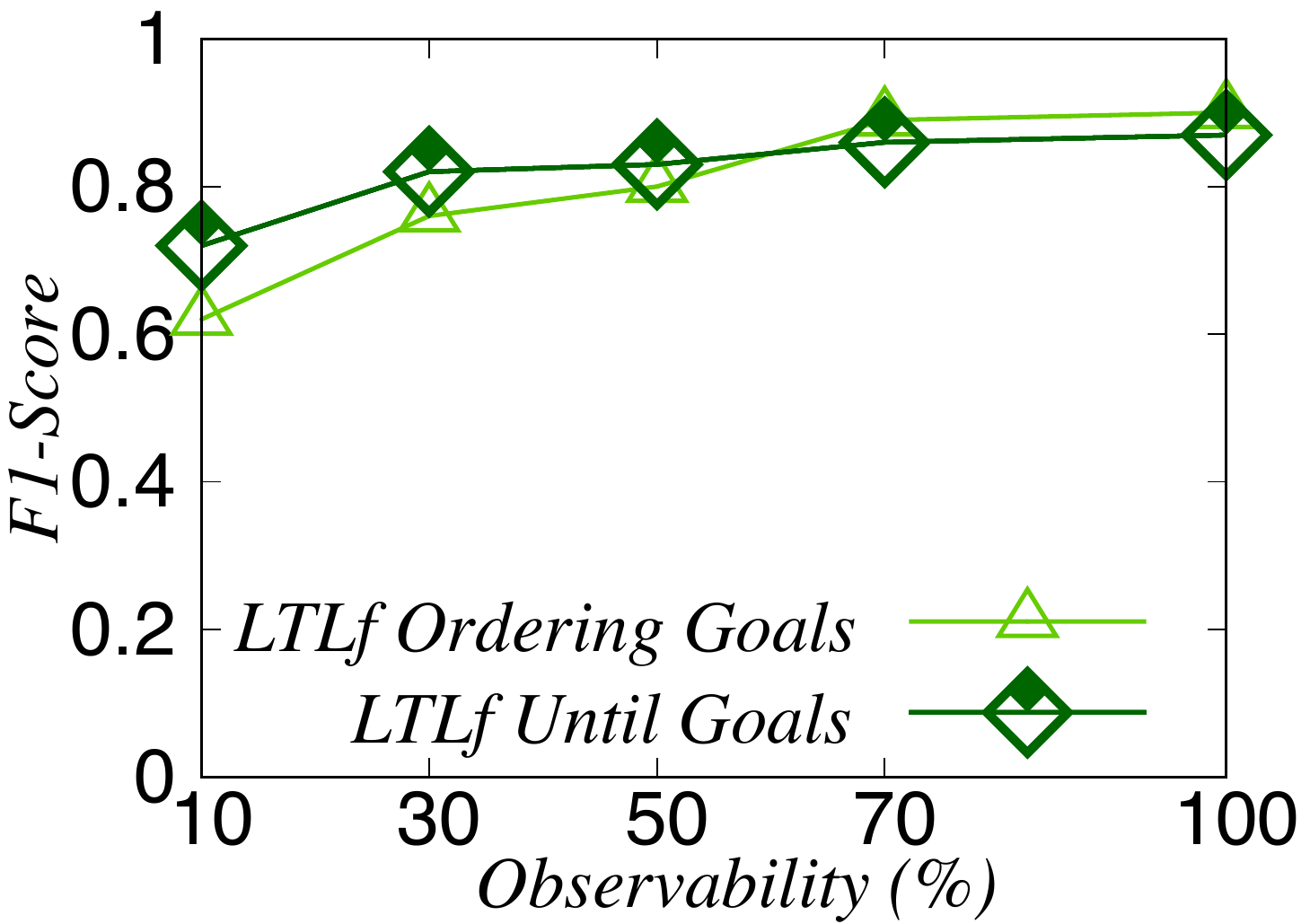}
		\caption{\LTLf: Ordering \textit{vs.} Until.}
		\label{fig:f1-ltl}
	\end{subfigure}
	~
	\begin{subfigure}[b]{0.32\textwidth}
		\includegraphics[width=\textwidth]{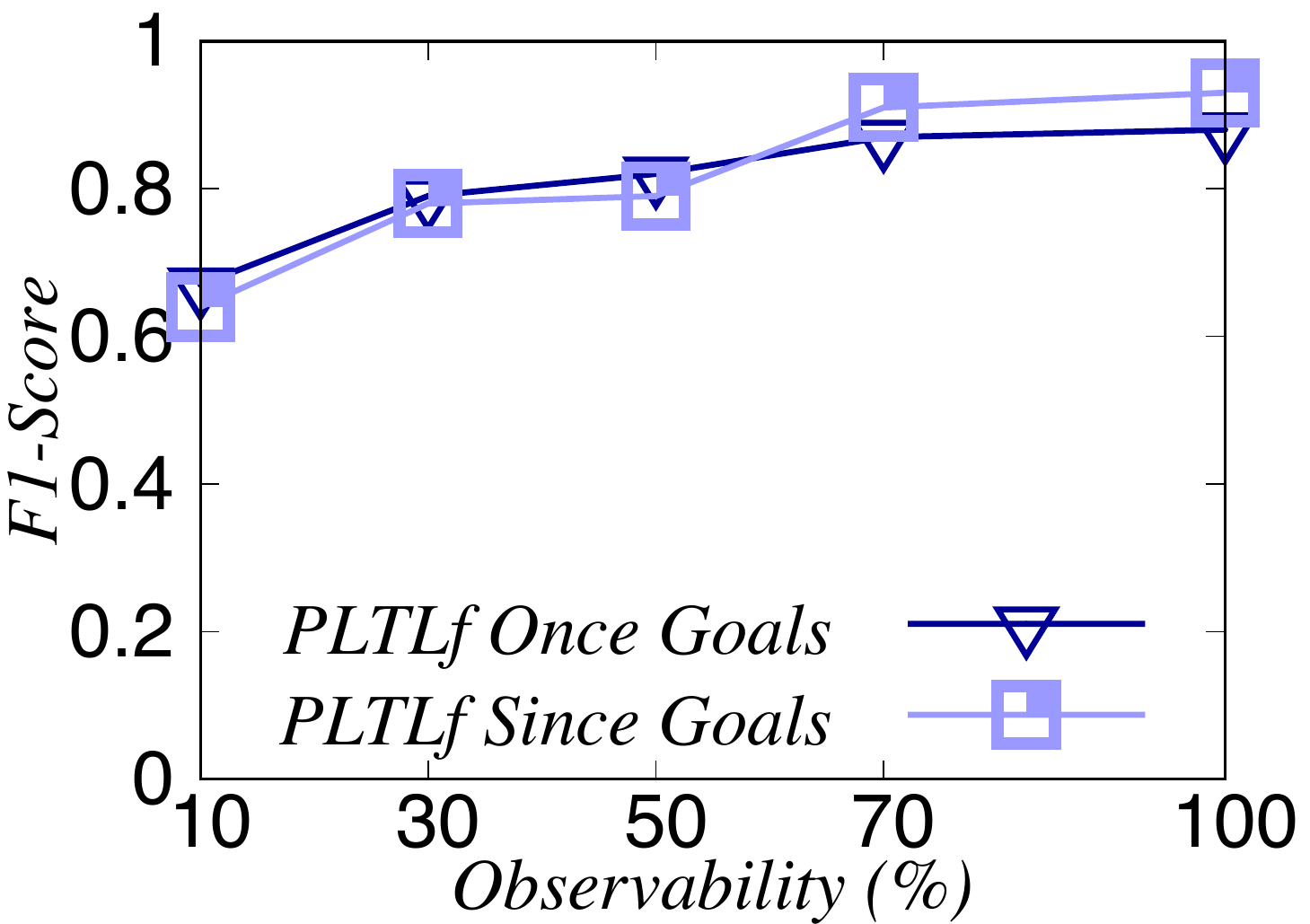}
		\caption{\PLTLf: Once \textit{vs.} Since.}
		\label{fig:f1-pltl}
	\end{subfigure}

	\caption{F1-Score comparison.}
	\label{fig:f1-comparison}
\end{figure*}

\begin{table*}[!ht]
\centering
\fontsize{10}{11}\selectfont
\setlength\tabcolsep{5pt}
\begin{tabular}{ccccccccccccccc}
\toprule
    &                      &         &  & \multicolumn{5}{c}{\begin{tabular}[c]{@{}c@{}} Conjunctive Goals \\ $\phi_1 \land \phi_2$ \end{tabular}} &  & \multicolumn{5}{c}{\begin{tabular}[c]{@{}c@{}} \LTLf Eventuality Goals \\ $\Diamond\phi$ \end{tabular}} \\
    \hline
    & $|\mathcal{G}_{\phi}|$            & $|Obs|$ &  & \textit{Time}    & \textit{TPR}     & \textit{FPR}     & \textit{FNR}	& \textit{F1-Score}     &  & \textit{Time}      & \textit{TPR}      & \textit{FPR}      & \textit{FNR}   & \textit{F1-Score}   \\ \hline
10  & \multirow{5}{*}{5.2} & 3.85     &  & 189.1     & 0.75    & 0.15    & 0.25  &  0.63  &  & 243.8       & 0.74     & 0.11     & 0.26  &  0.60 \\
30  &                      & 10.7     &  & 187.2     & 0.85    & 0.08    & 0.15  &  0.78  &  & 235.1       & 0.86     & 0.10     & 0.14  &  0.78 \\
50  &                      & 17.4     &  & 188.4     & 0.83    & 0.09    & 0.17  &  0.82  &  & 242.1       & 0.89     & 0.07     & 0.11  &  0.92 \\
70  &                      & 24.3     &  & 187.8     & 0.86    & 0.08    & 0.14  &  0.84  &  & 232.1       & 0.92     & 0.08     & 0.08  &  0.87 \\
100 &                      & 34.7     &  & 190.4     & 0.85    & 0.09    & 0.15  &  0.86  &  & 272.8       & 0.95     & 0.09     & 0.05  & 0.90 \\
\bottomrule
\end{tabular}

\begin{tabular}{ccccccccccccccc}
\toprule
    &                      &         &  & \multicolumn{5}{c}{\begin{tabular}[c]{@{}c@{}} \LTLf Ordering Goals \\ $\Diamond(\phi_1 \land \Next(\Diamond\phi_2))$ \end{tabular}} &  & \multicolumn{5}{c}{\begin{tabular}[c]{@{}c@{}} \LTLf Goals Until \\ $\phi_1 \lUntil \phi_2$ \end{tabular}} \\
    \hline
    & $|\mathcal{G}_{\phi}|$            & $|Obs|$ &  & \textit{Time}    & \textit{TPR}     & \textit{FPR}     & \textit{FNR}	& \textit{F1-Score}     &  & \textit{Time}      & \textit{TPR}      & \textit{FPR}      & \textit{FNR}   & \textit{F1-Score}   \\ \hline
10  & \multirow{5}{*}{4.0} & 2.1     &  & 136.1     & 0.68    & 0.15    & 0.32  & 0.62 &  & 217.9       & 0.79     & 0.11     & 0.21   &  0.72 \\
30  &                      & 5.4     &  & 130.9     & 0.84    & 0.13    & 0.16  & 0.76 &  & 215.8       & 0.91     & 0.12     & 0.09   &  0.82 \\
50  &                      & 8.8     &  & 132.1     & 0.88    & 0.10    & 0.12  & 0.80 &  & 210.1       & 0.93     & 0.10     & 0.07   &  0.83 \\
70  &                      & 12.5    &  & 129.2     & 0.95    & 0.06    & 0.05  & 0.89 &  & 211.5       & 0.97     & 0.09     & 0.03   &  0.86 \\
100 &                      & 17.1    &  & 126.6     & 0.94    & 0.05    & 0.06  & 0.90 &  & 207.7       & 0.97     & 0.07     & 0.03   &  0.87 \\
\bottomrule
\end{tabular}

\begin{tabular}{ccccccccccccccc}
\toprule
    &                      &         &  & \multicolumn{5}{c}{\begin{tabular}[c]{@{}c@{}} \PLTLf Goals Once \\ $\phi_1 \land \past \phi_2$ \end{tabular}} &  & \multicolumn{5}{c}{\begin{tabular}[c]{@{}c@{}} \PLTLf Goals Since \\ $\phi_1 \land (\lnot\phi_2 \Since \phi_3)$ \end{tabular}} \\
    \hline
    & $|\mathcal{G}_{\phi}|$            & $|Obs|$ &  & \textit{Time}    & \textit{TPR}     & \textit{FPR}     & \textit{FNR}	& \textit{F1-Score}     &  & \textit{Time}      & \textit{TPR}      & \textit{FPR}      & \textit{FNR}   & \textit{F1-Score}   \\ \hline
10  & \multirow{5}{*}{4.0} & 1.7     &  & 144.8     & 0.73    & 0.11    & 0.27  & 0.67 &  & 173.5     & 0.76    & 0.18    & 0.24  & 0.64 \\
30  &                      & 4.6     &  & 141.3     & 0.84    & 0.07    & 0.16  & 0.79 &  & 173.3     & 0.87    & 0.12    & 0.13  & 0.78 \\
50  &                      & 7.3     &  & 141.9     & 0.89    & 0.08    & 0.11  & 0.82 &  & 172.9     & 0.85    & 0.09    & 0.15  & 0.79 \\
70  &                      & 10.3    &  & 142.9     & 0.95    & 0.07    & 0.05  & 0.87 &  & 171.1     & 0.97    & 0.07    & 0.03  & 0.91 \\
100 &                      & 14.2    &  & 155.8     & 0.97    & 0.07    & 0.03  & 0.88 &  & 169.3     & 0.94    & 0.02    & 0.06  & 0.93 \\
\bottomrule
\end{tabular}

\caption{Offline Recognition results for Conjunctive, \LTLf, and \PLTLf goals.}
\label{tab:gr_results}
\end{table*}

\subsection{Online Recognition Results}

With the experiments and evaluation in the \textit{Keyhole Offline} recognition setting in place, we now proceed to present the experiments and evaluation in the \textit{Online} recognition setting.
As before, performing the recognition task in the \textit{Online} recognition setting is usually harder than in the offline setting, as the recognition task has to be performed incrementally and gradually, and we see to the observations step-by-step, rather than performing the recognition task by analyzing all observations at once, as in the offline recognition setting. 

\begin{figure}[!ht]
	\centering 	\includegraphics[width=0.75\linewidth]{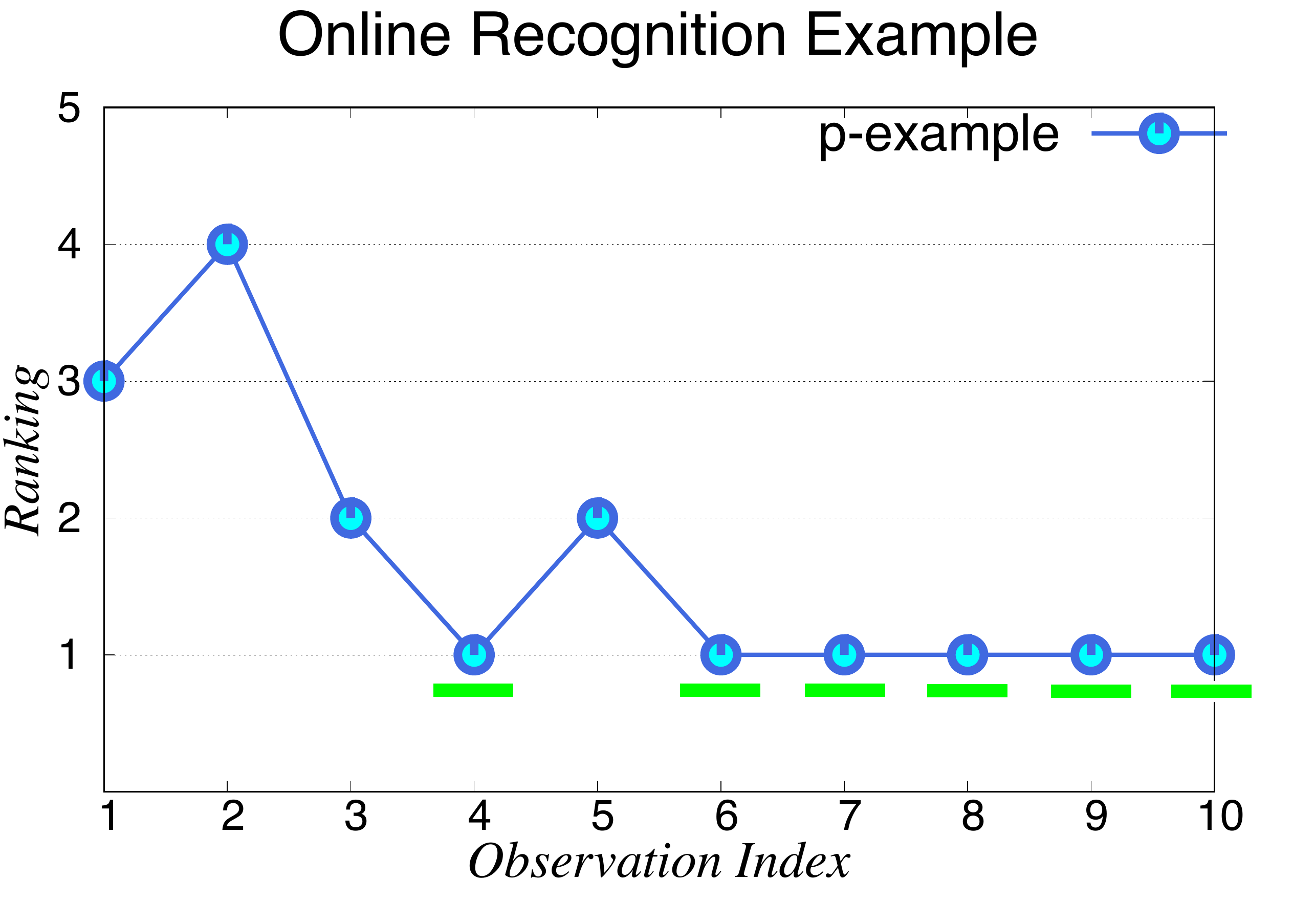}
	\caption{Online Recognition example.}
	\label{fig:onlie_recognition-example}
\end{figure}

\begin{figure}[!ht]
	\centering 	\includegraphics[width=0.8\linewidth]{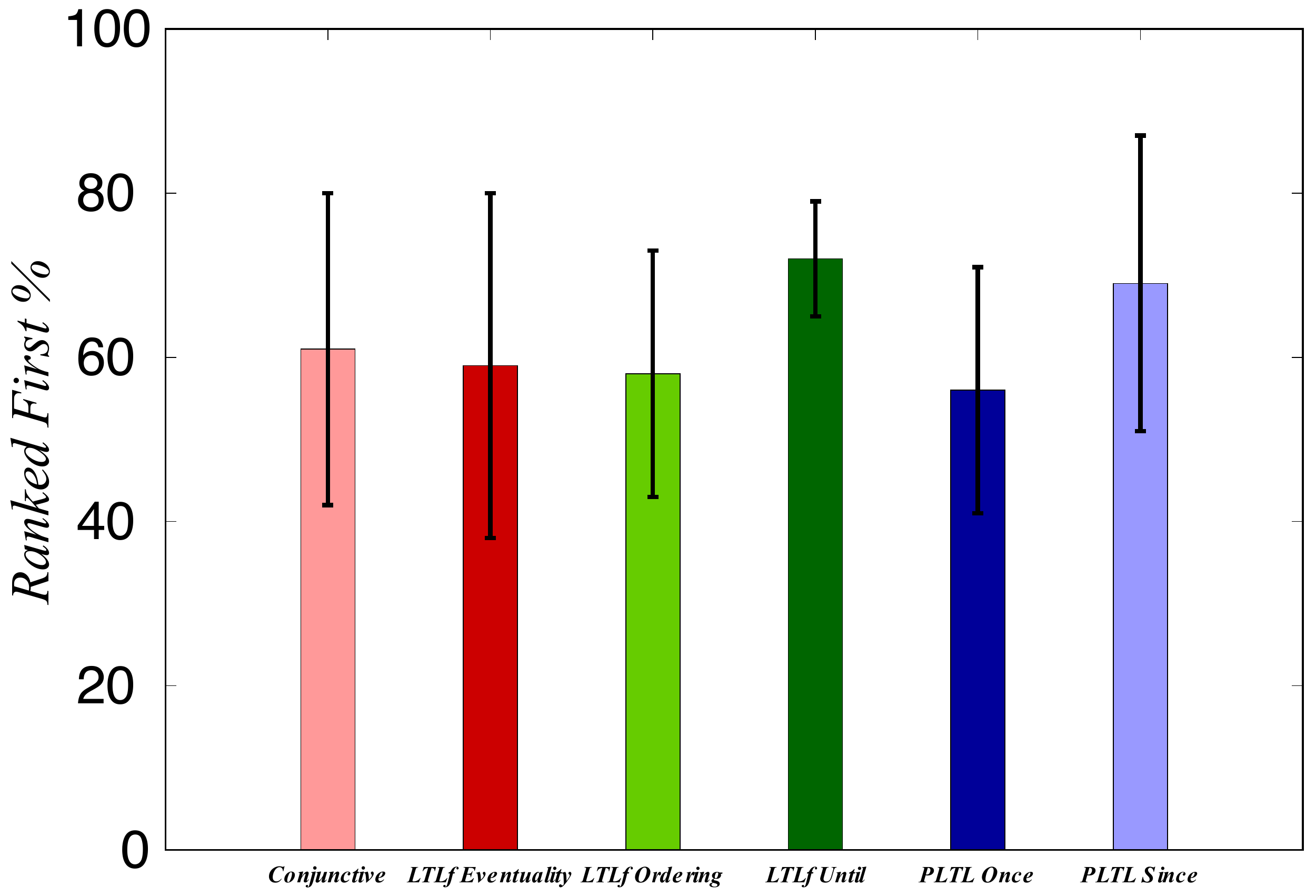}
	\caption{Online Recognition Histogram.}
	\label{fig:recognition_histogram}
\end{figure}

Figure~\ref{fig:onlie_recognition-example} exemplifies how we evaluate our approach in the \textit{Online} recognition setting. To do so, we use the \textit{Ranked First} metric, which measures how many times over the observation sequence the correct intended goal $\varphi^{*}$ has been ranked first as the \textit{top-1} goal over the goal hypotheses $\mathcal{G}_{\varphi}$.	
The recognition problem example depicted in Figure~\ref{fig:onlie_recognition-example} has five goal hypotheses (y-axis), and ten actions in the observation sequence (x-axis). As stated before, the recognition task in the \textit{Online} setting is done gradually, step-by-step, so at every step our approach essentially ranks the goals according to the probability distribution over the goal hypotheses $\mathcal{G}_{\varphi}$.
We can see that in the example in Figure~\ref{fig:onlie_recognition-example} the correct goal $\varphi^{*}$ is \textit{Ranked First} six times (at the observation indexes: 4, 6, 7, 8, 9, and 10) over the observation sequence with ten observation, so it means that the goal correct intended goal $\mathcal{G}_{\varphi}$ is \textit{Ranked First} (i.e., as the \textit{top-1}, with the highest probability among the goal hypotheses $\mathcal{G}_{\varphi}$) 60\% of the time in the observation sequence for this recognition example.

We aggregate the average recognition results of all the six datasets for the \textit{Ranked First} metric as a histogram, by considering full observation sequences that represent executions (sequences of executed actions) of strong-cyclic policies that achieves the actual intended goal $\varphi^{*}$, and we show such results in Figure~\ref{fig:recognition_histogram}. 
The results represent the overall percentage (including the standard deviation -- black bars) of how many times the of time that
the correct intended goal $\varphi^{*}$ has been ranked first over the observations. 
The average results indicated our approach to be in general accurate to recognize correctly the \textit{temporal order} of the facts in the goals in the \textit{Online} recognition setting, yielding \textit{Ranked First} percentage values greater than 58\%.

Figures~\ref{fig:blocks_ranking}, \ref{fig:logistics_ranking}, \ref{fig:tidyup_ranking}, \ref{fig:tireworld_ranking}, \ref{fig:tidyup_ranking}, \ref{fig:fig:triangle_tireworld_ranking}, and \ref{fig:zenotravel_ranking} shows the \textit{Online} recognition results separately for all six domains models and the different types of temporally extended goals. 
By analyzing the \textit{Online} recognition results more closely, we see that our approach converges to rank the correct goal as the \textit{top-1} mostly after a few observations. 
This means that it is commonly hard to disambiguate among the goals at the beginning of the execution, which, in turn, directly affects the overall \textit{Ranked First} percentage values (as we can see in Figure~\ref{fig:recognition_histogram}). 
We can observe our approach struggles to disambiguate and recognize correctly the intended goal for some recognition problems and some types of temporal formulas. Namely, our approach has struggled to disambiguate when dealing with \LTLf Eventuality goals in \textsc{Blocks-World} (see Figure~\ref{fig:blocks-ltl0}), for most temporal extended goals in \textsc{Tidy-Up} (see Figure~\ref{fig:tidyup_ranking}), and for \LTLf Eventuality goals in \textsc{Zeno-Travel} (see Figure~\ref{fig:zenotravel-ltl0}).

\section{Related Work and Discussion}\label{sec:related_work}

To the best of our knowledge, existing approaches to \textit{Goal and Plan Recognition as Planning} cannot explicitly recognize temporally extended goals in non-deter\-ministic environments. Seminal and recent work on \textit{Goal Recognition as Planning} relies on deterministic planning techniques~\citep{RamirezG_IJCAI2009,Sohrabi_IJCAI2016,PereiraOM_AIJ_2020} for recognizing conjunctive goals.
By contrast, we propose a novel problem formalization for goal recognition, addressing temporally extended goals (\LTLf or \PLTLf goals) in \FOND planning domain models.
While our probabilistic approach relies on the probabilistic framework of \cite{RamirezG_AAAI2010}, we address the challenge of computing $\mathbb{P}(Obs \mid G)$ in a completely different way. 

There exist different techniques to \textit{Goal and Plan Recognition} in the literature, including approaches that rely on plan libraries~\citep{AvrahamiZilberbrand2005}, context-free grammars~\citep{Geib_PPR_AIJ2009}, and Hierarchical Task Network (HTN)~\citep{HollerHTNRec_2018}. 
Such approaches rely on hierarchical structures that represent the knowledge of how to achieve the possible goals, and this knowledge can be seen as potential strategies for achieving the set of possible goals. 
Note that the temporal constraints of temporally extended goals can be adapted and translated to such hierarchical knowledge. 
For instance, context-free grammars are expressive enough to encode temporally extended goals~\citep{Chiari2020}. 
\LTLf has the expressive power of the star-free fragment of regular expressions and hence captured by context-free grammars. 
However, unlike regular expressions, \LTLf uses negation and conjunction liberally, and the translation to regular expression is computationally costly. 
Note, being equally expressive is not a meaningful indication of the complexity of transforming one formalism into another. 
\cite{ijcai2020surveyddfr} show that, while \LTLf and \PLTLf have the same expressive power, the best translation techniques known are worst-case 3EXPTIME. 

As far as we know, there are no encodings of \LTLf-like specification languages into HTN, and its difficulty is unclear. 
Nevertheless, combining HTN and \LTLf could be interesting for further study.
HTN techniques focus on the knowledge about the decomposition property of traces, whereas \LTLf-like solutions focus on the knowledge about dynamic properties of traces, similar to what is done in verification settings.

Most recently, \cite{ICAPS_2023_PLTL_Planning} develop a novel Pure-Past Linear Temporal Logic PDDL encoding for planning in the \textit{Classical Planning} setting. 

\section{Conclusions}\label{sec:conclusions}

We have introduced a novel problem formalization for recognizing \textit{temporally extended goals}, specified in either \LTLf or \PLTLf, in \FOND planning domain models. 
We have also developed a novel probabilistic framework for goal recognition in such settings, and implemented a compilation of temporally extended goals that allows us to reduce the problem of \FOND planning for \LTLf/\PLTLf goals to standard \FOND planning.
We have shown that our recognition approach yields high accuracy for recognizing temporally extended goals (\LTLf/\PLTLf) in different recognition settings (\textit{Keyhole Offline} and \textit{Online} recognition) at several levels of observability. 

As future work, we intend to extend and adapt our recognition approach for being able to deal with spurious (noisy) observations, and recognize not only the temporal extended goals but also anticipate the policy that the agent is executing to achieve its goals.


\begin{acknowledgements}
This work has been partially supported by the EU H2020 project AIPlan4EU (No. 101016442), the ERC Advanced Grant WhiteMech (No. 834228), the EU ICT-48 2020 project TAILOR (No. 952215), the PRIN project RIPER (No. 20203FFYLK), and the PNRR MUR project FAIR (No. PE0000013).
\end{acknowledgements}


\afterpage{
\begin{landscape}
	
\begin{table}[!ht]
\centering
\fontsize{5.2}{6}\selectfont
\setlength\tabcolsep{1pt}
\begin{tabular}{cccccccccccccclcccccclcccccclcccccclcccccc}
\toprule	
	\multicolumn{42}{c}{\begin{tabular}[c]{@{}c@{}} \textsc{Blocks-World} \end{tabular}}	\\ \hline
    & \multicolumn{6}{c}{\begin{tabular}[c]{@{}c@{}} Conjunctive Goals \\ $\phi_1 \land \phi_2$ \end{tabular}}                  &  
	& \multicolumn{6}{c}{\begin{tabular}[c]{@{}c@{}} \LTLf Eventuality Goals \\ $\Diamond\phi$ \end{tabular}}                    &  
	& \multicolumn{6}{c}{\begin{tabular}[c]{@{}c@{}} \LTLf Ordering \\ $\Diamond(\phi_1 \land \Next(\Diamond\phi_2))$ \end{tabular}}                       & \multicolumn{1}{c}{} 
	& \multicolumn{6}{c}{\begin{tabular}[c]{@{}c@{}} \LTLf Goals Until \\ $\phi_1 \lUntil \phi_2$ \end{tabular}}                          & \multicolumn{1}{c}{} 
	& \multicolumn{6}{c}{\begin{tabular}[c]{@{}c@{}} \PLTLf Goals Once \\ $\phi_1 \land \past \phi_2$ \end{tabular}}                         & \multicolumn{1}{c}{} 
	& \multicolumn{6}{c}{\begin{tabular}[c]{@{}c@{}} \PLTLf Goals Since \\ $\phi_1 \land (\lnot\phi_2 \Since \phi_3)$ \end{tabular}}                          \\
	\cline{2-7} \cline{9-14} \cline{16-21} \cline{23-28} \cline{30-35} \cline{37-42}
    & $|\mathcal{G}_{\varphi}|$ & $|Obs|$ & \textit{Time} & \textit{TPR}  & \textit{FPR}  & \textit{FNR}  &  
	& $|\mathcal{G}_{\varphi}|$ & $|Obs|$ & \textit{Time} & \textit{TPR}  & \textit{FPR}  & \textit{FNR}  &  
	& $|\mathcal{G}_{\varphi}|$ & $|Obs|$ & \textit{Time} & \textit{TPR}  & \textit{FPR}  & \textit{FNR}  &                      
	& $|\mathcal{G}_{\varphi}|$ & $|Obs|$ & \textit{Time} & \textit{TPR}  & \textit{FPR}  & \textit{FNR}  &                      
	& $|\mathcal{G}_{\varphi}|$ & $|Obs|$ & \textit{Time} & \textit{TPR}  & \textit{FPR}  & \textit{FNR}  &                      
	& $|\mathcal{G}_{\varphi}|$ & $|Obs|$ & \textit{Time} & \textit{TPR}  & \textit{FPR}  & \textit{FNR}  \\
	\cline{2-7} \cline{9-14} \cline{16-21} \cline{23-28} \cline{30-35} \cline{37-42}

10  
	& \multirow{5}{*}{6.0} & 3.92 & 33.53 & 0.81 & 0.10 & 0.19 &  
	& \multirow{5}{*}{6.0} & 3.92 & 81.16 & 0.81 & 0.06 & 0.19 &  
	& \multirow{5}{*}{4.0} & 1.33 & 38.05 & 0.67 & 0.10 & 0.33 &                      
	& \multirow{5}{*}{4.0} & 1.00 & 342.01 & 0.72 & 0.08 & 0.28 &                      
	& \multirow{5}{*}{4.0} & 1.17 & 38.97 & 0.89 & 0.08 & 0.11 &                      
	& \multirow{5}{*}{4.0} & 1.00 & 50.98 & 0.89 & 0.14 & 0.11 \\

30  &                      
	& 10.33 & 33.91 & 0.86 & 0.09 & 0.14 &  &                      
	& 10.58 & 69.65 & 0.81 & 0.10 & 0.19 &  &                      
	& 3.17 & 39.49 & 0.83 & 0.07 & 0.17 &                      &                      
	& 2.83 & 357.48 & 0.94 & 0.01 & 0.06 &                      &                      
	& 3.17 & 39.15 & 1.00 & 0.03 & 0.00 &                      &                      
	& 2.83 & 53.42 & 1.00 & 0.00 & 0.00 \\

50  &                      
	& 16.67 & 33.90 & 0.75 & 0.14 & 0.25 &  &                      
	& 17.08 & 67.16 & 0.72 & 0.09 & 0.28 &  &                      
	& 4.67 & 39.63 & 0.89 & 0.07 & 0.11 &                      &                      
	& 3.83 & 349.43 & 1.00 & 0.00 & 0.00 &                      &                      
	& 4.67 & 38.32 & 1.00 & 0.06 & 0.00 &                      &                      
	& 3.83 & 50.67 & 1.00 & 0.00 & 0.00 \\

70  &                      
	& 23.58 & 33.98 & 0.75 & 0.13 & 0.25 &  &                      
	& 23.92 & 67.71 & 0.78 & 0.09 & 0.22 &  &                      
	& 6.67 & 38.92 & 0.89 & 0.07 & 0.11 &                      &                      
	& 5.83 & 355.37 & 1.00 & 0.00 & 0.00 &                      &                      
	& 6.67 & 38.16 & 1.00 & 0.07 & 0.00 &                      &                      
	& 5.83 & 48.33 & 1.00 & 0.00 & 0.00 \\

100 &                      
	& 33.00 & 34.01 & 0.75 & 0.17 & 0.25 &  &                      
	& 33.58 & 68.82 & 0.75 & 0.14 & 0.25 &  &                      
	& 9.00 & 38.28 & 0.83 & 0.08 & 0.17 &                      &                      
	& 7.67 & 393.83 & 1.00 & 0.00 & 0.00 &                      &                      
	& 9.17 & 37.92 & 1.00 & 0.08 & 0.00 &                      &                      
	& 7.67 & 47.21 & 1.00 & 0.00 & 0.00 \\
\bottomrule
\end{tabular}

\begin{tabular}{cccccccccccccclcccccclcccccclcccccclcccccc}
\toprule
	\multicolumn{42}{c}{\begin{tabular}[c]{@{}c@{}} \textsc{Logistics} \end{tabular}}	\\ \hline
    & \multicolumn{6}{c}{\begin{tabular}[c]{@{}c@{}} Conjunctive Goals \\ $\phi_1 \land \phi_2$ \end{tabular}}                  &
	& \multicolumn{6}{c}{\begin{tabular}[c]{@{}c@{}} \LTLf Eventuality Goals \\ $\Diamond\phi$ \end{tabular}}                    &
	& \multicolumn{6}{c}{\begin{tabular}[c]{@{}c@{}} \LTLf Ordering \\ $\Diamond(\phi_1 \land \Next(\Diamond\phi_2))$ \end{tabular}}                       & \multicolumn{1}{c}{}
	& \multicolumn{6}{c}{\begin{tabular}[c]{@{}c@{}} \LTLf Goals Until \\ $\phi_1 \lUntil \phi_2$ \end{tabular}}                          & \multicolumn{1}{c}{}
	& \multicolumn{6}{c}{\begin{tabular}[c]{@{}c@{}} \PLTLf Goals Once \\ $\phi_1 \land \past \phi_2$ \end{tabular}}                         & \multicolumn{1}{c}{}
	& \multicolumn{6}{c}{\begin{tabular}[c]{@{}c@{}} \PLTLf Goals Since \\ $\phi_1 \land (\lnot\phi_2 \Since \phi_3)$ \end{tabular}}
	\\ \cline{2-7} \cline{9-14} \cline{16-21} \cline{23-28} \cline{30-35} \cline{37-42}
    & $|\mathcal{G}_{\varphi}|$ & $|Obs|$ & \textit{Time} & \textit{TPR}  & \textit{FPR}  & \textit{FNR}  &
	& $|\mathcal{G}_{\varphi}|$ & $|Obs|$ & \textit{Time} & \textit{TPR}  & \textit{FPR}  & \textit{FNR}  &
	& $|\mathcal{G}_{\varphi}|$ & $|Obs|$ & \textit{Time} & \textit{TPR}  & \textit{FPR}  & \textit{FNR}  &
	& $|\mathcal{G}_{\varphi}|$ & $|Obs|$ & \textit{Time} & \textit{TPR}  & \textit{FPR}  & \textit{FNR}  &
	& $|\mathcal{G}_{\varphi}|$ & $|Obs|$ & \textit{Time} & \textit{TPR}  & \textit{FPR}  & \textit{FNR}  &
	& $|\mathcal{G}_{\varphi}|$ & $|Obs|$ & \textit{Time} & \textit{TPR}  & \textit{FPR}  & \textit{FNR} \\
	\cline{2-7} \cline{9-14} \cline{16-21} \cline{23-28} \cline{30-35} \cline{37-42}

10
	& \multirow{5}{*}{4.0} & 3.00 & 260.92 & 0.85 & 0.11 & 0.15 &
	& \multirow{5}{*}{4.0} & 3.00 & 412.34 & 0.70 & 0.11 & 0.30 &
	& \multirow{5}{*}{4.0} & 2.33 & 498.70 & 0.72 & 0.22 & 0.28 &
	& \multirow{5}{*}{4.0} & 1.83 & 310.16 & 1.00 & 0.12 & 0.00 &
	& \multirow{5}{*}{4.0} & 1.67 & 554.45 & 0.78 & 0.14 & 0.22 &
	& \multirow{5}{*}{4.0} & 1.00 & 643.01 & 0.83 & 0.14 & 0.17 \\

30  &
	& 8.11 & 258.60 & 0.89 & 0.08 & 0.11 &  &
	& 8.11 & 360.49 & 0.96 & 0.10 & 0.04 &  &
	& 5.83 & 468.36 & 0.89 & 0.17 & 0.11 &                      &
	& 4.67 & 292.87 & 0.94 & 0.32 & 0.06 &                      &
	& 4.17 & 541.31 & 0.94 & 0.12 & 0.06 &                      &
	& 2.33 & 645.55 & 0.83 & 0.18 & 0.17 \\

50  &
	& 13.11 & 258.58 & 0.89 & 0.09 & 0.11 &  &
	& 13.00 & 383.45 & 1.00 & 0.09 & 0.00 &  &
	& 9.17 & 480.20 & 0.89 & 0.19 & 0.11 &                      &
	& 7.67 & 282.77 & 1.00 & 0.31 & 0.00 &                      &
	& 6.33 & 542.76 & 0.94 & 0.12 & 0.06 &                      &
	& 3.17 & 652.51 & 0.89 & 0.15 & 0.11 \\

70  &
	& 18.33 & 251.51 & 0.96 & 0.09 & 0.04 &  &
	& 18.11 & 380.51 & 1.00 & 0.09 & 0.00 &  &
	& 13.00 & 466.86 & 1.00 & 0.11 & 0.00 &                      &
	& 11.00 & 285.97 & 1.00 & 0.28 & 0.00 &                      &
	& 9.00 & 552.88 & 1.00 & 0.08 & 0.00 &                      &
	& 4.50 & 648.61 & 1.00 & 0.05 & 0.00 \\

100 &
	& 25.44 & 251.27 & 1.00 & 0.08 & 0.00 &  &
	& 25.22 & 444.65 & 1.00 & 0.08 & 0.00 &  &
	& 17.83 & 450.51 & 1.00 & 0.08 & 0.00 &                      &
	& 14.83 & 232.00 & 1.00 & 0.17 & 0.00 &                      &
	& 12.50 & 630.17 & 1.00 & 0.08 & 0.00 &                      &
	& 6.00 & 644.07 & 1.00 & 0.00 & 0.00 \\
\bottomrule
\end{tabular}

\begin{tabular}{cccccccccccccclcccccclcccccclcccccclcccccc}
\toprule
	\multicolumn{42}{c}{\begin{tabular}[c]{@{}c@{}} \textsc{Tidyup} \end{tabular}}	\\ \hline
    & \multicolumn{6}{c}{\begin{tabular}[c]{@{}c@{}} Conjunctive Goals \\ $\phi_1 \land \phi_2$ \end{tabular}}                  &
	& \multicolumn{6}{c}{\begin{tabular}[c]{@{}c@{}} \LTLf Eventuality Goals \\ $\Diamond\phi$ \end{tabular}}                    &
	& \multicolumn{6}{c}{\begin{tabular}[c]{@{}c@{}} \LTLf Ordering \\ $\Diamond(\phi_1 \land \Next(\Diamond\phi_2))$ \end{tabular}}                       & \multicolumn{1}{c}{}
	& \multicolumn{6}{c}{\begin{tabular}[c]{@{}c@{}} \LTLf Goals Until \\ $\phi_1 \lUntil \phi_2$ \end{tabular}}                          & \multicolumn{1}{c}{}
	& \multicolumn{6}{c}{\begin{tabular}[c]{@{}c@{}} \PLTLf Goals Once \\ $\phi_1 \land \past \phi_2$ \end{tabular}}                          & \multicolumn{1}{c}{}
	& \multicolumn{6}{c}{\begin{tabular}[c]{@{}c@{}} \PLTLf Goals Since \\ $\phi_1 \land (\lnot\phi_2 \Since \phi_3)$ \end{tabular}}
	\\ \cline{2-7} \cline{9-14} \cline{16-21} \cline{23-28} \cline{30-35} \cline{37-42}
    & $|\mathcal{G}_{\varphi}|$ & $|Obs|$ & \textit{Time} & \textit{TPR}  & \textit{FPR}  & \textit{FNR}  &
	& $|\mathcal{G}_{\varphi}|$ & $|Obs|$ & \textit{Time} & \textit{TPR}  & \textit{FPR}  & \textit{FNR}  &
	& $|\mathcal{G}_{\varphi}|$ & $|Obs|$ & \textit{Time} & \textit{TPR}  & \textit{FPR}  & \textit{FNR}  &
	& $|\mathcal{G}_{\varphi}|$ & $|Obs|$ & \textit{Time} & \textit{TPR}  & \textit{FPR}  & \textit{FNR}  &
	& $|\mathcal{G}_{\varphi}|$ & $|Obs|$ & \textit{Time} & \textit{TPR}  & \textit{FPR}  & \textit{FNR}  &
	& $|\mathcal{G}_{\varphi}|$ & $|Obs|$ & \textit{Time} & \textit{TPR}  & \textit{FPR}  & \textit{FNR} \\
	\cline{2-7} \cline{9-14} \cline{16-21} \cline{23-28} \cline{30-35} \cline{37-42}

10
	& \multirow{5}{*}{4.0} & 6.56 & 180.17 & 0.37 & 0.27 & 0.63 &
	& \multirow{5}{*}{4.0} & 7.00 & 230.36 & 0.52 & 0.20 & 0.48 &
	& \multirow{5}{*}{4.0} & 3.50 & 106.87 & 0.67 & 0.28 & 0.33 &
	& \multirow{5}{*}{4.0} & 3.17 & 45.20 & 0.72 & 0.22 & 0.28 &
	& \multirow{5}{*}{4.0} & 3.33 & 108.24 & 0.67 & 0.11 & 0.33 &
	& \multirow{5}{*}{4.0} & 3.50 & 47.25 & 0.50 & 0.31 & 0.50 \\

30  &
	& 18.78 & 178.05 & 0.48 & 0.19 & 0.52 &  &
	& 20.00 & 228.34 & 0.63 & 0.32 & 0.37 &  &
	& 9.50 & 105.93 & 0.67 & 0.36 & 0.33 &                      &
	& 8.33 & 45.53 & 0.78 & 0.26 & 0.22 &                      &
	& 9.00 & 104.46 & 0.61 & 0.14 & 0.39 &                      &
	& 10.33 & 46.47 & 0.56 & 0.21 & 0.44 \\

50  &
	& 31.00 & 179.45 & 0.44 & 0.22 & 0.56 &  &
	& 32.89 & 191.61 & 0.81 & 0.22 & 0.19 &  &
	& 15.50 & 105.84 & 0.78 & 0.28 & 0.22 &                      &
	& 13.50 & 43.40 & 0.89 & 0.25 & 0.11 &                      &
	& 14.50 & 105.02 & 0.72 & 0.17 & 0.28 &                      &
	& 17.00 & 45.71 & 0.33 & 0.24 & 0.67 \\

70  &
	& 43.56 & 178.79 & 0.41 & 0.22 & 0.59 &  &
	& 46.11 & 193.81 & 0.81 & 0.31 & 0.19 &  &
	& 21.83 & 104.01 & 0.89 & 0.19 & 0.11 &                      &
	& 19.33 & 43.17 & 0.94 & 0.24 & 0.06 &                      &
	& 20.33 & 106.25 & 0.89 & 0.19 & 0.11 &                      &
	& 23.67 & 48.27 & 0.83 & 0.08 & 0.17 \\

100 &
	& 61.56 & 179.52 & 0.33 & 0.25 & 0.67 &  &
	& 65.33 & 247.09 & 1.00 & 0.28 & 0.00 &  &
	& 30.50 & 100.16 & 0.83 & 0.12 & 0.17 &                      &
	& 26.83 & 43.97 & 1.00 & 0.17 & 0.00 &                      &
	& 28.50 & 107.56 & 1.00 & 0.21 & 0.00 &                      &
	& 33.50 & 48.22 & 0.67 & 0.08 & 0.33 \\
\bottomrule
\end{tabular}

\begin{tabular}{cccccccccccccclcccccclcccccclcccccclcccccc}
\toprule
	\multicolumn{42}{c}{\begin{tabular}[c]{@{}c@{}} \textsc{Tireworld} \end{tabular}}	\\ \hline
    & \multicolumn{6}{c}{\begin{tabular}[c]{@{}c@{}} Conjunctive Goals \\ $\phi_1 \land \phi_2$ \end{tabular}}                  &
	& \multicolumn{6}{c}{\begin{tabular}[c]{@{}c@{}} \LTLf Eventuality Goals \\ $\Diamond\phi$ \end{tabular}}                    &
	& \multicolumn{6}{c}{\begin{tabular}[c]{@{}c@{}} \LTLf Ordering \\ $\Diamond(\phi_1 \land \Next(\Diamond\phi_2))$ \end{tabular}}                       & \multicolumn{1}{c}{}
	& \multicolumn{6}{c}{\begin{tabular}[c]{@{}c@{}} \LTLf Goals Until \\ $\phi_1 \lUntil \phi_2$ \end{tabular}}                          & \multicolumn{1}{c}{}
	& \multicolumn{6}{c}{\begin{tabular}[c]{@{}c@{}} \PLTLf Goals Once \\ $\phi_1 \land \past \phi_2$ \end{tabular}}                          & \multicolumn{1}{c}{}
	& \multicolumn{6}{c}{\begin{tabular}[c]{@{}c@{}} \PLTLf Goals Since \\ $\phi_1 \land (\lnot\phi_2 \Since \phi_3)$ \end{tabular}}
	\\ \cline{2-7} \cline{9-14} \cline{16-21} \cline{23-28} \cline{30-35} \cline{37-42}
    & $|\mathcal{G}_{\varphi}|$ & $|Obs|$ & \textit{Time} & \textit{TPR}  & \textit{FPR}  & \textit{FNR}  &
	& $|\mathcal{G}_{\varphi}|$ & $|Obs|$ & \textit{Time} & \textit{TPR}  & \textit{FPR}  & \textit{FNR}  &
	& $|\mathcal{G}_{\varphi}|$ & $|Obs|$ & \textit{Time} & \textit{TPR}  & \textit{FPR}  & \textit{FNR}  &
	& $|\mathcal{G}_{\varphi}|$ & $|Obs|$ & \textit{Time} & \textit{TPR}  & \textit{FPR}  & \textit{FNR}  &
	& $|\mathcal{G}_{\varphi}|$ & $|Obs|$ & \textit{Time} & \textit{TPR}  & \textit{FPR}  & \textit{FNR}  &
	& $|\mathcal{G}_{\varphi}|$ & $|Obs|$ & \textit{Time} & \textit{TPR}  & \textit{FPR}  & \textit{FNR} \\
	\cline{2-7} \cline{9-14} \cline{16-21} \cline{23-28} \cline{30-35} \cline{37-42}

10
	& \multirow{5}{*}{5.5} & 1.50 & 16.88 & 1.00 & 0.29 & 0.00 &
	& \multirow{5}{*}{5.5} & 1.50 & 29.17 & 1.00 & 0.19 & 0.00 &
	& \multirow{5}{*}{3.5} & 1.50 & 12.59 & 0.67 & 0.16 & 0.33 &
	& \multirow{5}{*}{3.5} & 1.17 &  5.11 & 0.72 & 0.08 & 0.28 &
	& \multirow{5}{*}{4.0} & 1.33 &  6.85 & 0.56 & 0.19 & 0.44 &
	& \multirow{5}{*}{4.0} & 1.17 & 18.00 & 0.67 & 0.12 & 0.33 \\

30  &
	& 3.50 & 17.19 & 1.00 & 0.04 & 0.00 &  &
	& 3.50 & 26.39 & 1.00 & 0.07 & 0.00 &  &
	& 3.50 & 11.90 & 0.72 & 0.16 & 0.28 &                      &
	& 3.17 &  5.17 & 0.94 & 0.03 & 0.06 &                      &
	& 3.50 &  6.95 & 0.83 & 0.10 & 0.17 &                      &
	& 3.17 & 18.69 & 0.89 & 0.07 & 0.11 \\

50  &
	& 6.00 & 17.35 & 1.00 & 0.01 & 0.00 &  &
	& 6.00 & 21.97 & 1.00 & 0.04 & 0.00 &  &
	& 5.67 & 10.18 & 0.89 & 0.05 & 0.11 &                      &
	& 4.83 &  5.10 & 0.94 & 0.01 & 0.06 &                      &
	& 5.50 &  6.87 & 0.83 & 0.10 & 0.17 &                      &
	& 4.83 & 19.09 & 0.94 & 0.04 & 0.06 \\

70  &
	& 8.50 & 17.31 & 1.00 & 0.01 & 0.00 &  &
	& 8.50 & 20.30 & 1.00 & 0.00 & 0.00 &  &
	& 7.83 & 10.20 & 0.94 & 0.02 & 0.06 &                      &
	& 6.50 &  5.16 & 1.00 & 0.06 & 0.00 &                      &
	& 7.67 &  6.82 & 0.83 & 0.07 & 0.17 &                      &
	& 6.50 & 19.23 & 1.00 & 0.04 & 0.00 \\

100 &
	& 11.50 & 17.28 & 1.00 & 0.00 & 0.00 &  &
	& 11.50 & 20.97 & 1.00 & 0.00 & 0.00 &  &
	& 10.50 & 10.15 & 1.00 & 0.00 & 0.00 &                      &
	& 9.00 &  5.55 & 1.00 & 0.08 & 0.00 &                      &
	& 9.00 &  5.55 & 1.00 & 0.08 & 0.00 &                      &
	& 9.00 & 19.17 & 1.00 & 0.04 & 0.00\\
\bottomrule
\end{tabular}

\begin{tabular}{cccccccccccccclcccccclcccccclcccccclcccccc}
\toprule
	\multicolumn{42}{c}{\begin{tabular}[c]{@{}c@{}} \textsc{Triangle-Tireworld} \end{tabular}}	\\ \hline
    & \multicolumn{6}{c}{\begin{tabular}[c]{@{}c@{}} Conjunctive Goals \\ $\phi_1 \land \phi_2$ \end{tabular}}                  &
	& \multicolumn{6}{c}{\begin{tabular}[c]{@{}c@{}} \LTLf Eventuality Goals \\ $\Diamond\phi$ \end{tabular}}                    &
	& \multicolumn{6}{c}{\begin{tabular}[c]{@{}c@{}} \LTLf Ordering \\ $\Diamond(\phi_1 \land \Next(\Diamond\phi_2))$ \end{tabular}}                       & \multicolumn{1}{c}{}
	& \multicolumn{6}{c}{\begin{tabular}[c]{@{}c@{}} \LTLf Goals Until \\ $\phi_1 \lUntil \phi_2$ \end{tabular}}                          & \multicolumn{1}{c}{}
	& \multicolumn{6}{c}{\begin{tabular}[c]{@{}c@{}} \PLTLf Goals Once \\ $\phi_1 \land \past \phi_2$ \end{tabular}}                          & \multicolumn{1}{c}{}
	& \multicolumn{6}{c}{\begin{tabular}[c]{@{}c@{}} \PLTLf Goals Since \\ $\phi_1 \land (\lnot\phi_2 \Since \phi_3)$ \end{tabular}}
	\\ \cline{2-7} \cline{9-14} \cline{16-21} \cline{23-28} \cline{30-35} \cline{37-42}
    & $|\mathcal{G}_{\varphi}|$ & $|Obs|$ & \textit{Time} & \textit{TPR}  & \textit{FPR}  & \textit{FNR}  &
	& $|\mathcal{G}_{\varphi}|$ & $|Obs|$ & \textit{Time} & \textit{TPR}  & \textit{FPR}  & \textit{FNR}  &
	& $|\mathcal{G}_{\varphi}|$ & $|Obs|$ & \textit{Time} & \textit{TPR}  & \textit{FPR}  & \textit{FNR}  &
	& $|\mathcal{G}_{\varphi}|$ & $|Obs|$ & \textit{Time} & \textit{TPR}  & \textit{FPR}  & \textit{FNR}  &
	& $|\mathcal{G}_{\varphi}|$ & $|Obs|$ & \textit{Time} & \textit{TPR}  & \textit{FPR}  & \textit{FNR}  &
	& $|\mathcal{G}_{\varphi}|$ & $|Obs|$ & \textit{Time} & \textit{TPR}  & \textit{FPR}  & \textit{FNR} \\
	\cline{2-7} \cline{9-14} \cline{16-21} \cline{23-28} \cline{30-35} \cline{37-42}

10
	& \multirow{5}{*}{3.75} & 1.67 & 16.56 & 0.64 & 0.16 & 0.36 &
	& \multirow{5}{*}{3.75} & 2.08 & 34.57 & 0.69 & 0.13 & 0.31 &
	& \multirow{5}{*}{4.0} & 1.67 & 14.42 & 0.44 & 0.14 & 0.56 &
	& \multirow{5}{*}{4.0} & 1.17 & 11.04 & 0.72 & 0.11 & 0.28 &
	& \multirow{5}{*}{4.0} & 1.67 & 15.91 & 0.67 & 0.11 & 0.33 &
	& \multirow{5}{*}{4.0} & 1.00 & 106.56 & 0.78 & 0.26 & 0.22 \\

30  &
	& 4.67 & 16.90 & 0.86 & 0.03 & 0.14 &  &
	& 5.58 & 31.76 & 0.86 & 0.03 & 0.14 &  &
	& 3.83 & 14.57 & 0.94 & 0.01 & 0.06 &                      &
	& 2.83 & 10.55 & 1.00 & 0.03 & 0.00 &                      &
	& 3.83 & 15.87 & 0.72 & 0.07 & 0.28 &                      &
	& 2.33 & 104.83 & 0.94 & 0.17 & 0.06 \\

50  &
	& 7.33 & 17.08 & 0.89 & 0.03 & 0.11 &  &
	& 8.83 & 30.83 & 0.92 & 0.02 & 0.08 &  &
	& 6.17 & 14.76 & 0.83 & 0.04 & 0.17 &                      &
	& 4.50 & 10.55 & 0.94 & 0.01 & 0.06 &                      &
	& 6.17 & 16.13 & 0.89 & 0.03 & 0.11 &                      &
	& 3.50 & 102.41 & 1.00 & 0.12 & 0.00 \\

70 &
	& 10.00 & 17.12 & 1.00 & 0.00 & 0.00 &  &
	& 12.08 & 32.81 & 1.00 & 0.00 & 0.00 &  &
	& 8.50 & 17.43 & 1.00 & 0.00 & 0.00 &                      &
	& 6.17 & 10.58 & 1.00 & 0.00 & 0.00 &                      &
	& 8.50 & 14.87 & 1.00 & 0.00 & 0.00 &                      &
	& 4.67 & 102.36 & 1.00 & 0.25 & 0.00 \\

100  &
	& 13.67 & 17.16 & 1.00 & 0.00 & 0.00 &  &
	& 16.67 & 30.93 & 1.00 & 0.00 & 0.00 &  &
	& 11.33 & 24.17 & 1.00 & 0.00 & 0.00 &                      &
	& 8.00 & 11.89 & 1.00 & 0.00 & 0.00 &                      &
	& 11.33 & 15.16 & 1.00 & 0.00 & 0.00 &                      &
	& 6.00 & 102.91 & 1.00 & 0.00 & 0.00 \\
\bottomrule
\end{tabular}

\begin{tabular}{cccccccccccccclcccccclcccccclcccccclcccccc}
\toprule
	\multicolumn{42}{c}{\begin{tabular}[c]{@{}c@{}} \textsc{Zeno-Travel} \end{tabular}}	\\ \hline
    & \multicolumn{6}{c}{\begin{tabular}[c]{@{}c@{}} Conjunctive Goals \\ $\phi_1 \land \phi_2$ \end{tabular}}                  &
	& \multicolumn{6}{c}{\begin{tabular}[c]{@{}c@{}} \LTLf Eventuality Goals \\ $\Diamond\phi$ \end{tabular}}                    &
	& \multicolumn{6}{c}{\begin{tabular}[c]{@{}c@{}} \LTLf Ordering \\ $\Diamond(\phi_1 \land \Next(\Diamond\phi_2))$ \end{tabular}}                       & \multicolumn{1}{c}{}
	& \multicolumn{6}{c}{\begin{tabular}[c]{@{}c@{}} \LTLf Goals Until \\ $\phi_1 \lUntil \phi_2$ \end{tabular}}                          & \multicolumn{1}{c}{}
	& \multicolumn{6}{c}{\begin{tabular}[c]{@{}c@{}} \PLTLf Goals Once \\ $\phi_1 \land \past \phi_2$ \end{tabular}}                          & \multicolumn{1}{c}{}
	& \multicolumn{6}{c}{\begin{tabular}[c]{@{}c@{}} \PLTLf Goals Since \\ $\phi_1 \land (\lnot\phi_2 \Since \phi_3)$ \end{tabular}}
	\\ \cline{2-7} \cline{9-14} \cline{16-21} \cline{23-28} \cline{30-35} \cline{37-42}
    & $|\mathcal{G}_{\varphi}|$ & $|Obs|$ & \textit{Time} & \textit{TPR}  & \textit{FPR}  & \textit{FNR}  &
	& $|\mathcal{G}_{\varphi}|$ & $|Obs|$ & \textit{Time} & \textit{TPR}  & \textit{FPR}  & \textit{FNR}  &
	& $|\mathcal{G}_{\varphi}|$ & $|Obs|$ & \textit{Time} & \textit{TPR}  & \textit{FPR}  & \textit{FNR}  &
	& $|\mathcal{G}_{\varphi}|$ & $|Obs|$ & \textit{Time} & \textit{TPR}  & \textit{FPR}  & \textit{FNR}  &
	& $|\mathcal{G}_{\varphi}|$ & $|Obs|$ & \textit{Time} & \textit{TPR}  & \textit{FPR}  & \textit{FNR}  &
	& $|\mathcal{G}_{\varphi}|$ & $|Obs|$ & \textit{Time} & \textit{TPR}  & \textit{FPR}  & \textit{FNR} \\
	\cline{2-7} \cline{9-14} \cline{16-21} \cline{23-28} \cline{30-35} \cline{37-42}

10
	& \multirow{5}{*}{7.5} & 5.67 & 556.36 & 0.89 & 0.08 & 0.11 &
	& \multirow{5}{*}{7.5} & 5.33 & 607.20 & 0.81 & 0.05 & 0.19 &
	& \multirow{5}{*}{4.0} & 2.67 & 145.81 & 0.94 & 0.01 & 0.06 &
	& \multirow{5}{*}{4.0} & 2.50 & 174.38 & 0.89 & 0.03 & 0.11 &
	& \multirow{5}{*}{4.0} & 2.17 & 144.87 & 0.83 & 0.04 & 0.17 &
	& \multirow{5}{*}{4.0} & 2.00 & 175.58 & 0.89 & 0.14 & 0.11 \\

30  &
	& 16.25 & 549.34 & 1.00 & 0.04 & 0.00 &  &
	& 15.17 & 619.13 & 0.94 & 0.02 & 0.06 &  &
	& 7.50 & 145.33 & 1.00 & 0.00 & 0.00 &                      &
	& 6.50 & 167.33 & 0.89 & 0.03 & 0.11 &                      &
	& 6.17 & 140.15 & 0.94 & 0.01 & 0.06 &                      &
	& 5.33 & 171.27 & 1.00 & 0.10 & 0.00 \\

50  &
	& 26.50 & 554.09 & 1.00 & 0.02 & 0.00 &  &
	& 24.75 & 670.07 & 0.97 & 0.02 & 0.03 &  &
	& 11.67 & 141.81 & 1.00 & 0.00 & 0.00 &                      &
	& 10.17 & 164.68 & 0.83 & 0.04 & 0.17 &                      &
	& 9.83 & 142.72 & 0.94 & 0.01 & 0.06 &                      &
	& 8.67 & 167.39 & 0.94 & 0.04 & 0.06 \\

70  &
	& 37.50 & 556.76 & 1.00 & 0.03 & 0.00 &  &
	& 34.92 & 619.24 & 1.00 & 0.02 & 0.00 &  &
	& 16.67 & 138.04 & 1.00 & 0.00 & 0.00 &                      &
	& 14.33 & 161.67 & 0.89 & 0.03 & 0.11 &                      &
	& 13.83 & 138.64 & 1.00 & 0.00 & 0.00 &                      &
	& 12.67 & 159.45 & 1.00 & 0.00 & 0.00 \\

100 &
	& 53.00 & 569.43 & 1.00 & 0.02 & 0.00 &  &
	& 49.42 & 735.20 & 1.00 & 0.02 & 0.00 &  &
	& 23.17 & 136.37 & 1.00 & 0.00 & 0.00 &                      &
	& 20.00 & 161.05 & 0.83 & 0.04 & 0.17 &                      &
	& 19.50 & 137.53 & 1.00 & 0.00 & 0.00 &                      &
	& 17.33 & 154.69 & 1.00 & 0.00 & 0.00 \\
\bottomrule
\end{tabular}
\caption{Offline Recognition experimental results for all six \FOND domains separately.}
\label{tab:gr_results_separately}
\end{table}
\end{landscape}
}


\newpage
\begin{figure*}[!ht]
	\centering
	\begin{subfigure}[b]{0.185\textwidth}
 	    \includegraphics[width=\textwidth]{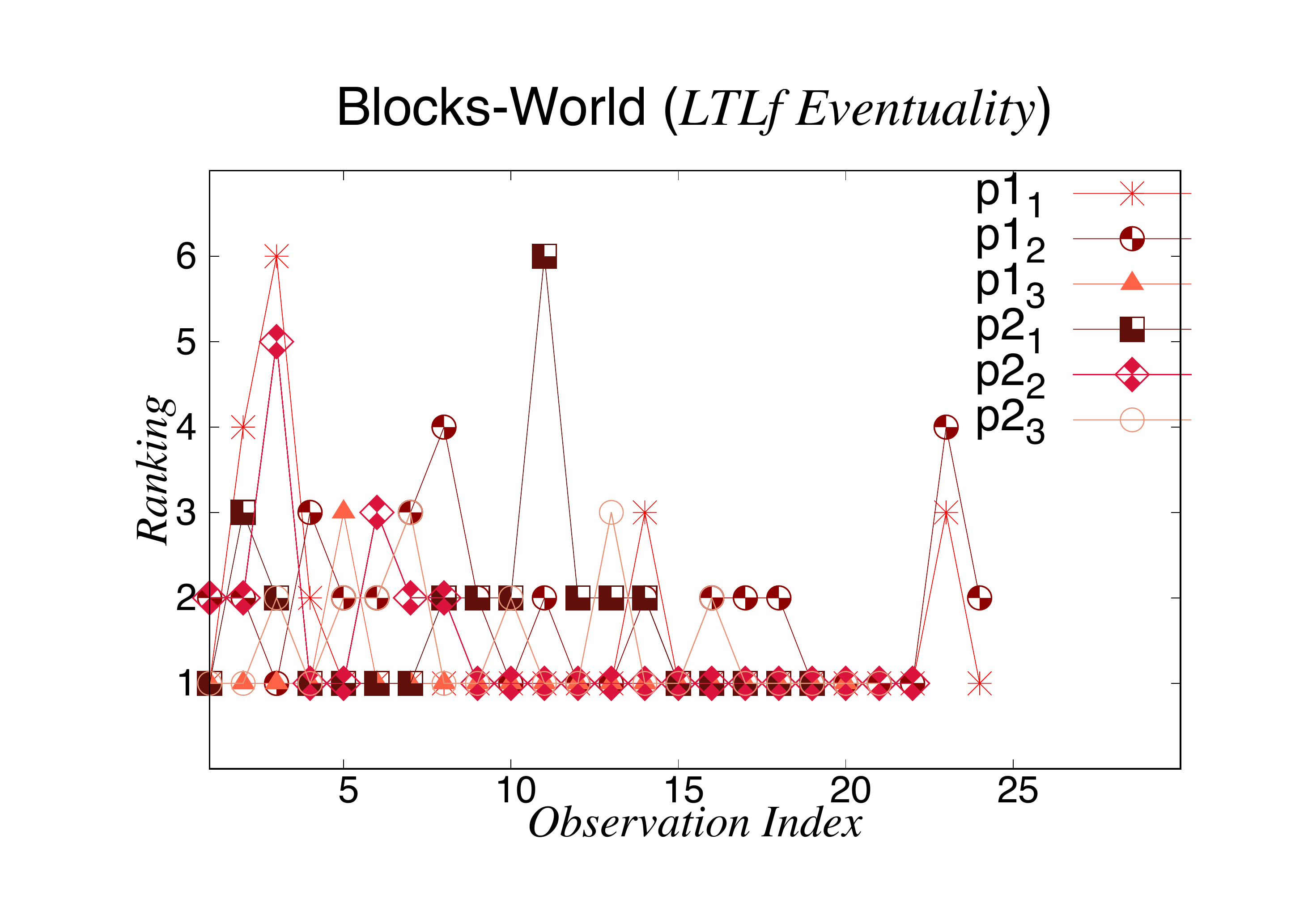}
		\caption{\LTLf Eventuality.}
		\label{fig:blocks-ltl0}
	\end{subfigure}
	~
	\begin{subfigure}[b]{0.185\textwidth}
 	    \includegraphics[width=\textwidth]{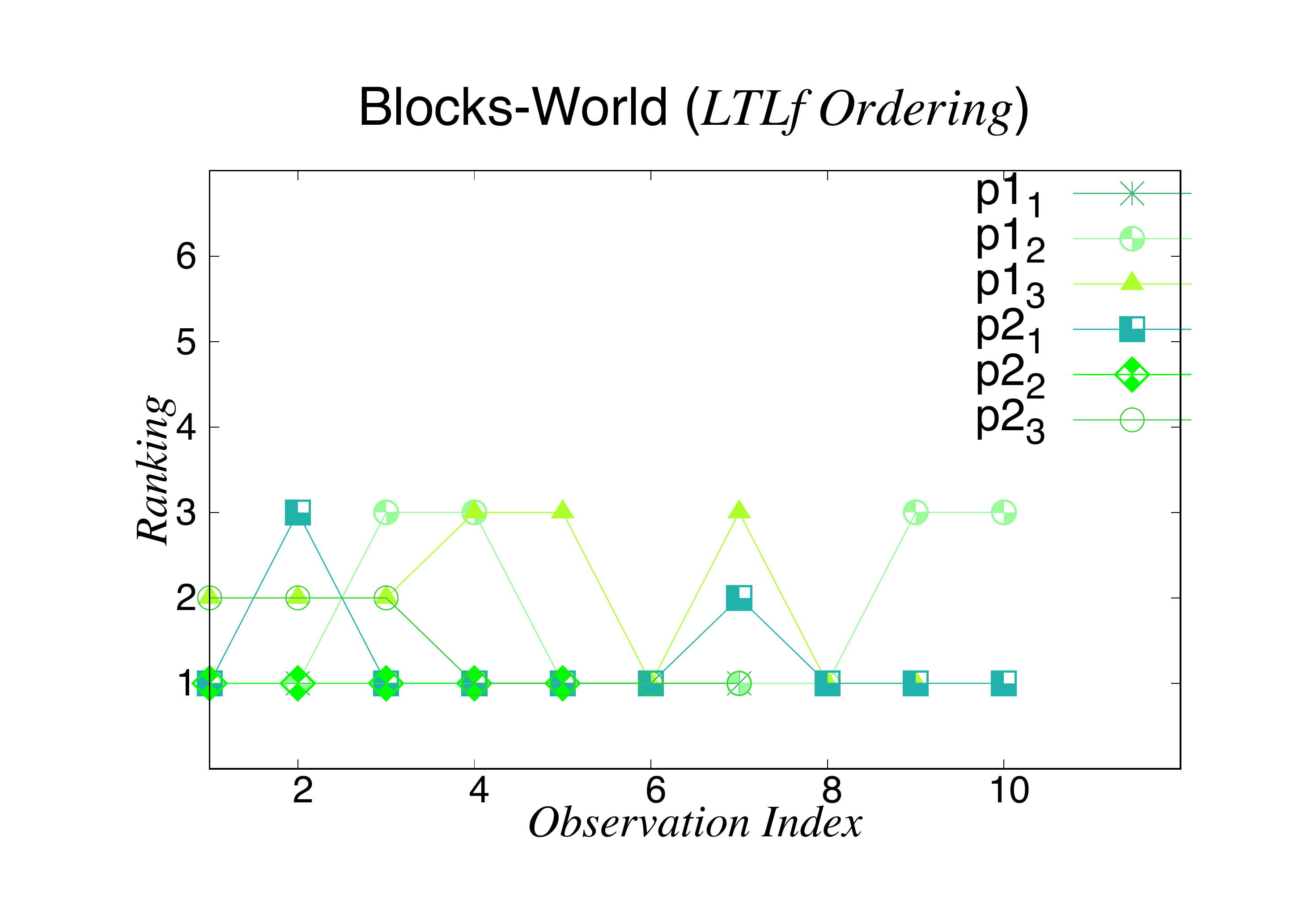}
		\caption{\LTLf Ordering.}
		\label{fig:blocks-ltl1}
	\end{subfigure}
	~
	\begin{subfigure}[b]{0.185\textwidth}
 	    \includegraphics[width=\textwidth]{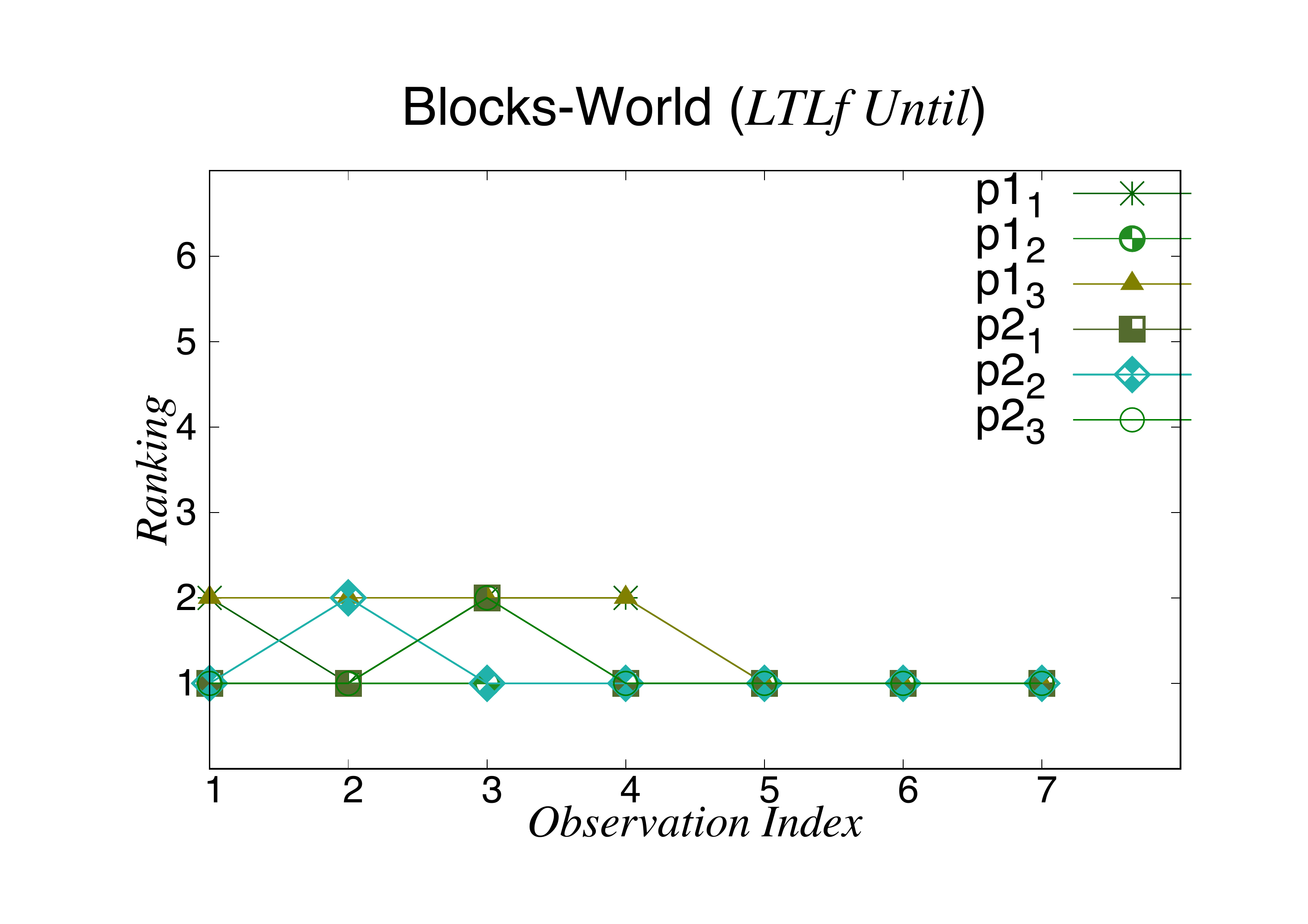}
		\caption{\LTLf Until.}
		\label{fig:blocks-ltl2}
	\end{subfigure}
	~
	\begin{subfigure}[b]{0.185\textwidth}
 	    \includegraphics[width=\textwidth]{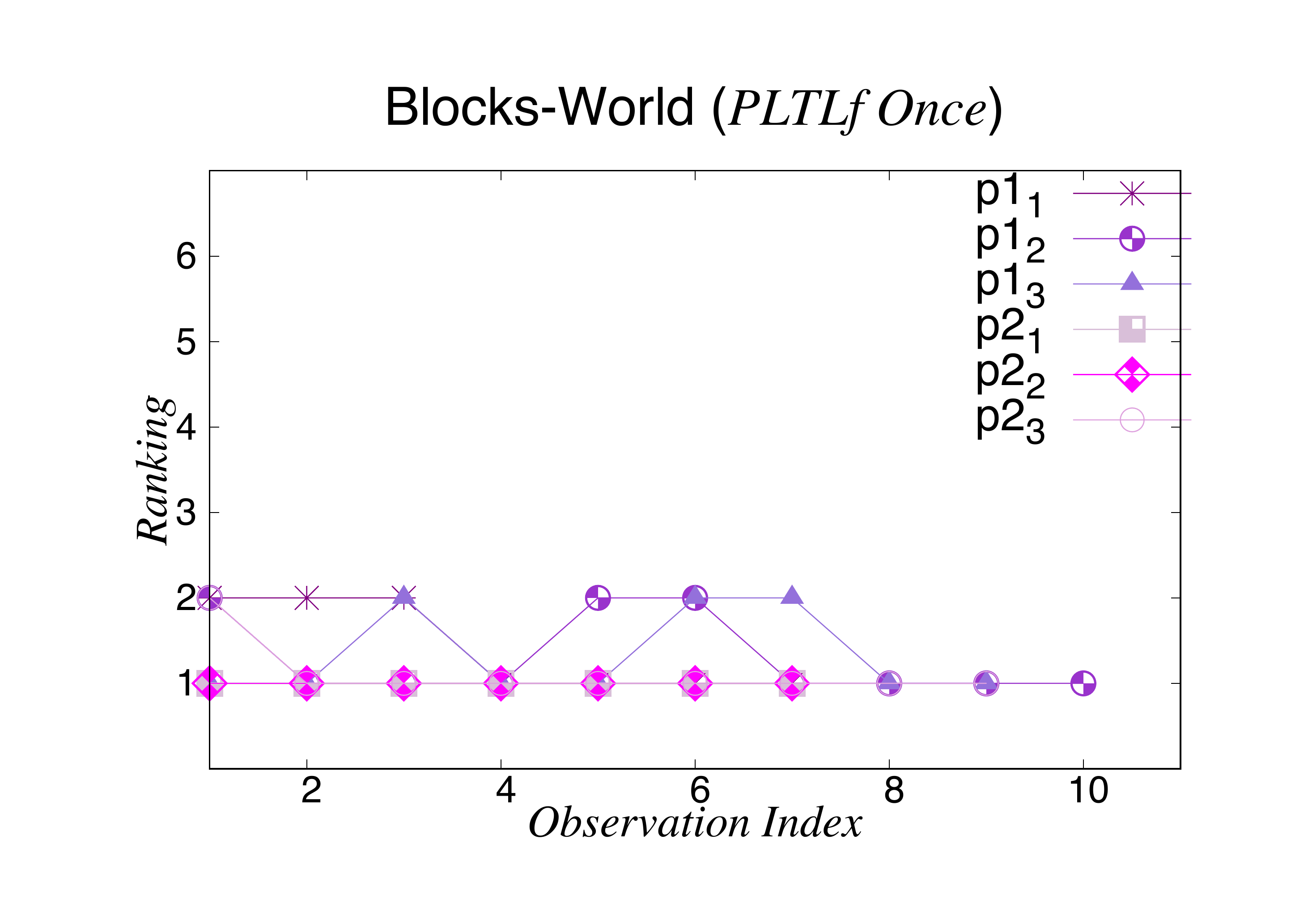}
		\caption{\PLTLf Once.}
		\label{fig:blocks-pltl0}
	\end{subfigure}
	~
	\begin{subfigure}[b]{0.185\textwidth}
 	    \includegraphics[width=\textwidth]{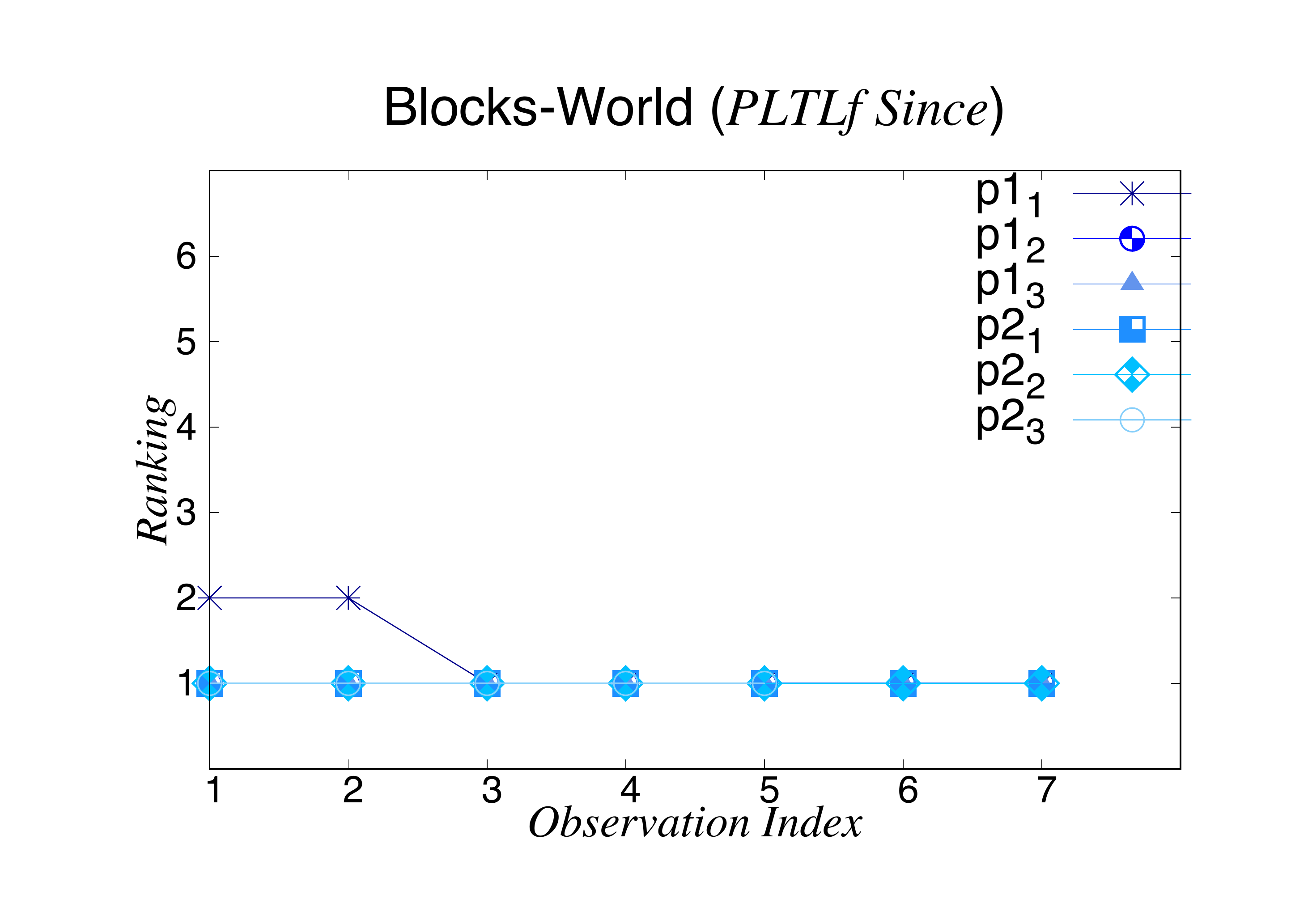}
		\caption{\PLTLf Since.}
		\label{fig:blocks-pltl1}
	\end{subfigure}	
	\vspace{-3mm}
	\caption{Online recognition ranking over the observations for \textsc{Blocks-World}.}
	\label{fig:blocks_ranking}
\end{figure*}


\begin{figure*}[!ht]
	\centering
	\begin{subfigure}[b]{0.185\textwidth}
 	    \includegraphics[width=\textwidth]{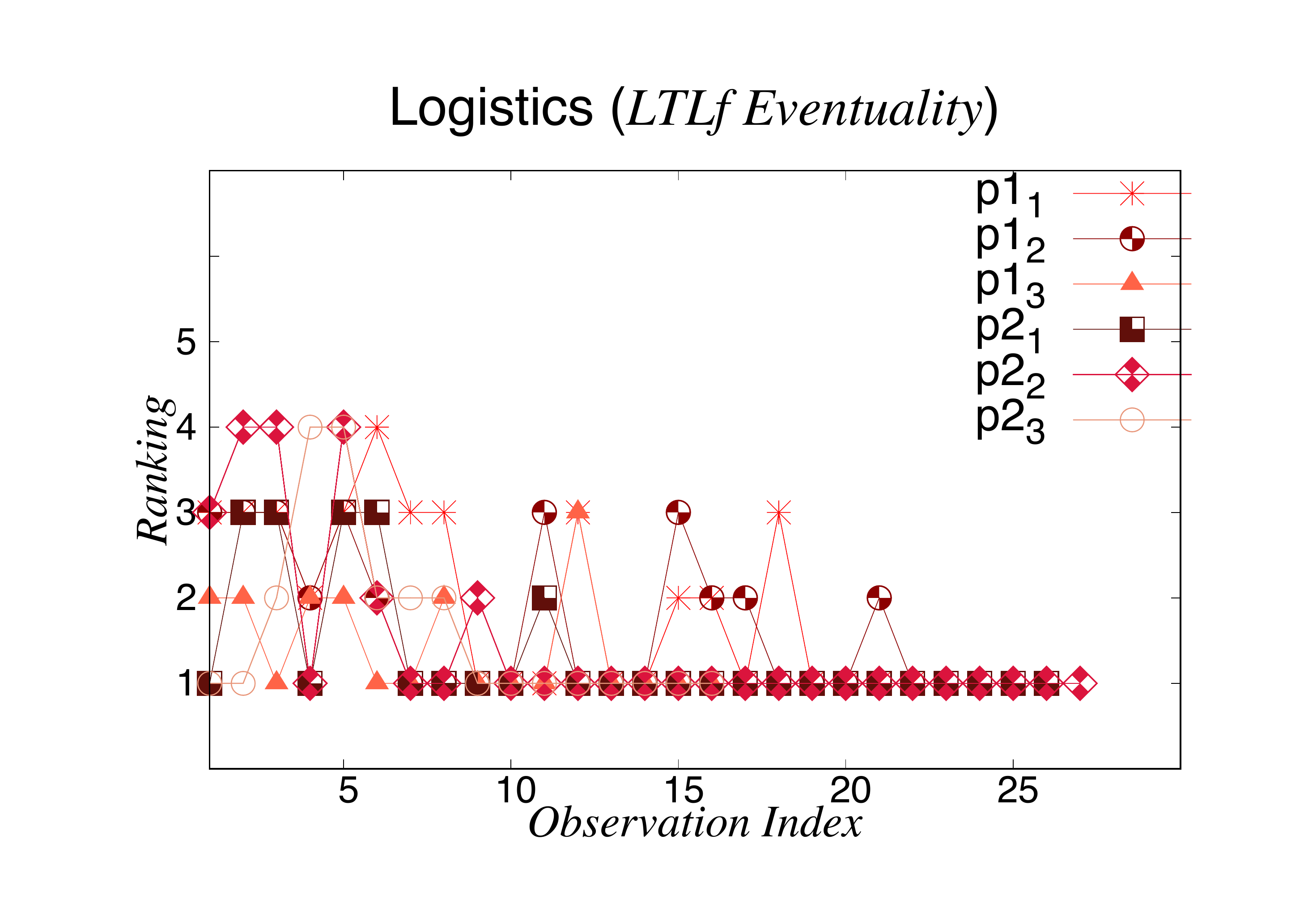}
		\caption{\LTLf Eventuality.}
		\label{fig:logistics-ltl0}
	\end{subfigure}
	~
	\begin{subfigure}[b]{0.185\textwidth}
 	    \includegraphics[width=\textwidth]{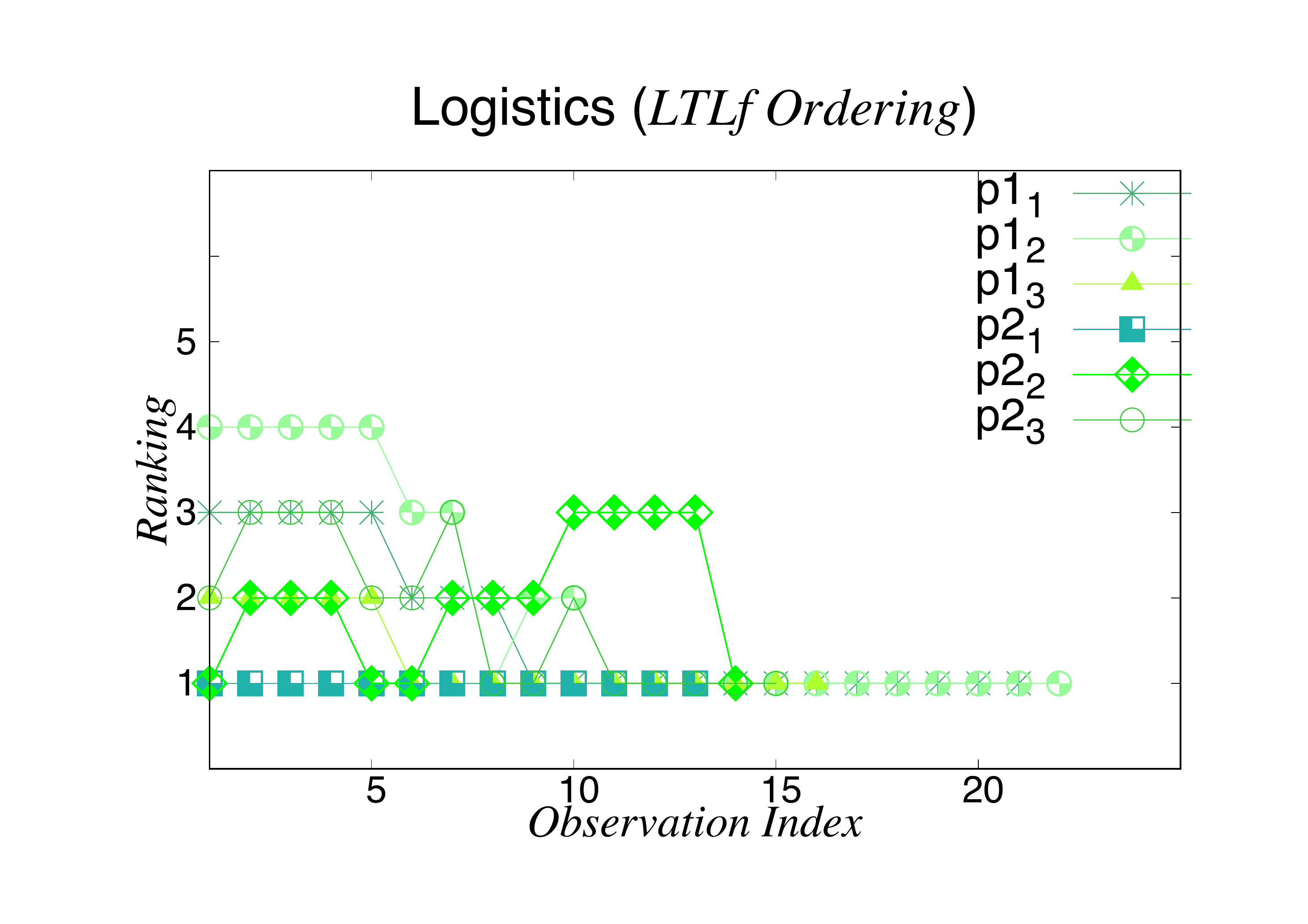}
		\caption{\LTLf Ordering.}
		\label{fig:logistics-ltl1}
	\end{subfigure}
	~
	\begin{subfigure}[b]{0.185\textwidth}
 	    \includegraphics[width=\textwidth]{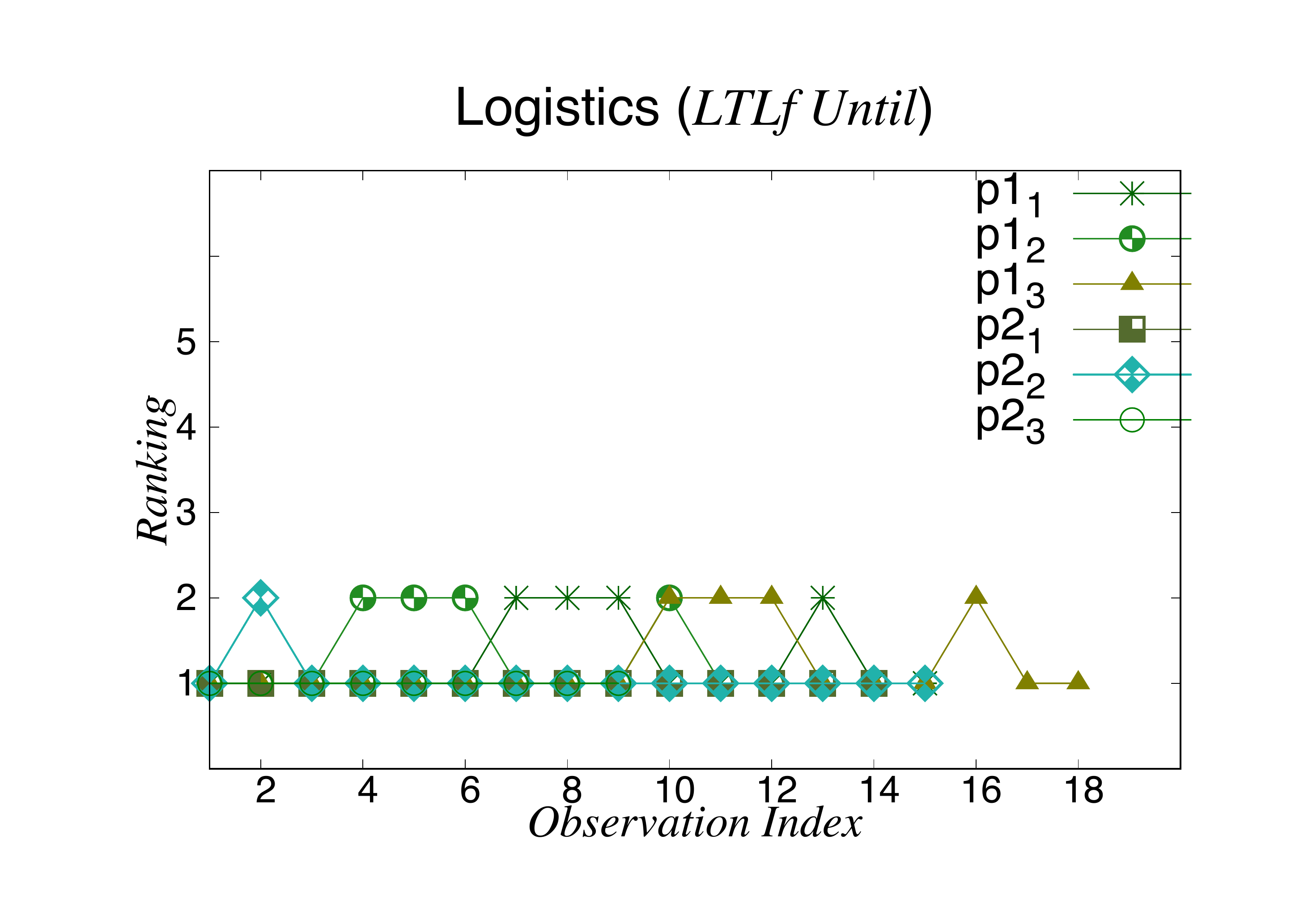}
		\caption{\LTLf Until.}
		\label{fig:logistics-ltl2}
	\end{subfigure}
	~
	\begin{subfigure}[b]{0.185\textwidth}
 	    \includegraphics[width=\textwidth]{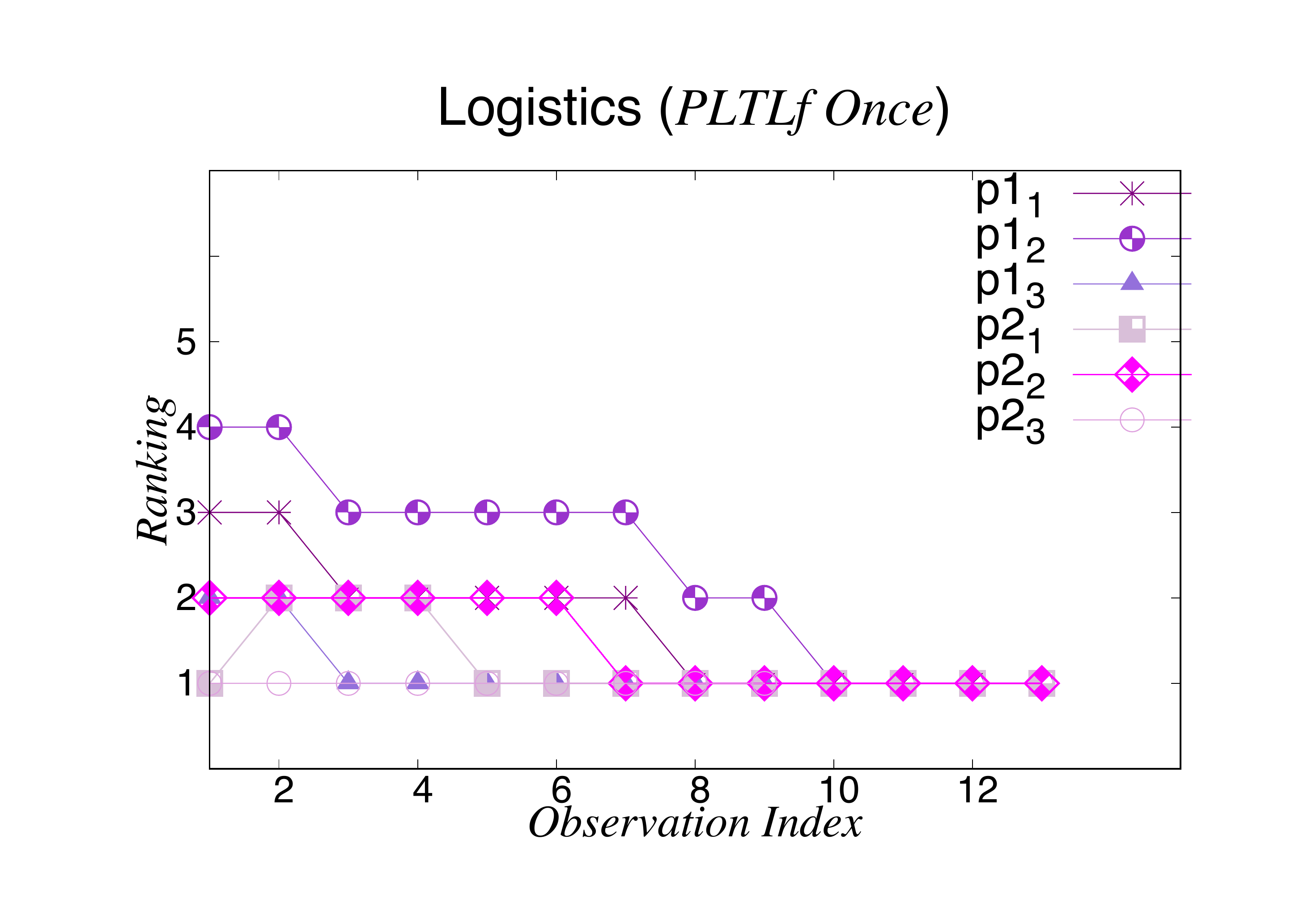}
		\caption{\PLTLf Once.}
		\label{fig:logistics-pltl0}
	\end{subfigure}
	~
	\begin{subfigure}[b]{0.185\textwidth}
 	    \includegraphics[width=\textwidth]{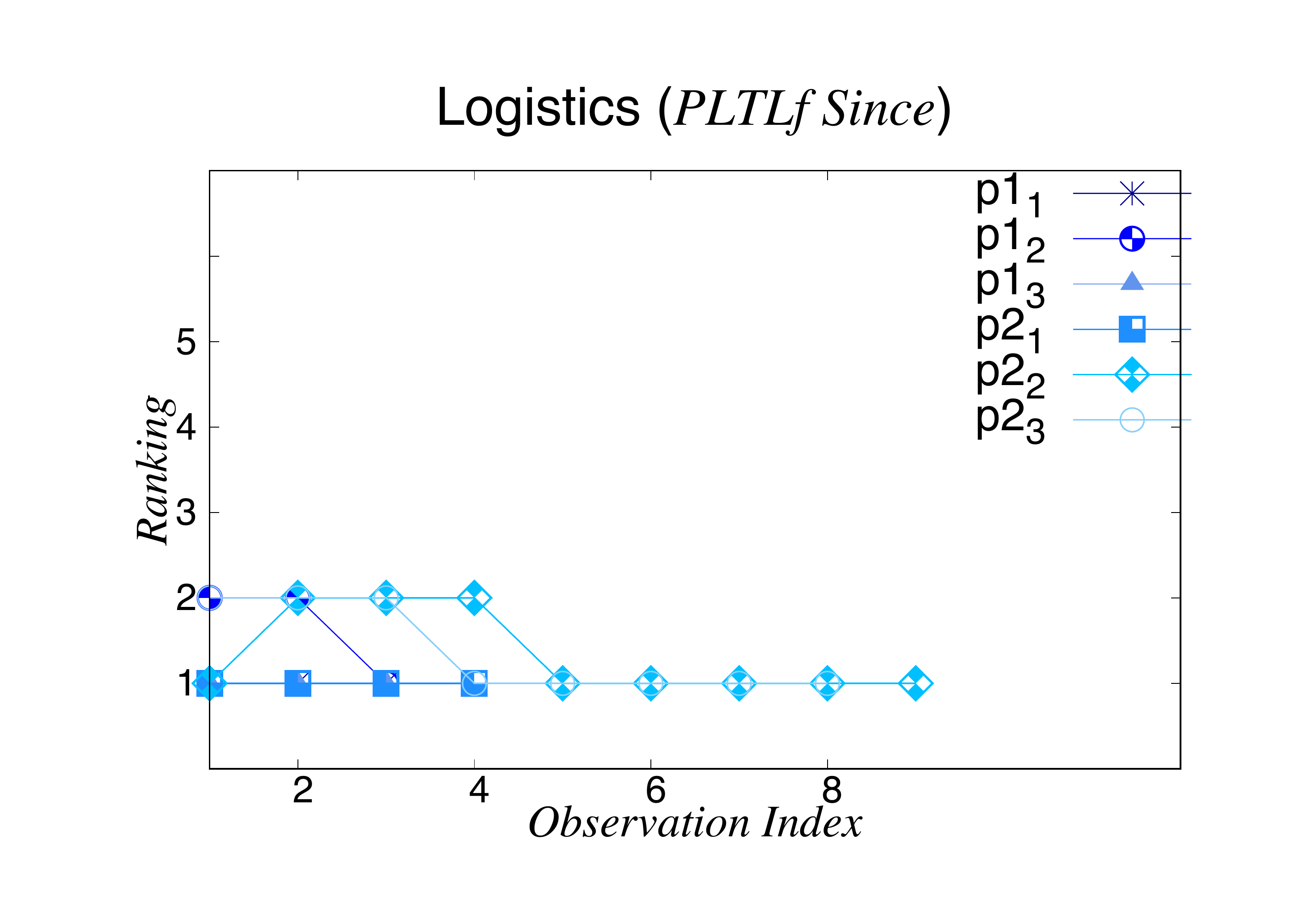}
		\caption{\PLTLf Since.}
		\label{fig:logistics-pltl1}
	\end{subfigure}
	\vspace{-3mm}
	\caption{Online recognition ranking over the observations for \textsc{Logistics}.}
	\label{fig:logistics_ranking}
\end{figure*}


\begin{figure*}[!ht]
	\centering
	\begin{subfigure}[b]{0.185\textwidth}
 	    \includegraphics[width=\textwidth]{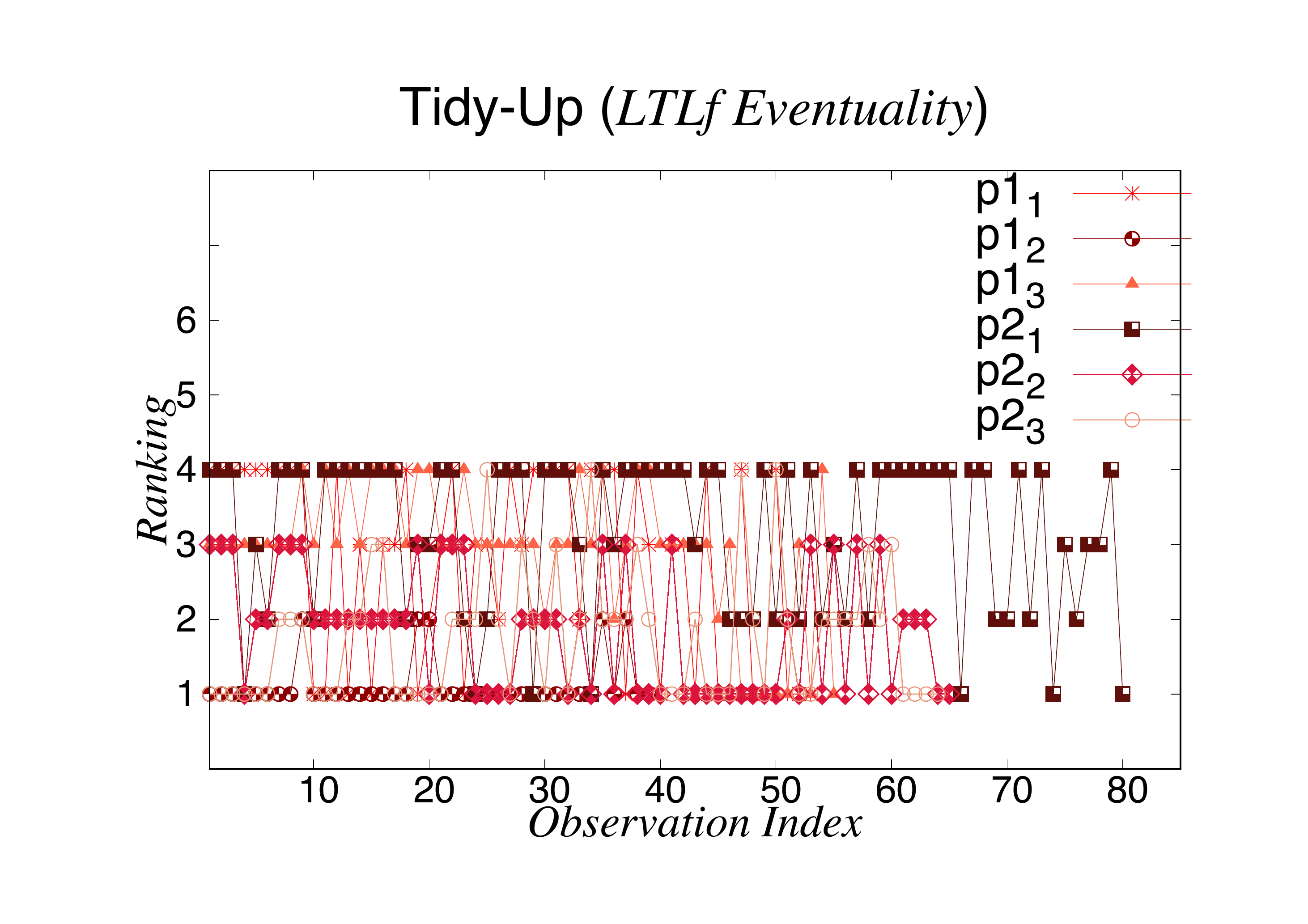}
		\caption{\LTLf Eventuality.}
		\label{fig:tidyup-ltl0}
	\end{subfigure}
	~
	\begin{subfigure}[b]{0.185\textwidth}
 	    \includegraphics[width=\textwidth]{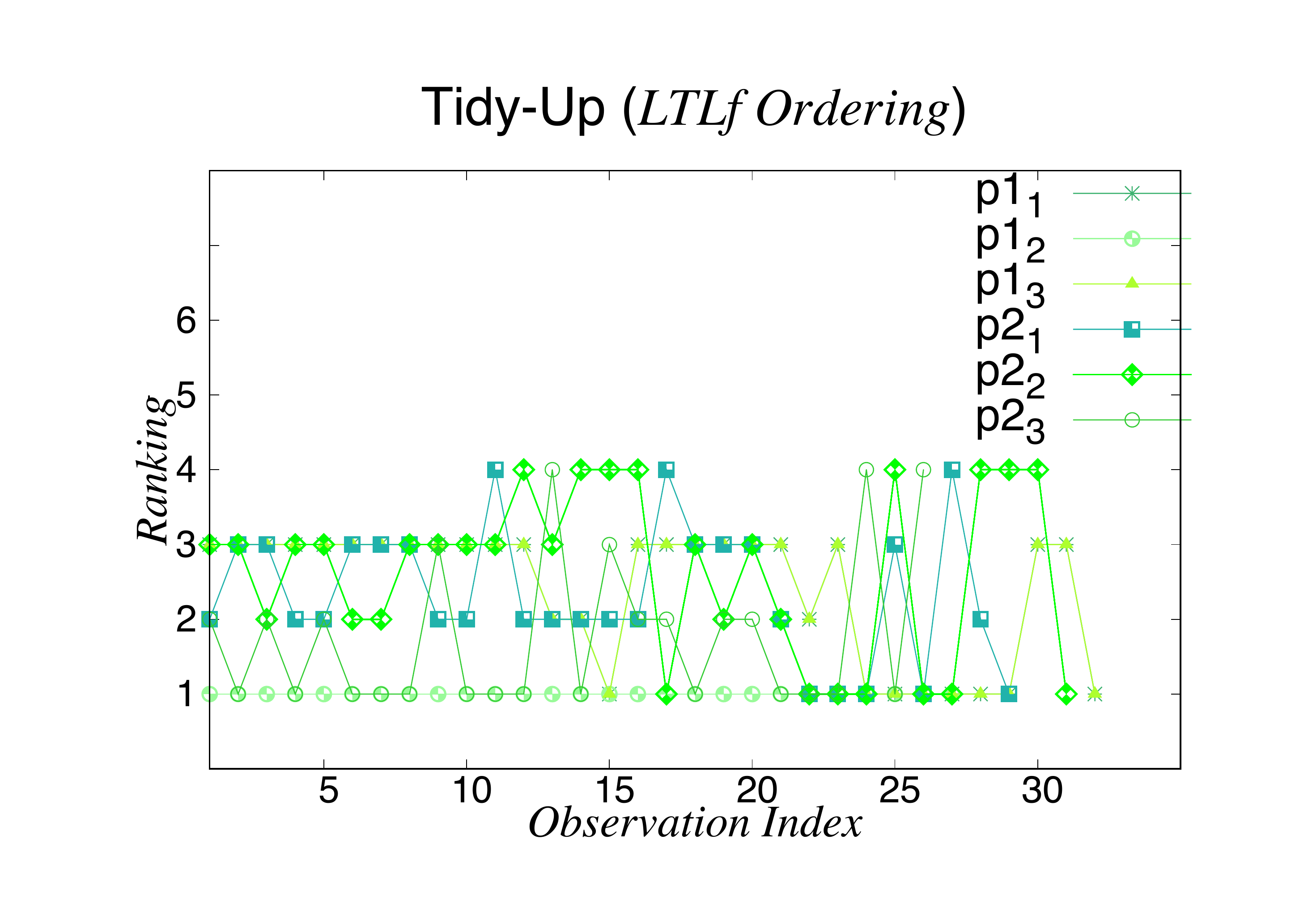}
		\caption{\LTLf Ordering.}
		\label{fig:tidyup-ltl1}
	\end{subfigure}
	~
	\begin{subfigure}[b]{0.185\textwidth}
 	    \includegraphics[width=\textwidth]{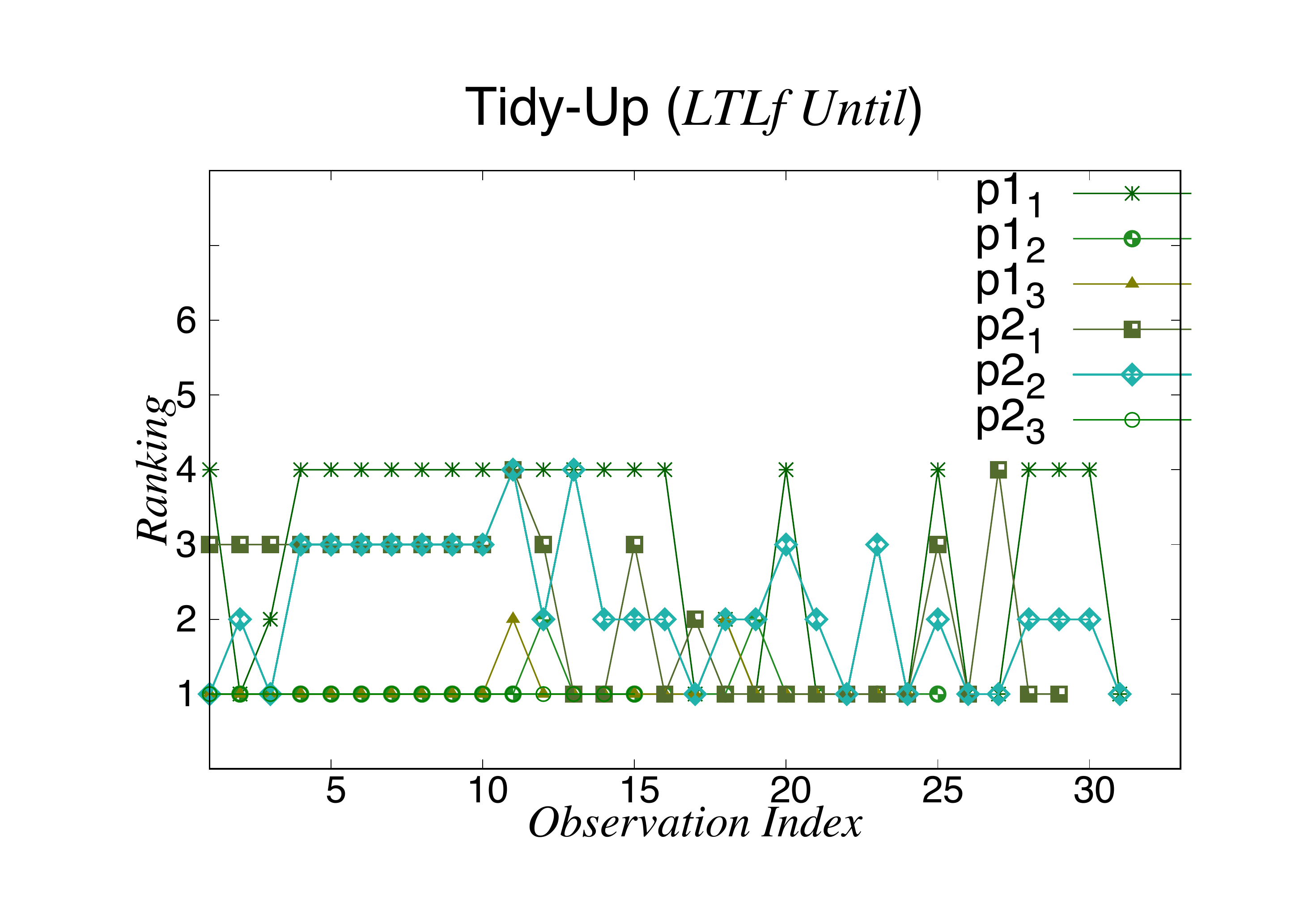}
		\caption{\LTLf Until.}
		\label{fig:tidyup-ltl2}
	\end{subfigure}
	~
	\begin{subfigure}[b]{0.185\textwidth}
 	    \includegraphics[width=\textwidth]{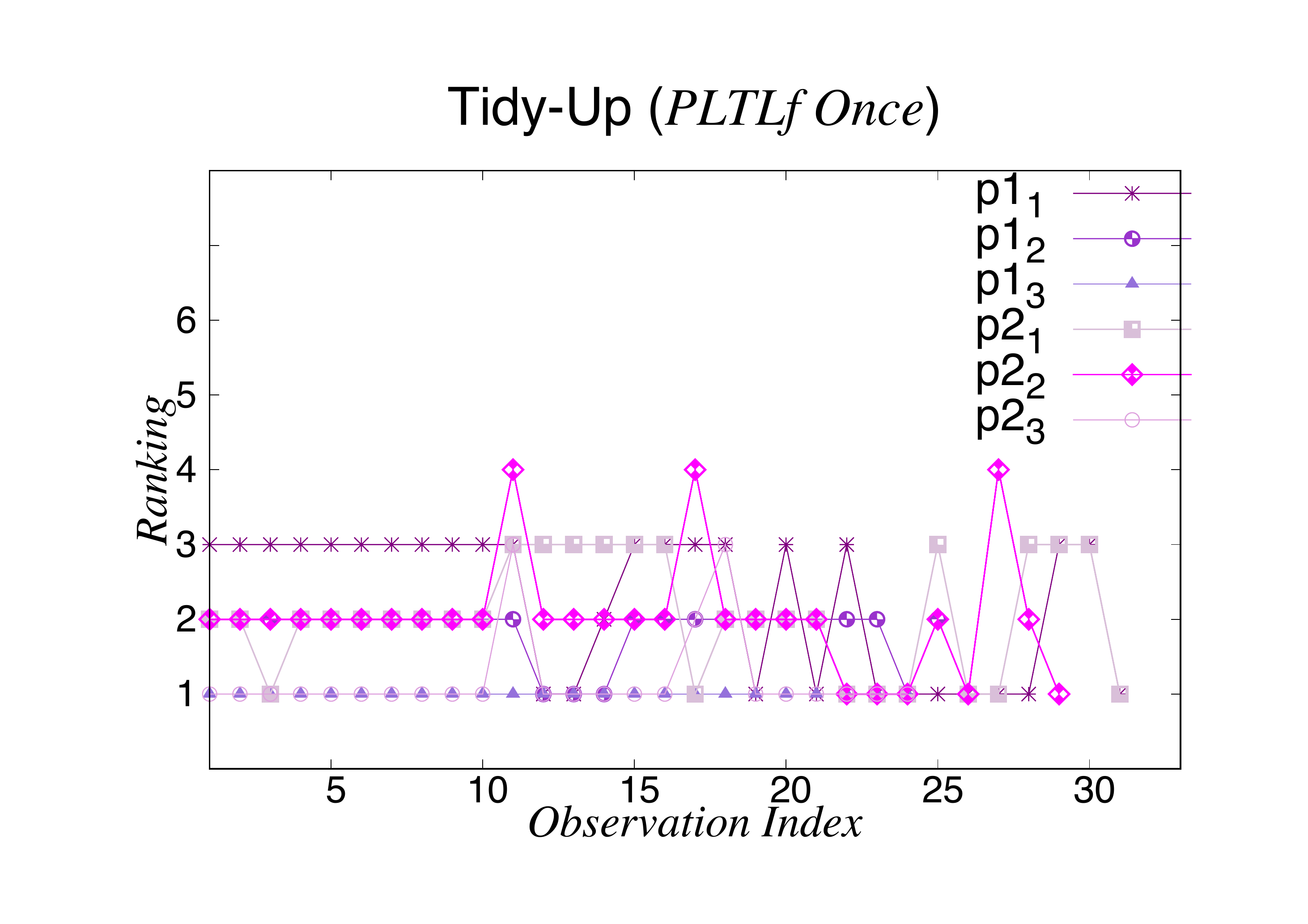}
		\caption{\PLTLf Once.}
		\label{fig:tidyup-pltl0}
	\end{subfigure}
	~
	\begin{subfigure}[b]{0.185\textwidth}
 	    \includegraphics[width=\textwidth]{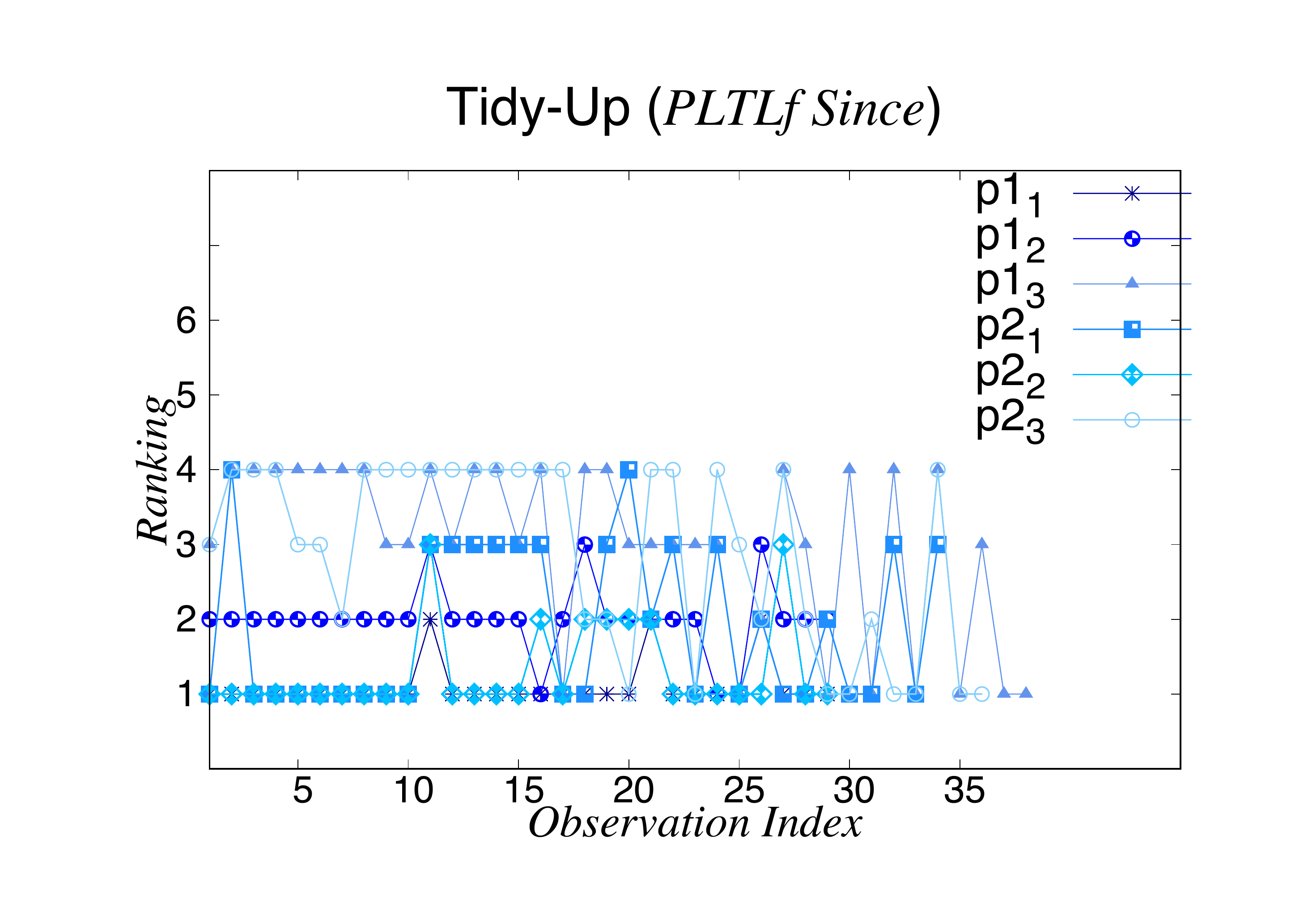}
		\caption{\PLTLf Since.}
		\label{fig:tidyup-pltl1}
	\end{subfigure}
	\vspace{-3mm}
	\caption{Online recognition ranking over the observations for \textsc{Tidy-Up}.}
	\label{fig:tidyup_ranking}
\end{figure*}


\begin{figure*}[!ht]
	\centering
	\begin{subfigure}[b]{0.185\textwidth}
 	    \includegraphics[width=\textwidth]{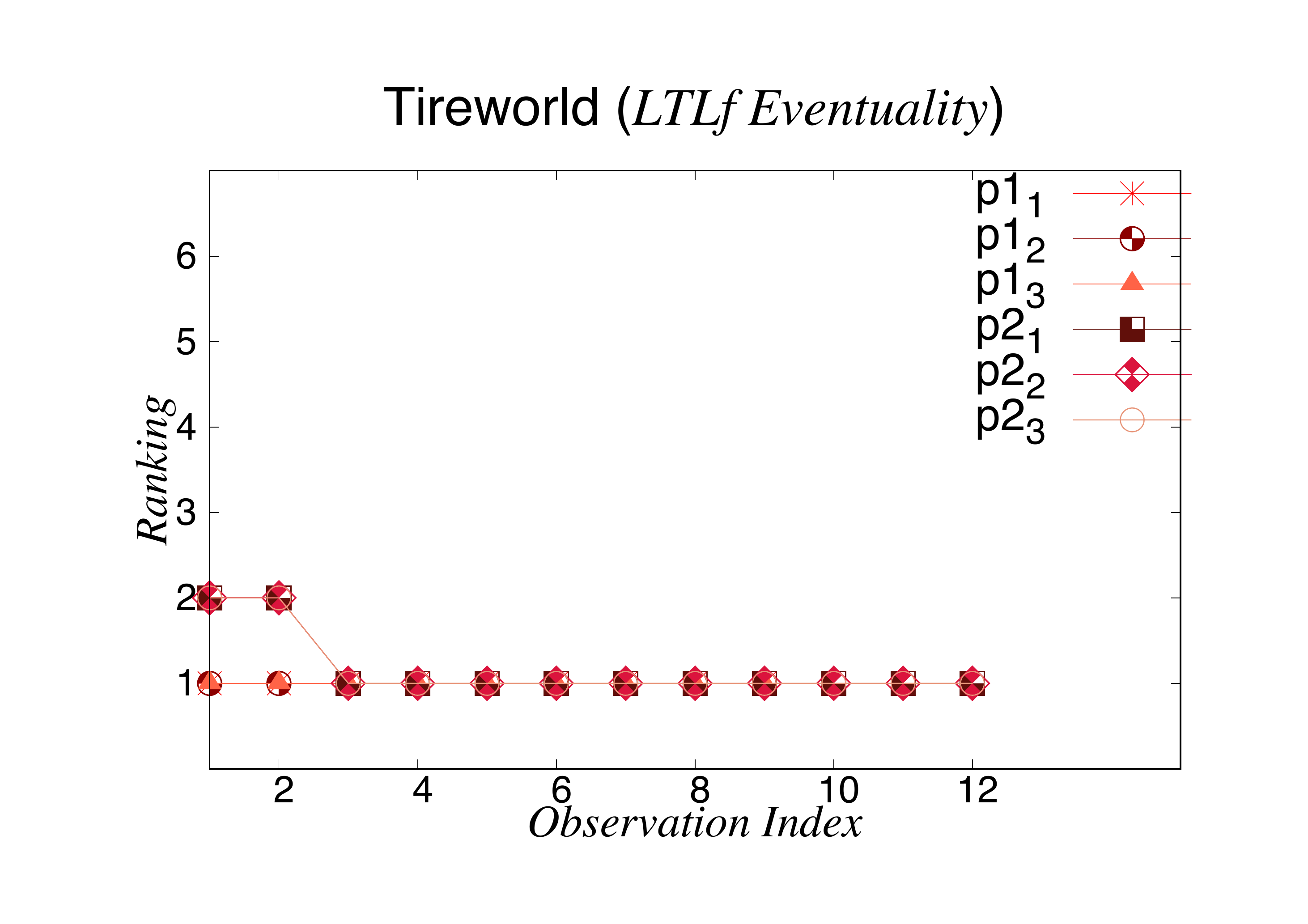}
		\caption{\LTLf Eventuality.}
		\label{fig:tireworld-ltl0}
	\end{subfigure}
	~
	\begin{subfigure}[b]{0.185\textwidth}
 	    \includegraphics[width=\textwidth]{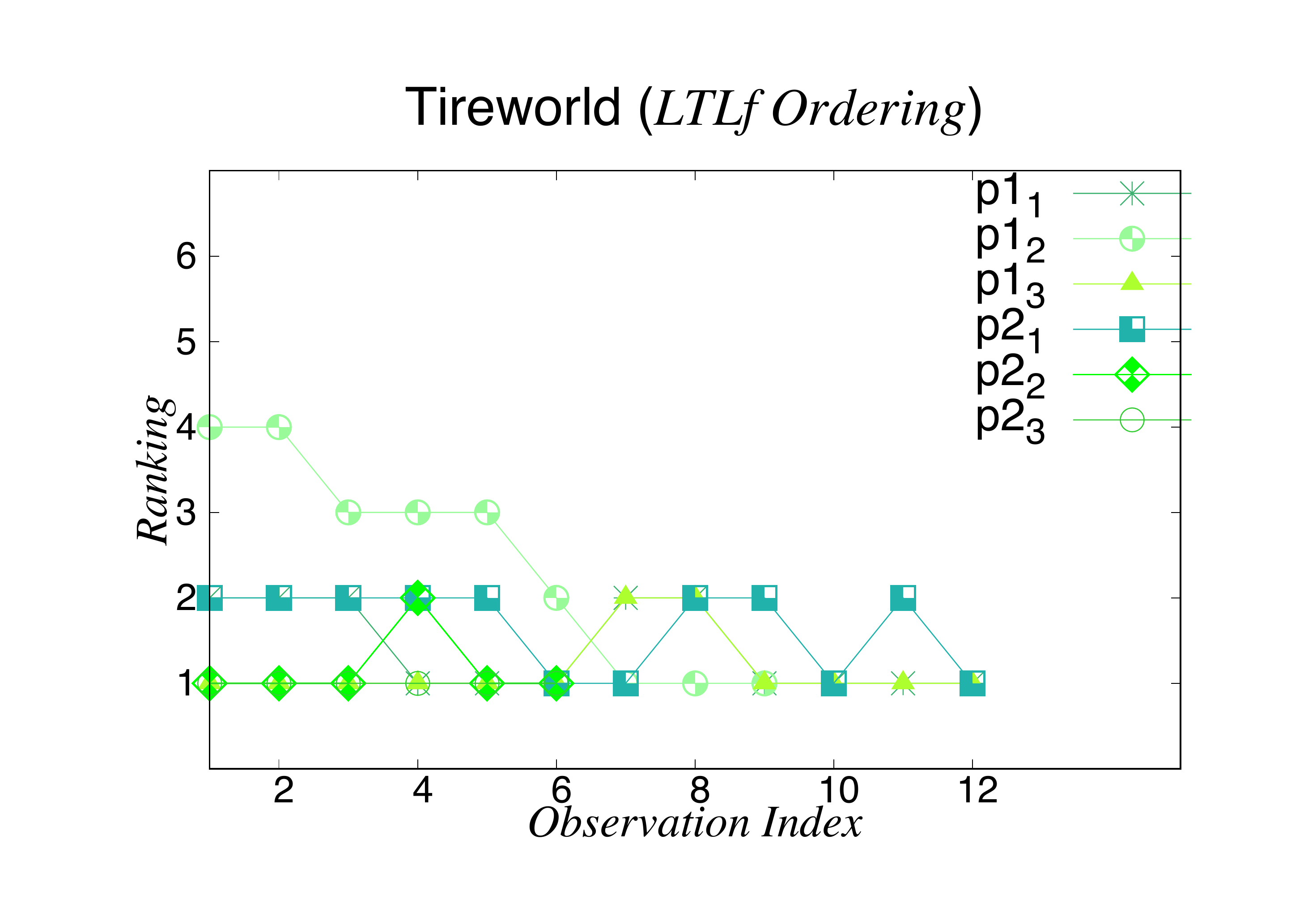}
		\caption{\LTLf Ordering.}
		\label{fig:tireworld-ltl1}
	\end{subfigure}
	~
	\begin{subfigure}[b]{0.185\textwidth}
 	    \includegraphics[width=\textwidth]{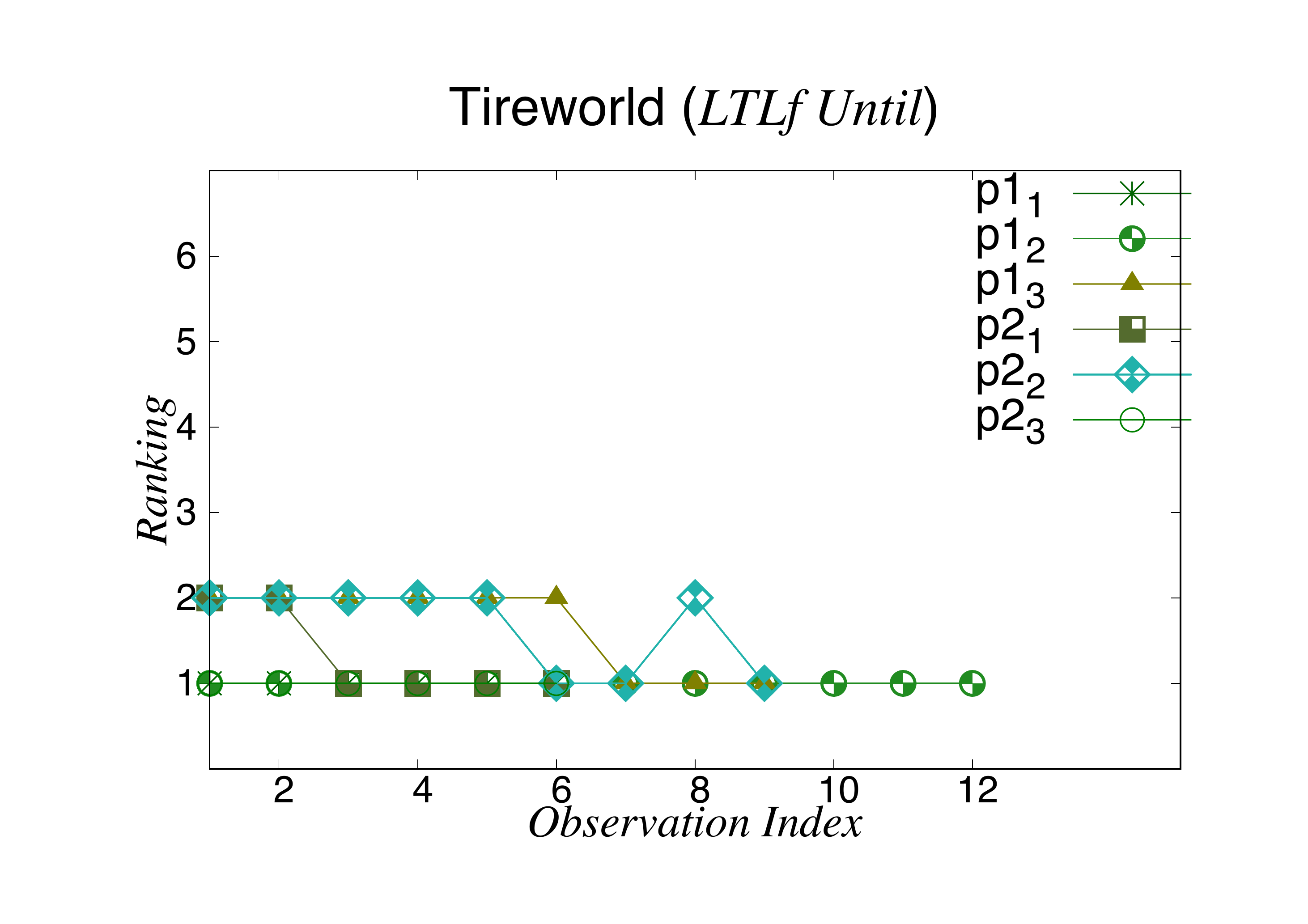}
		\caption{\LTLf Until.}
		\label{fig:tireworld-ltl2}
	\end{subfigure}
	~
	\begin{subfigure}[b]{0.185\textwidth}
 	    \includegraphics[width=\textwidth]{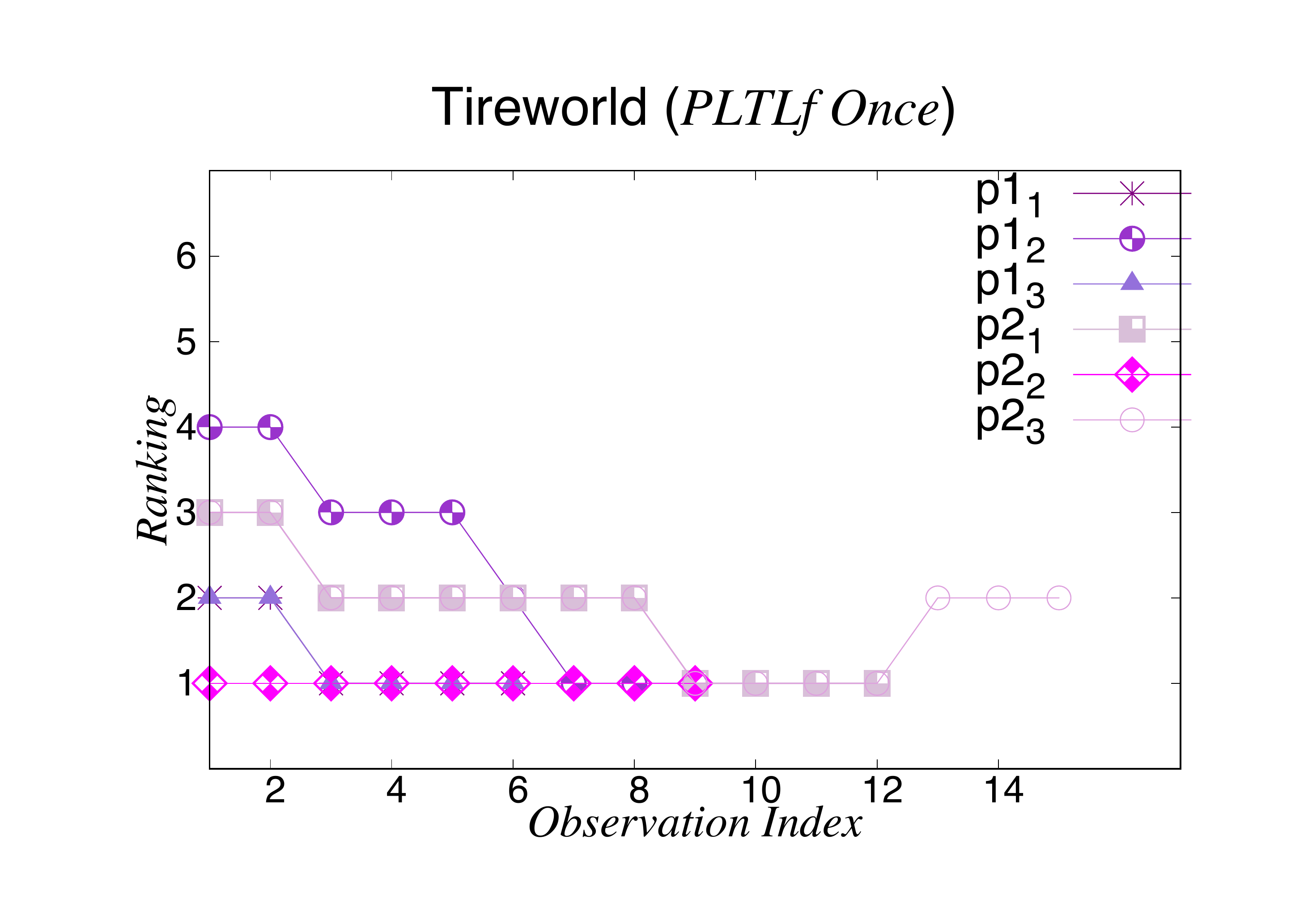}
		\caption{\PLTLf Once.}
		\label{fig:tireworld-pltl0}
	\end{subfigure}
	~
	\begin{subfigure}[b]{0.185\textwidth}
 	    \includegraphics[width=\textwidth]{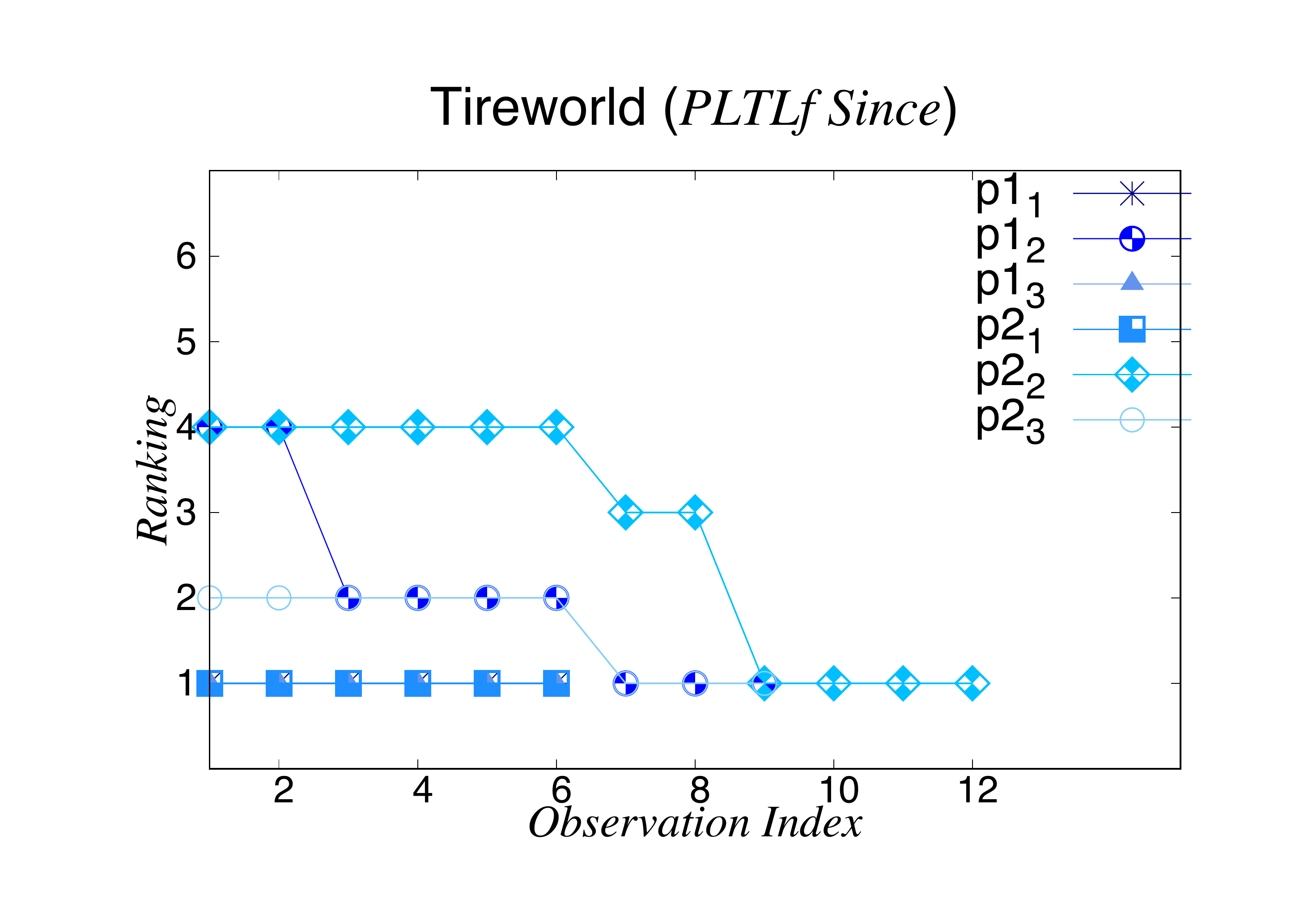}
		\caption{\PLTLf Since.}
		\label{fig:tireworld-pltl1}
	\end{subfigure}
	\vspace{-3mm}
	\caption{Online recognition ranking over the observations for \textsc{Tireworld}.}
	\label{fig:tireworld_ranking}
\end{figure*}


\begin{figure*}[!ht]
	\centering
	\begin{subfigure}[b]{0.185\textwidth}
 	    \includegraphics[width=\textwidth]{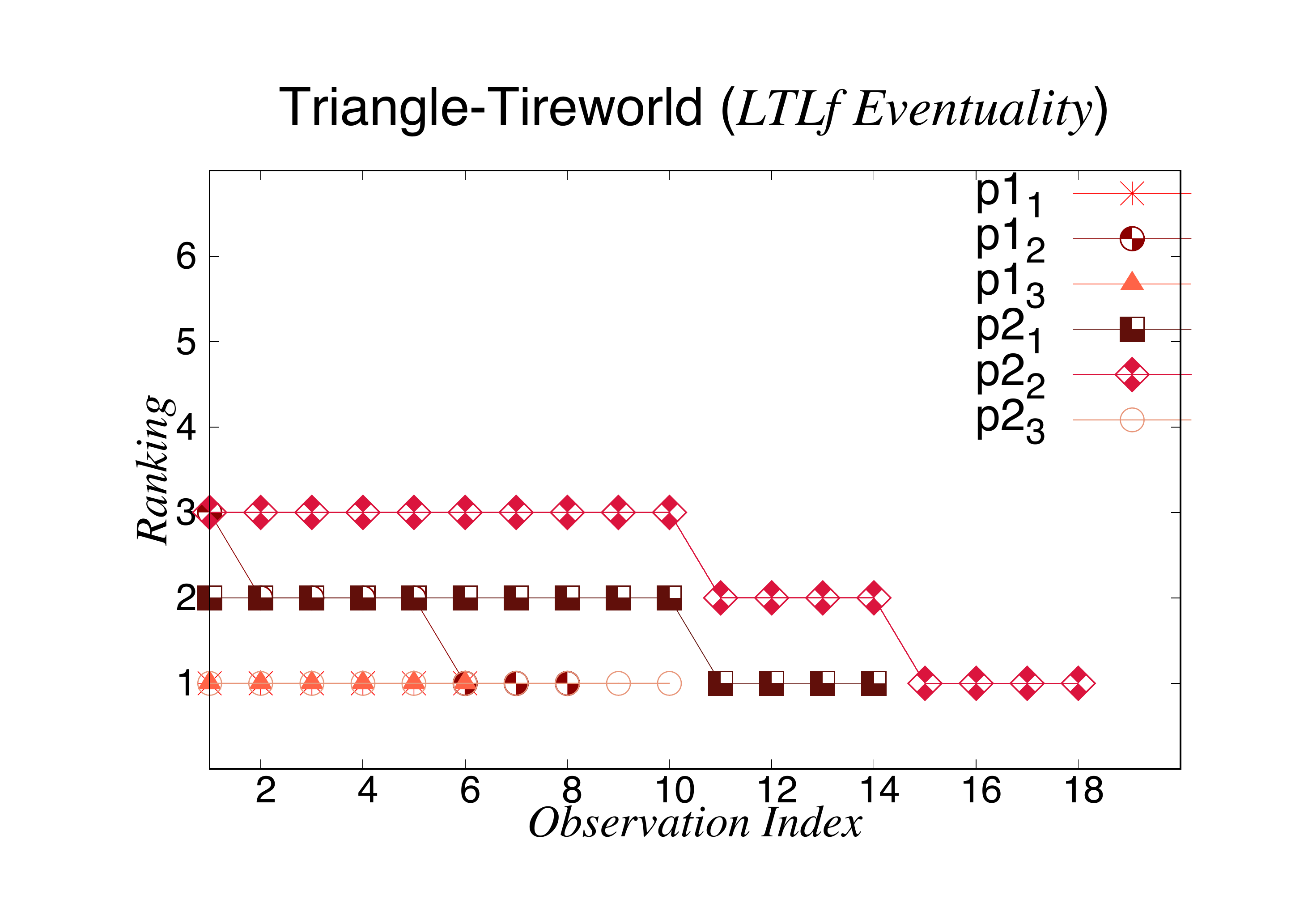}
		\caption{\LTLf Eventuality.}
		\label{fig:triangle_tireworld-ltl0}
	\end{subfigure}
	~
	\begin{subfigure}[b]{0.185\textwidth}
 	    \includegraphics[width=\textwidth]{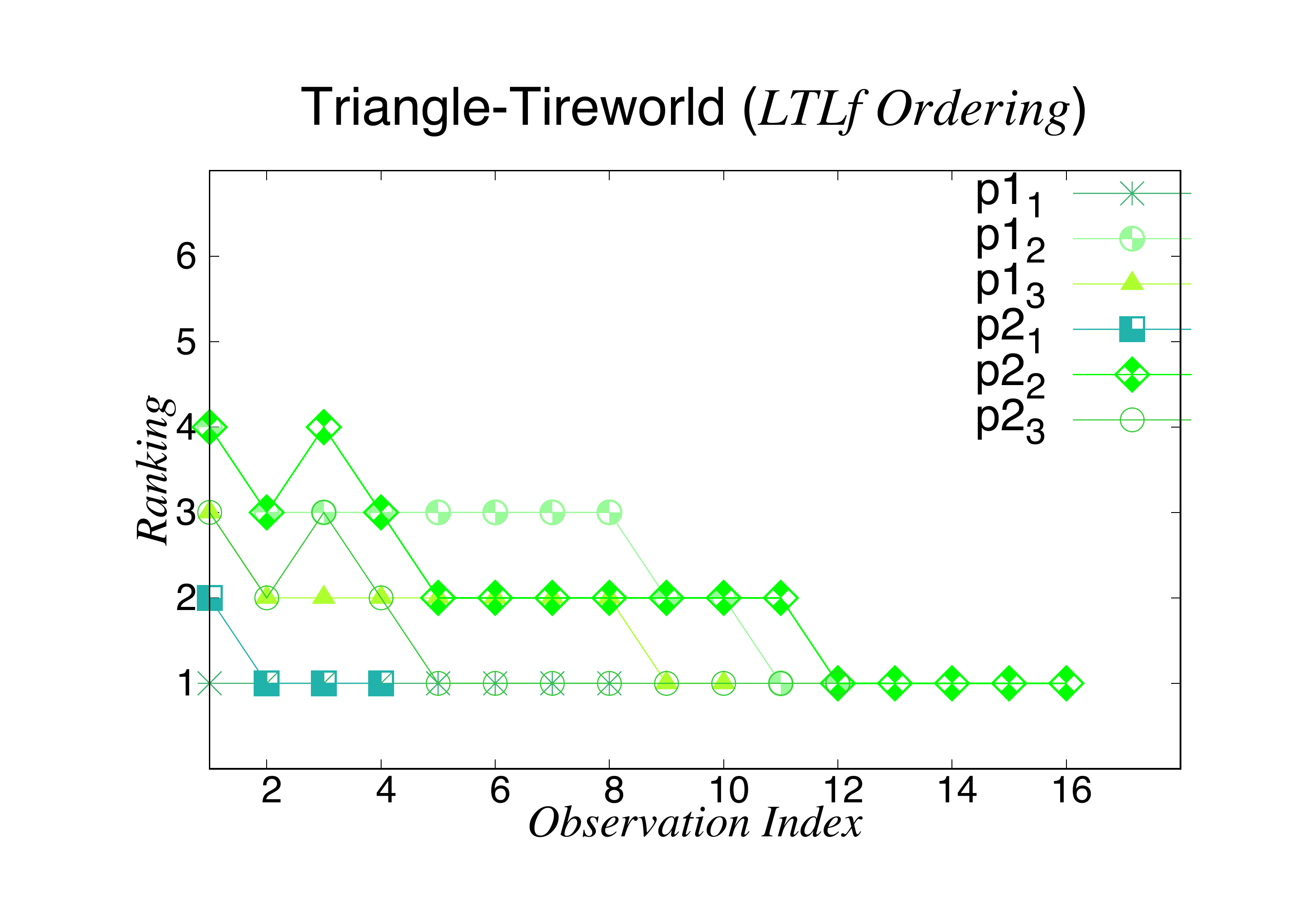}
		\caption{\LTLf Ordering.}
		\label{fig:triangle_tireworld-ltl1}
	\end{subfigure}
	~
	\begin{subfigure}[b]{0.185\textwidth}
 	    \includegraphics[width=\textwidth]{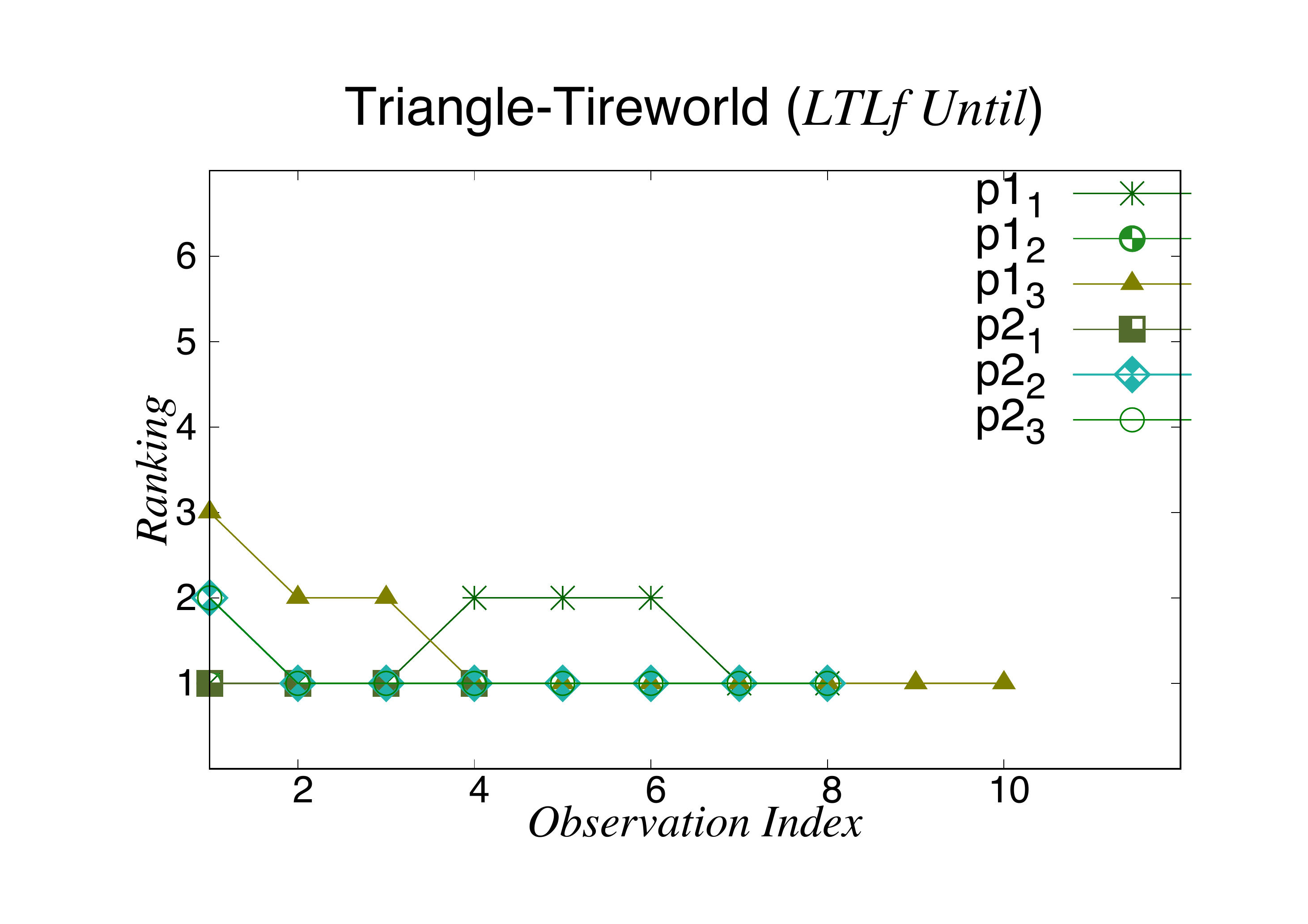}
		\caption{\LTLf Until.}
		\label{fig:triangle_tireworld-ltl2}
	\end{subfigure}
	~
	\begin{subfigure}[b]{0.185\textwidth}
 	    \includegraphics[width=\textwidth]{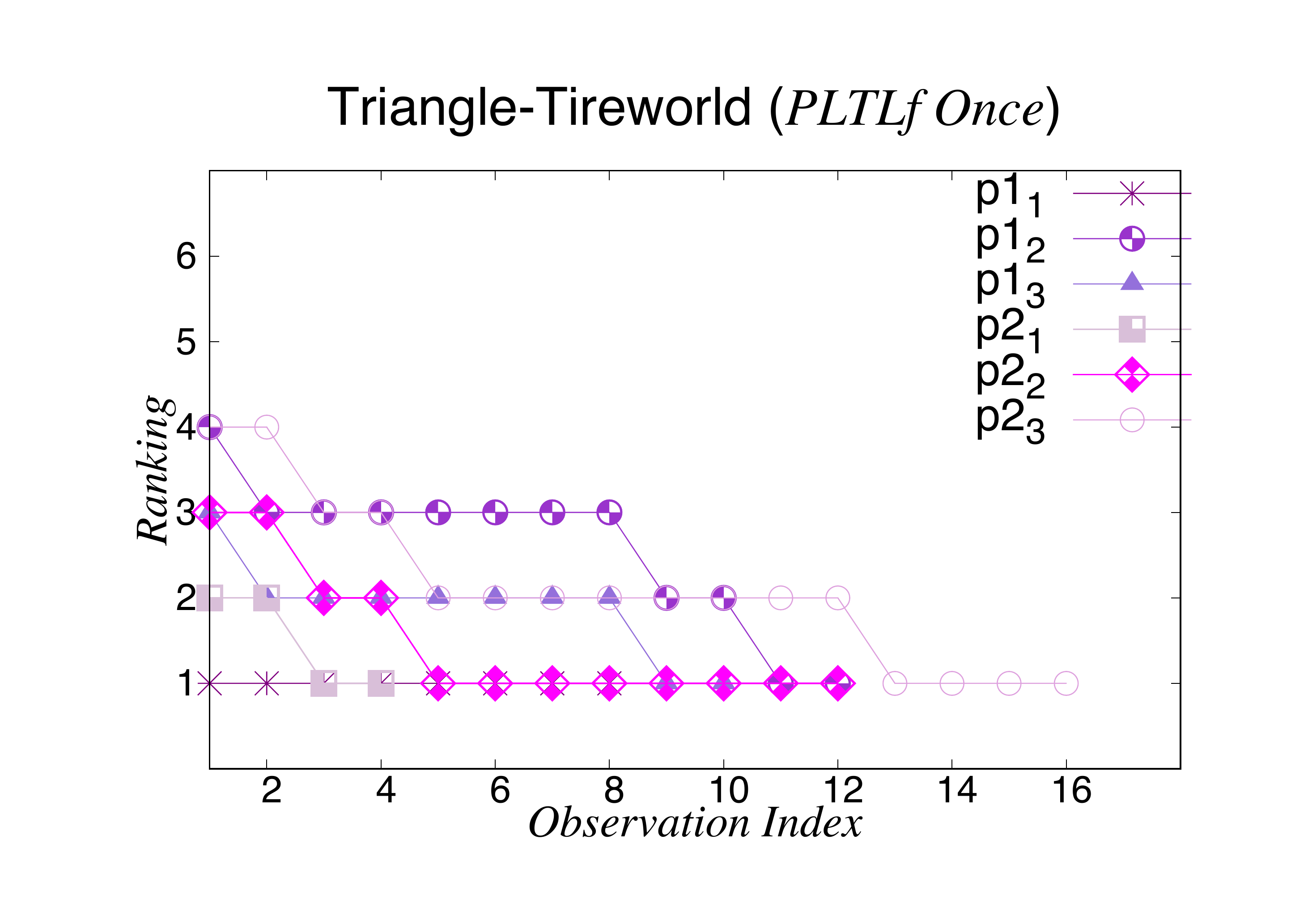}
		\caption{\PLTLf Once.}
		\label{fig:triangle_tireworld-pltl0}
	\end{subfigure}
	~
	\begin{subfigure}[b]{0.185\textwidth}
 	    \includegraphics[width=\textwidth]{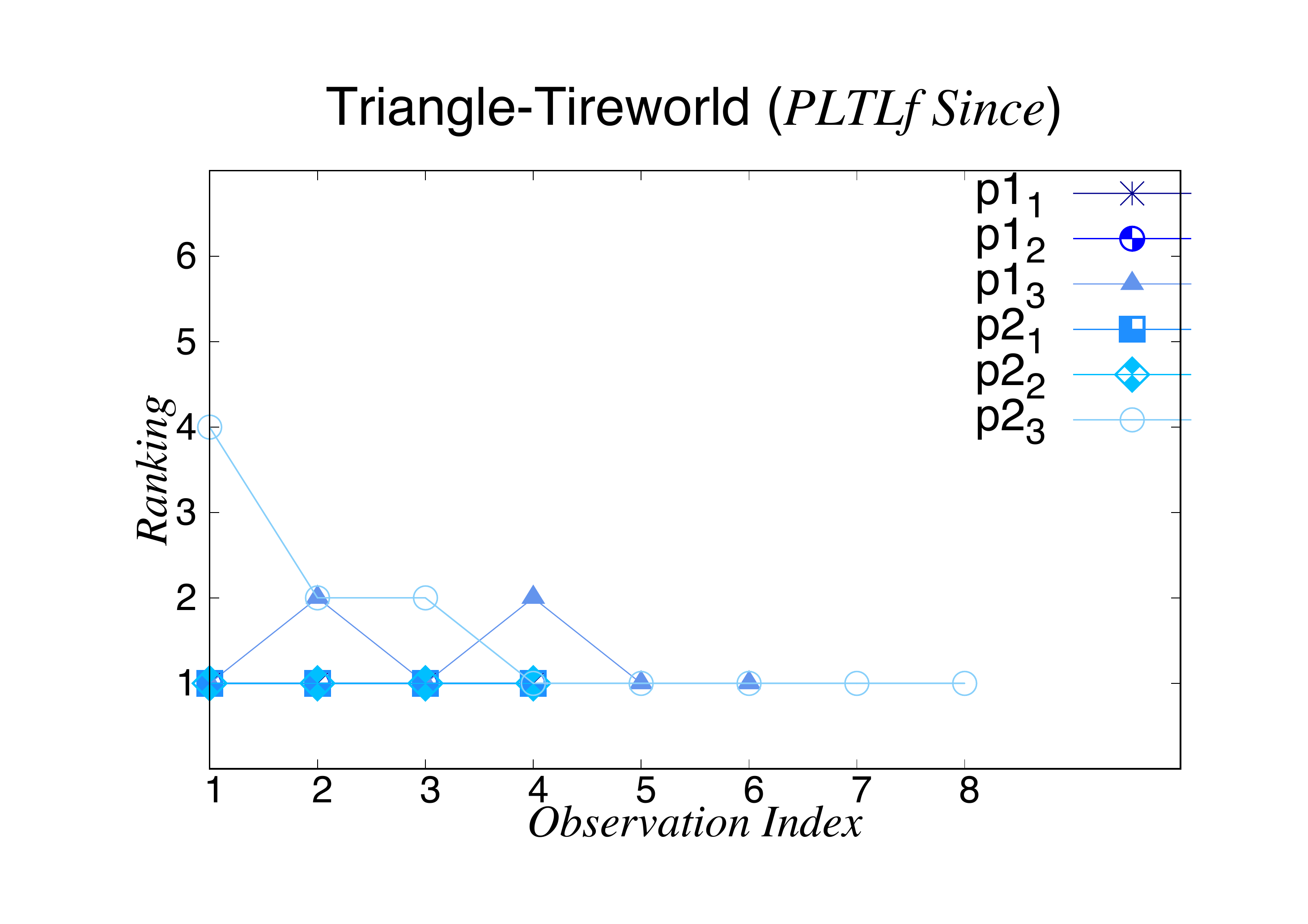}
		\caption{\PLTLf Since.}
		\label{fig:triangle_tireworld-pltl1}
	\end{subfigure}
	\vspace{-3mm}
	\caption{Online recognition ranking over the observations for \textsc{Triangle-Tireworld}.}
	\label{fig:fig:triangle_tireworld_ranking}
\end{figure*}


\begin{figure*}[!ht]
	\centering
	\begin{subfigure}[b]{0.185\textwidth}
 	    \includegraphics[width=\textwidth]{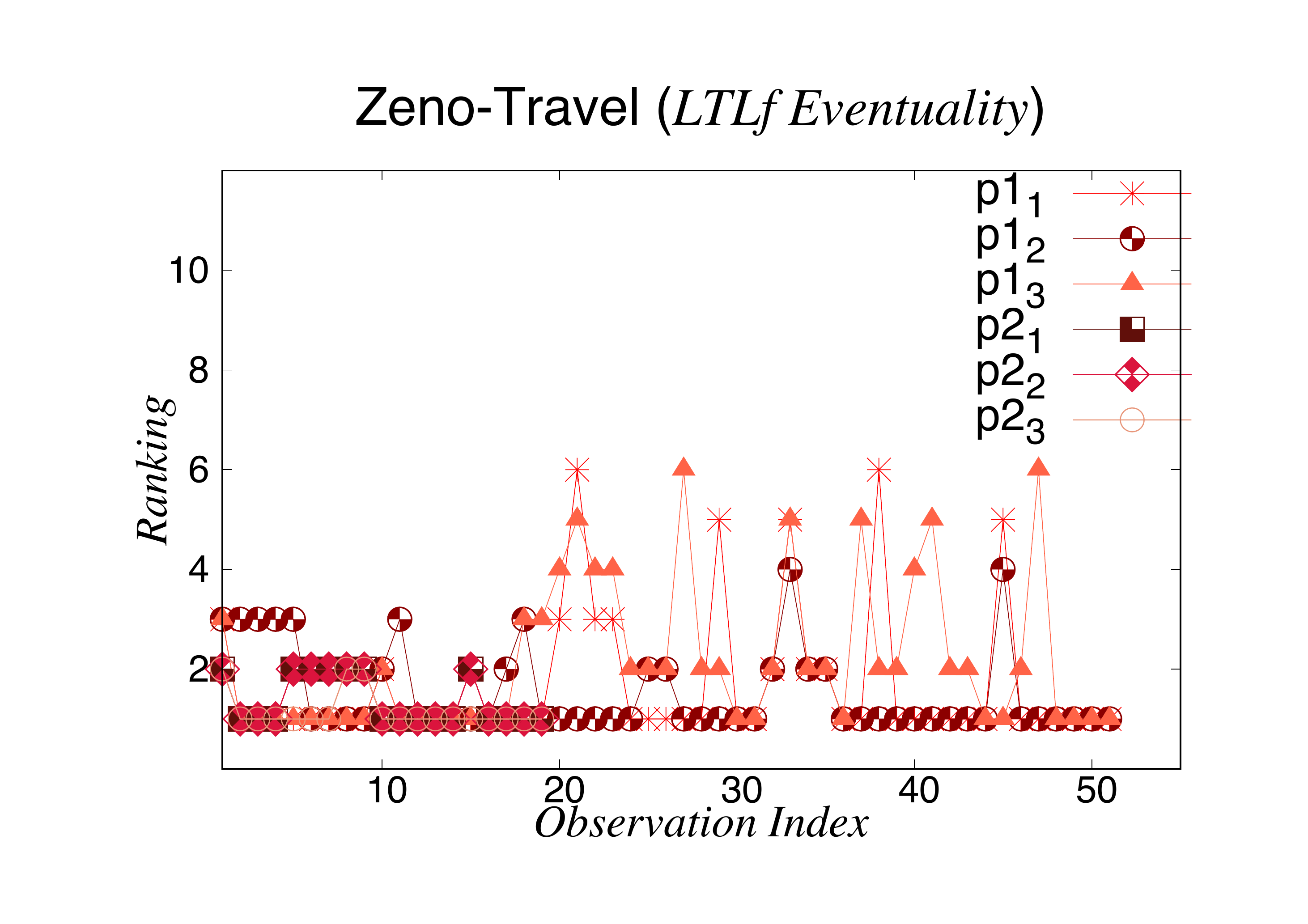}
		\caption{\LTLf Eventuality.}
		\label{fig:zenotravel-ltl0}
	\end{subfigure}
	~
	\begin{subfigure}[b]{0.185\textwidth}
 	    \includegraphics[width=\textwidth]{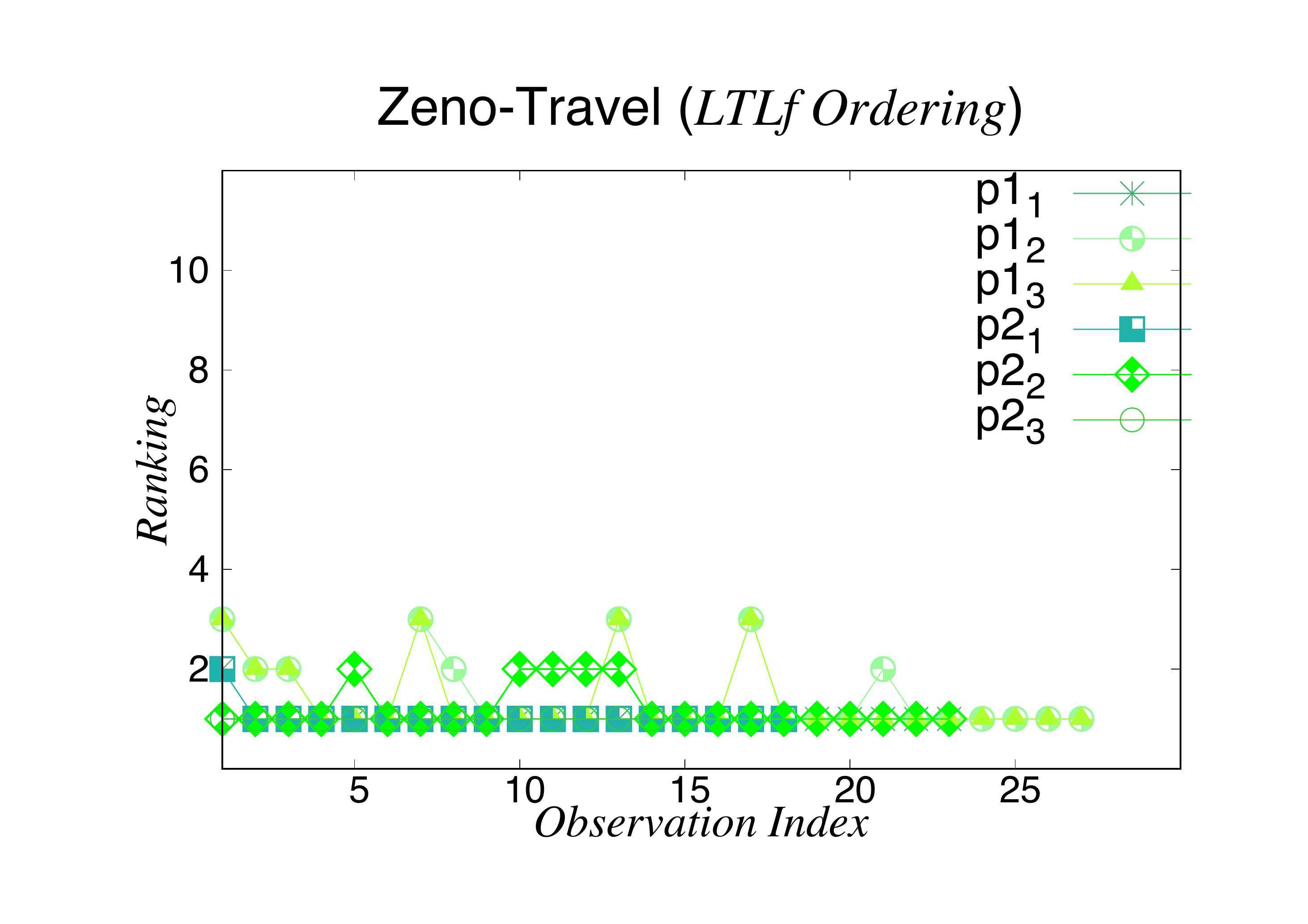}
		\caption{\LTLf Ordering.}
		\label{fig:zenotravel-ltl1}
	\end{subfigure}
	~
	\begin{subfigure}[b]{0.185\textwidth}
 	    \includegraphics[width=\textwidth]{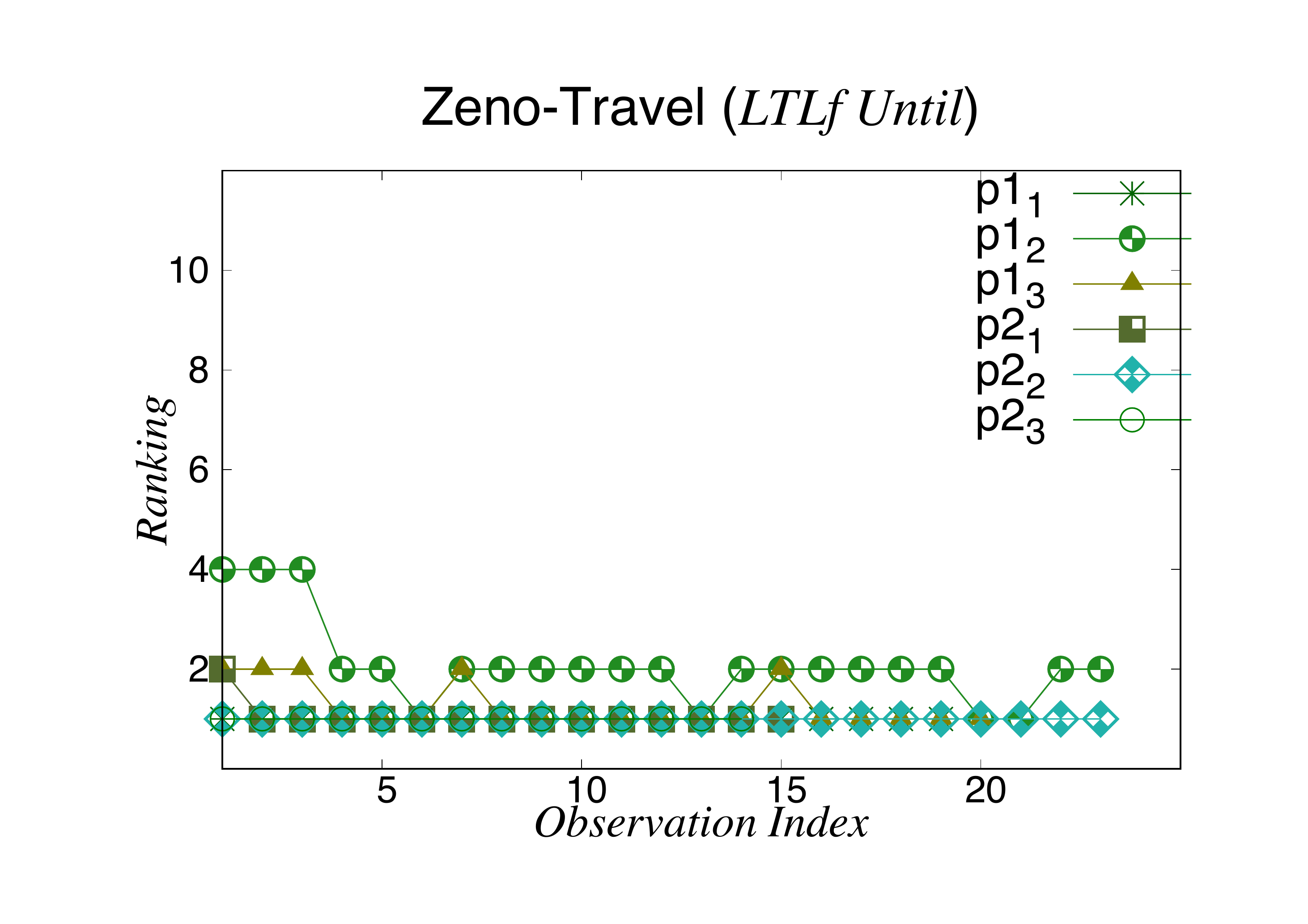}
		\caption{\LTLf Until.}
		\label{fig:zenotravel-ltl2}
	\end{subfigure}
	~
	\begin{subfigure}[b]{0.185\textwidth}
 	    \includegraphics[width=\textwidth]{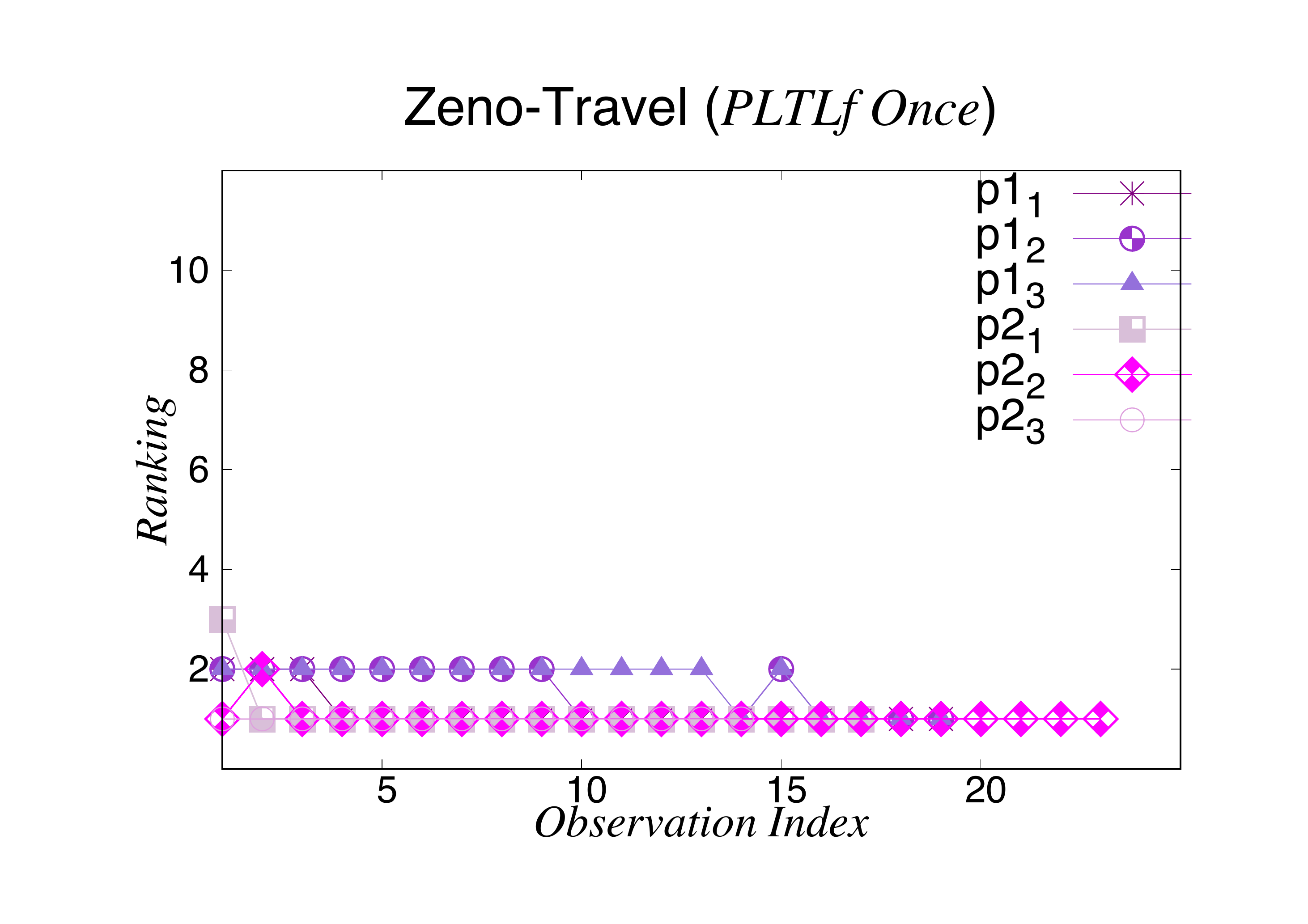}
		\caption{\PLTLf Once.}
		\label{fig:zenotravel-pltl0}
	\end{subfigure}
	~
	\begin{subfigure}[b]{0.185\textwidth}
 	    \includegraphics[width=\textwidth]{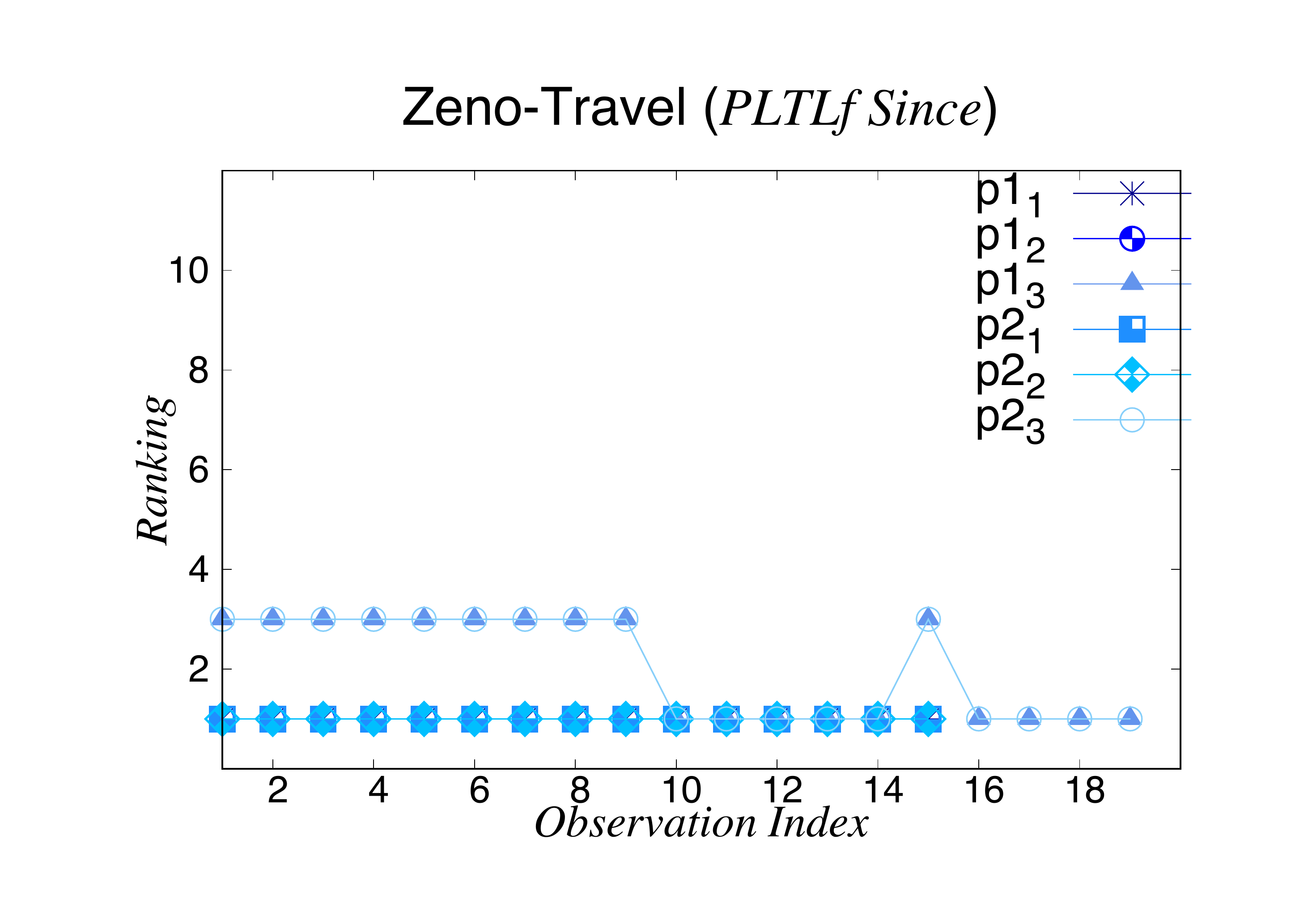}
		\caption{\PLTLf Since.}
		\label{fig:zenotravel-pltl1}
	\end{subfigure}
	\vspace{-3mm}
	\caption{Online Recognition ranking over the observations for \textsc{Zeno-Travel}.}
	\label{fig:zenotravel_ranking}
\end{figure*}



\bibliographystyle{spbasic}       

\bibliography{bibliography}

\end{document}